\algrenewcommand\algorithmicrequire{\textbf{Input:}}
\algrenewcommand\algorithmicensure{\textbf{Output:}}
\newcolumntype{C}[1]{>{\centering\arraybackslash}m{#1}}
\newcolumntype{L}[1]{>{\raggedright\arraybackslash}p{#1}}
\providecommand{\arxivnotice}{}
    \renewcommand\footnotetextcopyrightpermission[1]{}
    \renewcommand{\arxivnotice}{%
        \noindent\textbf{Full version.} This document contains additional results and appendices omitted from the KDD '26 version due to space limits.
        The Version of Record will appear in the ACM Digital Library.
        DOI: \url{https://doi.org/10.1145/3770854.3780167}\par\medskip
    }
\DeclareRobustCommand{\refappendix}[1]{%
  ~\ref{#1}%
  \ifthenelse{\boolean{fullpaper}}{}{ of our full paper\,\citep{park2024fair}}%
  \unskip
}
\begin{document}

\newcommand{\method}{FSW}

\title{Fair Class-Incremental Learning using Sample Weighting}

\author{Jaeyoung Park}
\authornote{Equal contribution.}
\affiliation{%
  \institution{Korea Advanced Institute of Science and Technology}
  \city{Daejeon}
  \country{Republic of Korea}
}
\email{jypark@kaist.ac.kr}

\author{Minsu Kim}
\authornotemark[1]
\affiliation{%
  \institution{Korea Advanced Institute of Science and Technology}
  \city{Daejeon}
  \country{Republic of Korea}
}
\email{ms716@kaist.ac.kr}

\author{Steven Euijong Whang}
\authornote{Corresponding author.}
\affiliation{%
  \institution{Korea Advanced Institute of Science and Technology}
  \city{Daejeon}
  \country{Republic of Korea}
}
\email{swhang@kaist.ac.kr}


\begin{abstract}
Model fairness is becoming important in class-incremental learning for Trustworthy AI. While accuracy has been a central focus in class-incremental learning, fairness has been relatively understudied. However, na\"ively using all the samples of the current task for training results in {\it unfair catastrophic forgetting} for certain sensitive groups including classes. We theoretically analyze that forgetting occurs if the average gradient vector of the current task data is in an ``opposite direction'' compared to the average gradient vector of a sensitive group, which means their inner products are negative. We then propose a {\it fair class-incremental learning} framework that adjusts the training weights of current task samples to change the direction of the average gradient vector and thus reduce the forgetting of underperforming groups and achieve fairness. For various group fairness measures, we formulate optimization problems to minimize the overall losses of sensitive groups while minimizing the disparities among them. We also show the problems can be solved with linear programming and propose an efficient Fairness-aware Sample Weighting (\method{}) algorithm. Experiments show that \method{} achieves better accuracy-fairness tradeoff results than state-of-the-art approaches on real datasets. The source code is released at https://github.com/jyparkkr/FSW.
\end{abstract}

\begin{CCSXML}
<ccs2012>
<concept>
<concept_id>10010147.10010257</concept_id>
<concept_desc>Computing methodologies~Machine learning</concept_desc>
<concept_significance>500</concept_significance>
</concept>
</ccs2012>
\end{CCSXML}

\ccsdesc[500]{Computing methodologies~Machine learning}

\keywords{trustworthy ai, model fairness, continual learning, class-incremental learning}

\maketitle
\arxivnotice

\section{Introduction}
\label{sec:intro}
Trustworthy AI is becoming critical in various continual learning applications including autonomous vehicles, personalized recommendations, healthcare monitoring, and more\,\citep{DBLP:journals/corr/abs-2107-06641, DBLP:journals/csur/KaurURD23}. 
Improving both model fairness and accuracy is crucial, as unfair predictions can significantly compromise trust and safety, particularly in human-centered automated systems, especially as observed frequently in the context of continual learning. 
In this paper, we focus on class-incremental learning, where the objective is to incrementally learn new classes as they appear. 

The main challenge of class-incremental learning is to learn new classes of data, while not forgetting previously-learned classes\,\citep{DBLP:journals/nn/BelouadahPK21, DBLP:journals/pami/LangeAMPJLST22}. If we simply fine-tune the model on the new classes, the model will gradually forget about the previously-learned classes. This phenomenon called catastrophic forgetting\,\citep{MCCLOSKEY1989109, DBLP:journals/corr/KirkpatrickPRVD16, DBLP:journals/nn/ParisiKPKW19} may easily occur in real-world scenarios where the model needs to continuously learn new classes. 
Therefore, we need to have a balance between learning new information and retaining previously-learned knowledge, which is called the stability-plasticity dilemma\,\citep{ABRAHAM200573, 10.3389/fpsyg.2013.00504, DBLP:conf/cvpr/KimH23}.

In this paper, we solve the problem of {\it fair class-incremental learning} where the goal is to satisfy various notions of fairness among sensitive groups including classes in addition to classifying accurately. 
In continual learning, unfair forgetting may occur if the current task data has similar characteristics to previous data, but belongs to different sensitive groups including classes, which negatively affects the performance on the previous data during training. Despite the importance of the problem, the existing research\,\citep{DBLP:conf/aaai/ChowdhuryC23, DBLP:conf/nips/TruongNRL23} is still nascent and has limitations in terms of technique or scope (see Sec.~\ref{sec:relatedwork}). In comparison, we support fairness more generally in class-incremental learning by satisfying various notions of group fairness for sensitive groups including classes.

\begin{figure*}[t]
    \begin{subfigure}[t]{0.25\textwidth}
        \includegraphics[width=\linewidth]{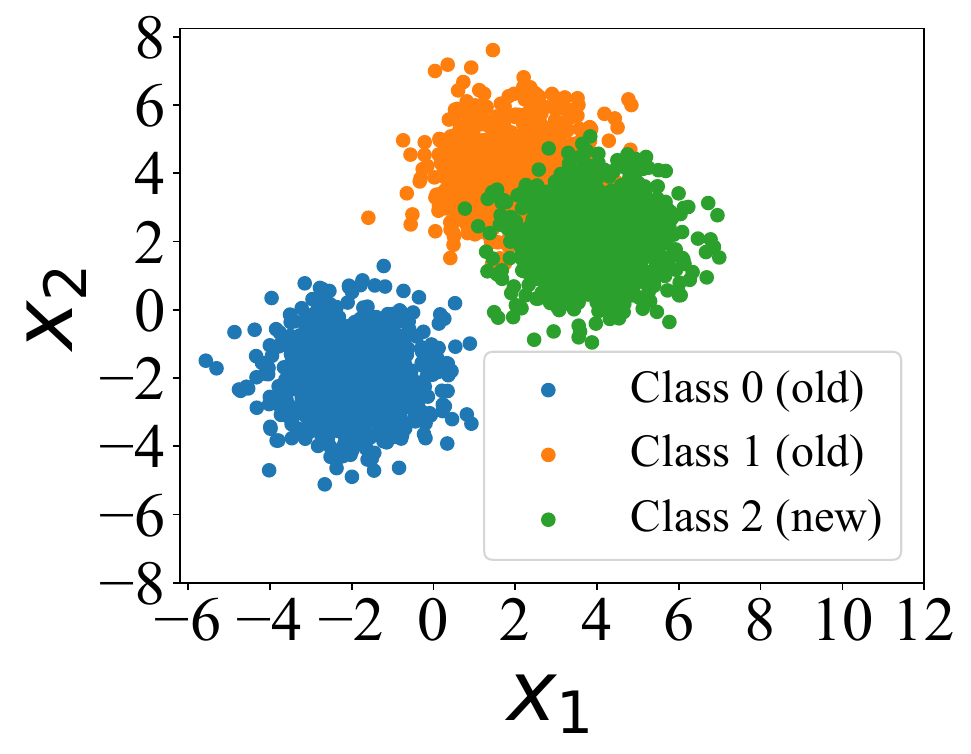}
        \caption{}
        \label{fig:synthetic_dataset}
    \end{subfigure}
    \begin{subfigure}[t]{0.25\textwidth}
        \includegraphics[width=\linewidth]{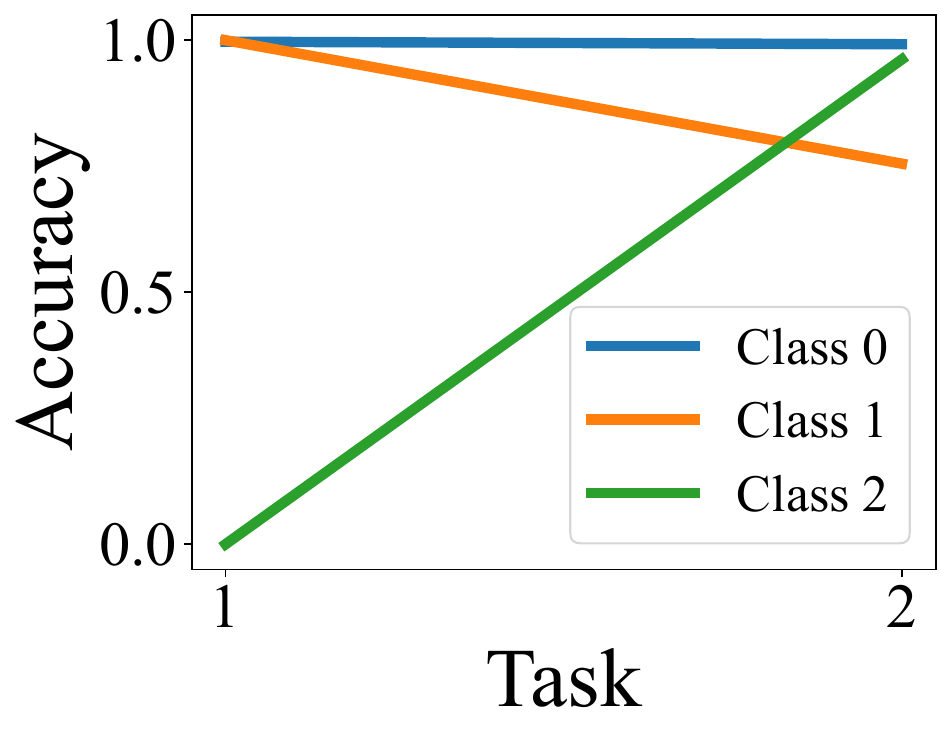}
        \caption{}
        \label{fig:synthetic_forgetting}
    \end{subfigure}
    \hspace{0.01\textwidth}
    \begin{subfigure}[t]{0.205\textwidth}
        \includegraphics[width=\linewidth]{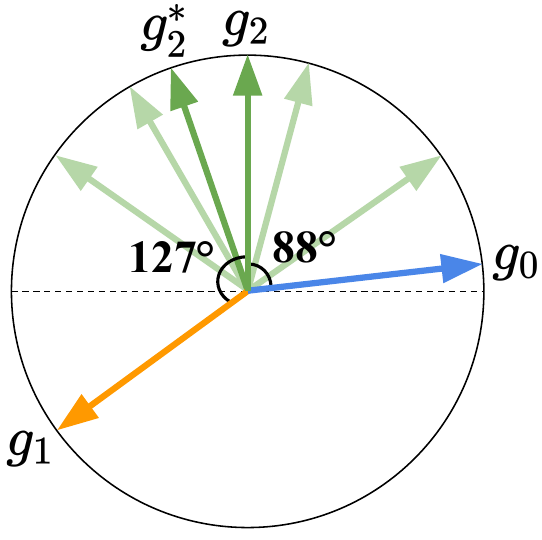}
        \caption{}
        \label{fig:gradient_vector}
    \end{subfigure}
    \begin{subfigure}[t]{0.25\textwidth}
        \includegraphics[width=\linewidth]{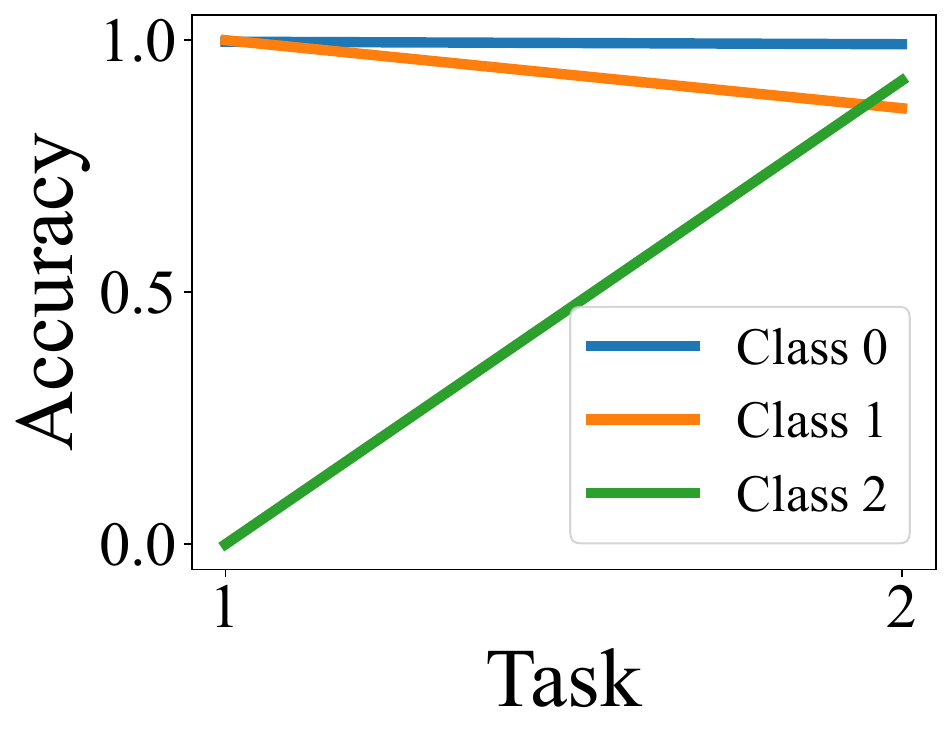}
        \caption{}
        \label{fig:synthetic_mitigation}
    \end{subfigure}
    \caption{(a) A synthetic dataset for class-incremental learning. (b) Training on Class 2 results in unfair forgetting on Class 1 only. (c) The average gradient vector of Class 2, $g_2$, is more than $90^{\circ}$ apart from Class 1's $g_1$, which means the model is being trained in an opposite direction. Our method adjusts $g_2$ to $g_2^*$ through sample weighting to be closer to $g_1$, but not too far from the original $g_2$. (d) As a result, the unfair forgetting is mitigated while minimally sacrificing accuracy for Class 2.}
    \label{fig:toy_exp}
\end{figure*}

We demonstrate how unfair forgetting can occur on a synthetic dataset with two attributes $(x_1, x_2)$, and one true label $y$ as shown in Fig.~\ref{fig:synthetic_dataset}. We sample data for each class from three normal distributions: $(x_1, x_2)|y=0 \sim \mathcal{N}([-2;-2], [1;1])$, $(x_1, x_2)|y=1 \sim \mathcal{N}([2;4], [1;1])$, and $(x_1, x_2)|y=2 \sim \mathcal{N}([4;2], [1;1])$. 
To simulate class-incremental learning, we introduce data for Class 0 (blue) and Class 1 (orange) in Task 1, followed by Class 2 (green) data in Task 2, where Class 2's data is similar to Class 1's data. We observe that this setting frequently occurs in real datasets, where different classes of data exhibit similar features or characteristics, as shown in Sec.~\refappendix{subsec:tsne}. We assume a data replay setting in which only a limited amount of data from Classes 0 and 1 is stored and jointly used during training on Class 2. After completing training on Task 1, we observe how the model’s accuracy on the three classes changes during Task 2 training in Fig.~\ref{fig:synthetic_forgetting}. As the accuracy on Class 2 improves, there is a catastrophic forgetting of Class 1 only, which leads to unfairness. 

Beyond the label, unfairness can also arise when forgetting occurs among groups defined by both label $y$ and a sensitive attribute $z$ such as age, gender, or race. For instance, we can consider Class 0 as $(y=0, z=0)$, Class 1 as $(y=0, z=1)$, and Class 2 as $(y=1, z=1)$. Introducing Class 2 in Task 2 can lead to an accuracy drop on Class 1 $(z=1)$, while Class 0 $(z=0)$ remains unaffected. Likewise, unfairness can arise when groups are defined by both class labels and sensitive attributes.

To analytically understand the unfair forgetting, we project the average gradient vector for each class data on a 2-dimensional space in Fig.~\ref{fig:gradient_vector}. Here $g_0$, $g_1$, and $g_2$ represent the average gradient vectors of the samples of Classes 0, 1, and 2, respectively. We observe that $g_2$ is $127^{\circ}$ apart from $g_1$, but $88^{\circ}$ from $g_0$, which means that the inner products $\langle g_2, g_1 \rangle$ and $\langle g_2, g_0 \rangle$ are negative and close to 0, respectively. In Sec.~\ref{sec:unfairforgetting}, we theoretically show that a negative inner product between average gradient vectors of current and previous data results in higher loss for the previous data as the model is being updated in an opposite direction and identify a sufficient condition for unfair forgetting. As a result, Class 1's accuracy decreases, while Class 0's accuracy remains stable.

Our solution to mitigate unfair forgetting is to adjust the average gradient vector of the current task data by weighting its samples. The light-green vectors in Fig.~\ref{fig:gradient_vector} are the gradient vectors of individual samples from Class 2, and by weighting them we can adjust $g_2$ to $g_2^*$ to make the inner product with $g_1$ less negative. At the same time, we do not want $g_2^*$ to be too different from $g_2$ and lose accuracy. In Sec.~\ref{sec:sampleweighting}, we formalize this idea using the weighted average gradient vector of the current task data. We then optimize the sample weights such that unfair forgetting and accuracy reduction over sensitive groups including classes are both minimized. We show this optimization can be solved with linear programming and propose our efficient Fairness-aware Sample Weighting (FSW) algorithm. Fig.~\ref{fig:synthetic_mitigation} shows how using \method{} mitigates the unfair forgetting between Classes 0 and 1 without harming Class 2's accuracy much. Our framework supports the group fairness measures equal error rate\,\citep{DBLP:conf/pods/Venkatasubramanian19}, equalized odds\,\citep{DBLP:conf/nips/HardtPNS16}, and demographic parity\,\citep{DBLP:conf/kdd/FeldmanFMSV15} and can be potentially extended to other measures.

In our experiments, we show that \method{} achieves better fairness and competitive accuracy compared to state-of-the-art baselines on various image, text, and tabular datasets. The benefits come from assigning different training weights to the current task samples with accuracy and fairness in mind.

\textbf{Summary of Contributions:} (1) We theoretically analyze how unfair catastrophic forgetting can occur in class-incremental learning; (2) We formulate optimization problems for mitigating the unfairness for various group fairness measures and propose an efficient fairness-aware sample weighting algorithm, \method{}; (3) We demonstrate how \method{} outperforms state-of-the-art baselines in terms of fairness with comparable accuracy on various datasets.

\section{Related Work}
\label{sec:relatedwork}

Class-incremental learning is a challenging type of continual learning where a model continuously learns new tasks, each composed of new disjoint classes, and the goal is to minimize catastrophic forgetting\,\citep{DBLP:journals/ijon/MaiLJQKS22, DBLP:journals/pami/MasanaLTMBW23}. Data replay techniques\,\citep{DBLP:conf/nips/Lopez-PazR17, DBLP:conf/cvpr/RebuffiKSL17, DBLP:journals/corr/abs-1902-10486} store a small portion of previous data in a buffer to utilize for training and are widely used with other techniques
(see more details in Sec.~\refappendix{appendix:related_work}). 
There are also more advanced gradient-based sample selection techniques like GSS\,\citep{DBLP:conf/nips/AljundiLGB19} and OCS\,\citep{DBLP:conf/iclr/YoonMYH22} that manage buffer data to have samples with diverse and representative gradient vectors. 
All these works do not consider fairness and simply assume that the entire incoming data is used for model training, which may result in unfair forgetting as we show in our experiments.

Model fairness research mitigates bias by ensuring that a model's performance is equitable across different sensitive groups, thereby preventing discrimination based on race, gender, age, or other sensitive attributes\,\citep{DBLP:journals/csur/MehrabiMSLG21}
(see more details in Sec.~\refappendix{appendix:related_work}). 
However, most of these techniques assume that the training data is given all at once, which may not be realistic. 
There are techniques for fairness-aware active learning\,\citep{anahideh2022fair, pang2024fairness, tae2024falcon}, in which the training data evolves with the acquisition of samples. However, these techniques store all labeled data and use them for training, which is impractical in continual learning settings.

A recent study addresses model fairness in class-incremental learning, where models may suffer from imbalanced forgetting of previously learned sensitive groups including classes, leading to unfairness across different groups\,\citep{he2024gradient, xu2024defying, DBLP:conf/aaai/ChowdhuryC23, DBLP:conf/nips/TruongNRL23} (see more details in Sec.~\refappendix{appendix:related_work}). 
FaIRL\,\citep{DBLP:conf/aaai/ChowdhuryC23} supports group fairness metrics like demographic parity for continual learning, but proposes a representation learning method that does not directly optimize the given fairness measure and thus has limitations in improving fairness as we show in experiments. 
FairCL\,\citep{DBLP:conf/nips/TruongNRL23} also addresses fairness in a continual learning setup, but only focuses on resolving the imbalanced class distribution based on the number of pixels of each class in an image, and is specific to semantic segmentation tasks. 
In comparison, we can mitigate bias from both data imbalance and intrinsic or acquired characteristics within the data by satisfying multiple notions of group fairness for sensitive groups including classes.

\section{Framework}
\label{sec:framework}

In this section, we first theoretically analyze unfair forgetting using gradient vectors of sensitive groups and the current task data. Next, we propose a sample weighting algorithm to mitigate unfairness by adjusting the average gradient vector of the current task data.

\paragraph{Notations}\label{para:notations}
In class-incremental learning, a model incrementally learns new current task data along with previous buffer data using data replay. Suppose we train a model to incrementally learn $L$ tasks $\{T_1, T_2, \ldots, T_L\}$ over time, and there are $N$ classes in each task 
with no overlapping classes between different tasks.
After learning the $l^{th}$ task $T_l$, we would like the model to remember all $(l-1) \cdot N$ previous task classes and an additional $N$ current task classes. We assume the buffer has a fixed size of $M$ samples. For $L$ tasks, we allocate $m = M/L$ samples of buffer data per task. If each task consists of $N$ classes, then we allocate $m/N = M/(L \cdot N)$ samples of buffer data per class\,\citep{DBLP:conf/iclr/ChaudhryRRE19, DBLP:conf/nips/MirzadehFPG20, DBLP:conf/aaai/ChaudhryGDTL21}. Each task $T_l = \{d_i=(X_i, y_i)\}$ is composed of feature-label pairs.
We also use $\mathcal{M}_l = \{d_j=(X_j, y_j)\}_{j=1}^{m}$ to represent the buffer data for each previous $l^{th}$ task $T_l$. We assume the buffer data per task is small, i.e., $m \ll |T_l|$\,\citep{DBLP:journals/corr/abs-1902-10486}. 

When defining fairness for class-incremental learning, we utilize sensitive groups including classes. According to the fairness literature, sensitive groups are divided by sensitive attributes like gender and race (e.g., Male and Female).
Similarly, the classes can also be viewed as sensitive groups, where the class label serves as the sensitive attribute.
To support either case in a class-incremental setting,
we use the following unifying notations: (1) if the sensitive groups are classes, then they form the set $G_y = \{(X, \text{y}) \in \mathcal{D}: \text{y} = y, y \in \mathbb{Y}\}$ where $\mathcal{D}$ is a dataset, $\text{y}$ is a class attribute, and $\mathbb{Y}$ is the set of $\text{y}$; (2) if we are using sensitive attributes beyond classes, we can further divide the classes into the set $G_{y, z} = \{(X, \text{y}, \text{z}) \in \mathcal{D}: \text{y} = y, \text{z} = z, y \in \mathbb{Y}, z \in \mathbb{Z}\}$ where $\text{z}$ is a sensitive attribute and $\mathbb{Z}$ is the set of $\text{z}$. 
See Sec.~\refappendix{appendix:notations} for a summary of notations used.

\subsection{Unfair Forgetting}
\label{sec:unfairforgetting}

We take inspiration from GEM\,\citep{DBLP:conf/nips/Lopez-PazR17}, which theoretically analyzes catastrophic forgetting by utilizing the angle between gradient vectors of data. If the inner products of gradient vectors for previous tasks and the current task are negative (i.e., $90^{\circ} <$ angle $\leq 180^{\circ}$), the loss of previous tasks increases after learning the current task. Catastrophic forgetting thus occurs when the gradient vectors of different tasks point in opposite directions. Intuitively, the opposite gradient vectors update the model parameters in conflicting directions, leading to forgetting while learning.

Using the notion of catastrophic forgetting, we propose theoretical results for unfair forgetting:

\begin{lemma}
\label{lem:CF}
    Denote $G$ as a sensitive group containing features $X$ and true labels $y$. Also, denote $f_{\theta}^{l-1}$ as a previous model and $f_{\theta}$ as the updated model after training on the current task $T_l$. Let $\ell$ be any differentiable loss function (e.g., cross-entropy loss), and $\eta$ be a learning rate. Then, the loss of the sensitive group of data after training with a current task sample $d_i \in T_l$ is approximated as: 
    \begin{equation} \label{eq:loss_update}
    \tilde{\ell}(f_{\theta}, G) = \ell(f_{\theta}^{l-1}, G) - \eta \nabla_{\theta} \ell(f_{\theta}^{l-1}, G)^\top \nabla_{\theta} \ell(f_{\theta}^{l-1}, d_i),
    \end{equation}
    where $\tilde{\ell}(f_{\theta}, G)$ is the approximated average loss between model predictions $f_{\theta}(X)$ and true labels $y$, whereas $\ell(f_{\theta}^{l-1}, G)$ is the exact average loss, $\nabla_{\theta} \ell(f_{\theta}^{l-1}, G)$ is the average gradient vector for the samples in the group $G$, and $\nabla_{\theta} \ell(f_{\theta}^{l-1}, d_i)$ is the gradient vector for a sample $d_i$, each with respect to the previous model $f_{\theta}^{l-1}$.
\end{lemma}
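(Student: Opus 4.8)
The plan is to recognize that Equation~\eqref{eq:loss_update} is a first-order Taylor expansion of the group loss, evaluated at the parameters produced by a single gradient descent step on the sample $d_i$. Let $\theta^{l-1}$ denote the parameters of the previous model $f_\theta^{l-1}$ and $\theta$ the parameters of the updated model. First I would write the update rule explicitly: training on $d_i \in T_l$ moves the parameters as $\theta = \theta^{l-1} - \eta \nabla_\theta \ell(f_\theta^{l-1}, d_i)$, so the displacement is $\Delta\theta = \theta - \theta^{l-1} = -\eta \nabla_\theta \ell(f_\theta^{l-1}, d_i)$.

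Next I would expand the loss of the group $G$ at the updated model around $\theta^{l-1}$ to first order:
\begin{equation*}
\ell(f_\theta, G) = \ell(f_\theta^{l-1}, G) + \nabla_\theta \ell(f_\theta^{l-1}, G)^\top \Delta\theta + O(\|\Delta\theta\|^2).
\end{equation*}
Here I would observe that, because $\ell(f_\theta, G)$ is the \emph{average} loss over the samples of $G$, its gradient is exactly the average gradient vector $\nabla_\theta \ell(f_\theta^{l-1}, G)$ by linearity of differentiation; thus the averaging over the group passes through the expansion with no extra work. Substituting $\Delta\theta = -\eta \nabla_\theta \ell(f_\theta^{l-1}, d_i)$ and discarding the second-order remainder yields precisely the approximation $\tilde{\ell}(f_\theta, G)$ claimed in Equation~\eqref{eq:loss_update}.

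The only genuine subtlety, and where I would spend the most care, is in justifying that dropping the $O(\|\Delta\theta\|^2)$ term is legitimate, which is exactly what the tilde and the word ``approximated'' are meant to encode. Since $\Delta\theta$ is proportional to the learning rate $\eta$, the neglected remainder is $O(\eta^2)$, so for a sufficiently small step size the first-order term dominates and the approximation is accurate; this is the standard small-step regime in which gradient-based forgetting analyses such as GEM operate. I would point to the assumed differentiability of $\ell$ as exactly the hypothesis needed for the Taylor expansion to exist. Beyond this, the argument is a direct substitution with no combinatorial or deeper analytic obstacle.
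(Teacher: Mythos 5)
Your proposal is correct and follows essentially the same route as the paper's proof: write the single-step gradient descent update $\theta = \theta^{l-1} - \eta \nabla_\theta \ell(f_\theta^{l-1}, d_i)$ and substitute it into the first-order Taylor expansion of the group loss around $\theta^{l-1}$. Your additional remarks on why the averaging commutes with differentiation and why the $O(\eta^2)$ remainder may be dropped are sound elaborations of points the paper leaves implicit (and discusses empirically in its appendix on approximation error), but they do not change the argument.
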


The proof is in Sec.~\refappendix{appendix:linear-approximation}. We employ first-order Taylor series approximation for the proof, which is widely used in the continual learning literature, by assuming that the loss function is locally linear in small optimization steps and considering the first-order term as the cause of catastrophic forgetting\,\citep{DBLP:conf/nips/Lopez-PazR17, DBLP:conf/nips/AljundiLGB19, DBLP:conf/nips/LeeXSBNSP19}. We empirically find that the approximation error is large when a new task begins because new samples with unseen classes are introduced. However, the error gradually becomes quite small as the number of epochs increases while training a model for the task, as in Sec.~\refappendix{appendix:apx_error}. 
Additionally, while higher-order terms can be included in principle, doing so makes the optimization non-convex and NP-hard. 
A detailed proof is provided in Sec.~\refappendix{appendix:higher_order_approximation}.

To define fairness in class-incremental learning with the approximated loss, we adopt the definition of approximate fairness that considers a model to be fair if it has approximately the same loss on the positive class, independent of the group membership\,\citep{DBLP:conf/nips/DoniniOBSP18}. In this paper, we compute fairness measures based on the disparity between approximated cross-entropy losses, which are derived from Lemma~\ref{lem:CF} using gradients.
The following proposition shows how using the cross-entropy loss disparity can effectively approximate common group fairness metrics such as equalized odds and demographic parity.
We choose cross-entropy loss specifically because it is widely validated as a strong proxy for fairness disparity metrics, though our method is robust to alternative loss choices
(see Sec.~\ref{appendix:CE-as-approximator} and~\refappendix{appendix:alternative_loss_design} for more justification of the loss function and an alternative, respectively). 

\begin{proposition}
\label{prop1}
(From\,\citep{DBLP:conf/iclr/Roh0WS21, roh2023drfairness, DBLP:conf/naacl/ShenHCBF22}) Using the cross-entropy loss disparity to measure fairness is empirically verified to provide reasonable proxies for common group fairness metrics.
\end{proposition}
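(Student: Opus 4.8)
The plan is to treat this as an essentially empirical claim and to support it in two complementary steps, since it asserts a \emph{proxy} relationship rather than an exact identity. First I would make the connection analytically precise in the simplest regime to supply intuition. For binary classification the cross-entropy loss of a sample with true label $y$ and predicted probability $p = f_{\theta}(X)$ is $\ell = -y\log p - (1-y)\log(1-p)$, which is monotone in the model's confidence in the correct class: for the positive subgroup ($y=1$) it is $-\log p$, decreasing in $p$, and for the negative subgroup ($y=0$) it is $-\log(1-p)$, decreasing as $p$ falls. Averaging over a group, a smaller group-wise CE loss on positives forces the average of $p$ upward, which is exactly the quantity governing the true-positive rate that appears in equalized odds; the analogous statement for negatives controls the false-positive rate, and jointly these control the equal error rate. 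This step reduces the fairness disparities to disparities in group-wise functionals of the predicted score $p$, each monotonically linked to the group-wise CE loss, so that equalizing the losses pushes these functionals together.

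Second, because the fairness metrics are defined on thresholded hard predictions $\hat y = \mathbbm{1}[p \geq \tau]$ whereas CE is defined on the soft scores, I would argue that equalizing losses \emph{approximately} equalizes the thresholded rates: when the score distributions within each group are comparably shaped, equal CE loss implies similar probability mass on each side of $\tau$, hence similar $\hat y$-rates and thus a smaller equalized-odds or equal-error-rate gap (the demographic-parity link is the least direct, since that metric ignores $y$, but under comparable base rates the positive-prediction rates still tend to align). Since this holds only approximately, the substance of the proposition rests on the empirical verification: following \citep{DBLP:conf/iclr/Roh0WS21, roh2023drfairness, DBLP:conf/naacl/ShenHCBF22}, I would, across datasets, measure both the CE loss disparity and each target fairness disparity over training and confirm that driving the former toward zero drives the latter toward zero as well.

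The hard part is precisely this gap between soft and hard predictions: no deterministic inequality converts a bound on CE loss disparity into a bound on, say, the demographic-parity gap without extra assumptions on the score distributions (calibration, or continuity of the score CDF near $\tau$). Consequently the claim cannot be established as an exact theorem in full generality, and the honest formulation is the one the paper adopts. I would therefore present the monotonicity argument of the first paragraph as motivation showing \emph{why} the proxy should be tight in well-behaved regimes, and rest the proposition itself on the cited empirical evidence, rather than overstating it as a universal inequality that does not hold.
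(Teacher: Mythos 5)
Your proposal is correct in its essential stance, and it matches the paper at the top level: Proposition~\ref{prop1} is not proved as a theorem at all --- its ``proof'' is the citation to prior empirical work, and the paper's own supporting material (Sec.~\ref{appendix:CE-as-approximator}) consists of a heuristic theoretical justification plus an empirical check (tracking the EO cost function against measured EO disparity during training on Biased MNIST). Where you genuinely diverge is in the theoretical motivation. You argue in the binary, thresholded setting via monotonicity of the cross-entropy loss in the correct-class score, and you make the soft-versus-hard-prediction gap the central obstruction. The paper instead works directly in multi-class: it identifies the softmax output $\hat{\text{y}}_i^{\,y}$ with $p(\hat{\text{y}}_i = y)$, so the group-averaged CE loss $\ell(f_{\theta}, G_{y,z})$ becomes an unbiased estimator of $-\log p(\hat{\text{y}} \mid \text{y}=y, \text{z}=z)$; the CE disparity is then $\bigl|\log\bigl(p(\hat{\text{y}}\mid \text{y}=y)\,/\,p(\hat{\text{y}}\mid \text{y}=y,\text{z}=z)\bigr)\bigr|$, and driving it to zero for all $(y,z)$ is exactly the conditional-independence statement underlying equalized odds. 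Moreover, where you leave demographic parity as the weakest link, the paper gives it the same estimator-based treatment by using the reweighted losses $\ell'$ with ratios $m_{y,z}/m_{*,z}$ (the sufficient condition of Sec.~\ref{appendix:multi-dp}), so EO, EER, and DP are handled uniformly. What each route buys: yours makes explicit the calibration and thresholding assumptions that the paper silently absorbs when it equates softmax scores with probabilities of the hard prediction --- a real looseness in the paper's heuristic that your framing exposes honestly; the paper's buys multi-class generality and a uniform account of why the disparity of (appropriately weighted) CE losses is the right proxy for each metric. Since the proposition's content is explicitly ``empirically verified,'' both routes legitimately bottom out in the cited experiments, and neither constitutes a gap.
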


Based on Lemma~\ref{lem:CF} and Proposition~\ref{prop1}, Theorem~\ref{thm:CFcondition} provides a sufficient condition for unfair forgetting. Intuitively, if a sample’s gradient opposes only the underperforming group’s average gradient, the training amplifies unfairness between the two groups.

\begin{theorem} \label{thm:CFcondition}
    Let $\ell$ be the cross-entropy loss and $G_1$ and $G_2$ the overperforming and underperforming sensitive groups of data, respectively. Also let $d_i$ be a training sample that satisfy the following conditions: $\ell(f_\theta^{l-1}, G_1) < \ell(f_\theta^{l-1}, G_2)$ while $\nabla_\theta \ell(f_\theta^{l-1}, G_1)^\top \nabla_\theta \ell(f_\theta^{l-1}, d_i) > 0$ and $\nabla_\theta \ell(f_\theta^{l-1}, G_2)^\top \nabla_\theta \ell(f_\theta^{l-1}, d_i) < 0$. Then $|\tilde{\ell}(f_\theta, G_1) - \tilde{\ell}(f_\theta, G_2)| > |\ell(f_\theta^{l-1}, G_1) - \ell(f_\theta^{l-1}, G_2)|$.
\end{theorem}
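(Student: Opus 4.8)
The plan is to reduce the entire statement to a one-dimensional comparison by applying Lemma~\ref{lem:CF} to each group separately, which converts the abstract gradient-direction hypotheses into signed scalar perturbations of the original losses. First I would introduce shorthand for the four quantities that appear: write $a = \ell(f_\theta^{l-1}, G_1)$ and $b = \ell(f_\theta^{l-1}, G_2)$ for the exact pre-update losses (so the overperforming/underperforming hypothesis reads $a < b$), and write $p = \nabla_\theta \ell(f_\theta^{l-1}, G_1)^\top \nabla_\theta \ell(f_\theta^{l-1}, d_i)$ and $q = \nabla_\theta \ell(f_\theta^{l-1}, G_2)^\top \nabla_\theta \ell(f_\theta^{l-1}, d_i)$ for the two inner products, whose signs are given by hypothesis as $p > 0$ and $q < 0$. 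Lemma~\ref{lem:CF} then supplies the two approximated post-update losses directly as $\tilde{\ell}(f_\theta, G_1) = a - \eta p$ and $\tilde{\ell}(f_\theta, G_2) = b - \eta q$.

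With these substitutions the heart of the argument is a single signed subtraction. I would compute the post-update difference in the direction that is already positive before the update, namely
\begin{equation*}
\tilde{\ell}(f_\theta, G_2) - \tilde{\ell}(f_\theta, G_1) = (b - a) + \eta\,(p - q),
\end{equation*}
and then observe that the hypotheses force each summand to be positive: $b - a > 0$ from the overperforming/underperforming assumption, and $p - q > 0$ because $p$ is positive while $q$ is negative (so $-q > 0$), with $\eta > 0$ since it is a learning rate. Hence the post-update gap strictly exceeds the pre-update gap $b - a$ by the nonzero amount $\eta(p-q)$.

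The only point that requires a moment of care is the passage from this signed inequality to the absolute-value statement in the theorem. Since $b - a > 0$, we have $|\ell(f_\theta^{l-1}, G_1) - \ell(f_\theta^{l-1}, G_2)| = b - a$; and since $(b - a) + \eta(p - q) > 0$, the sign of the disparity is preserved after the update, so $|\tilde{\ell}(f_\theta, G_1) - \tilde{\ell}(f_\theta, G_2)|$ equals that same positive quantity. The two absolute values therefore unwrap consistently in the same direction, and the strict inequality follows at once. I do not expect a genuine obstacle: the substance is entirely carried by Lemma~\ref{lem:CF}, and the three hypotheses serve only to guarantee that the correction term $\eta(p-q)$ pushes the already-positive gap further from zero rather than shrinking or flipping it. The main thing to watch is not to skip the sign-preservation step, since without confirming that the disparity ordering is maintained, the absolute-value claim would not follow from the signed inequality.
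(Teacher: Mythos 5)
Your proposal is correct and follows essentially the same route as the paper's proof: both apply Lemma~\ref{lem:CF} to each group, subtract, and use the three sign hypotheses to show the pre-update gap and the correction term $\eta(p-q)$ have consistent signs, so the disparity strictly grows. The only cosmetic difference is that you orient the subtraction as $\tilde{\ell}(f_\theta, G_2) - \tilde{\ell}(f_\theta, G_1)$ (two positive terms) while the paper writes $\tilde{\ell}(f_\theta, G_1) - \tilde{\ell}(f_\theta, G_2)$ (two negative terms); the sign-preservation step you flag is exactly the paper's observation that the two terms inside the absolute value share a sign.
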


The proof is in Sec.~\refappendix{appendix:linear-approximation}. The result shows that the loss disparity between the two groups could become larger after training on the current task sample, which leads to worse fairness. This theorem can be extended to when we have a set of current task samples $T_l = \{d_i=(X_i, y_i)\}$ where we can replace $\nabla_\theta \ell(f_\theta^{l-1}, d_i)$ with $\frac{1}{|T_l|} \sum_{d_i \in T_l} \nabla_\theta \ell(f_\theta^{l-1}, d_i)$.
If the average gradient vector of the current task data satisfies the derived sufficient condition, training with all of the current task samples using equal weights could thus result in unfair catastrophic forgetting.

\subsection{Sample Weighting for Unfairness Mitigation}
\label{sec:sampleweighting}

To mitigate unfairness, we propose sample weighting to suppress samples that harm fairness and promote samples that help. 
Given training weights $\mathbf{w}_l \in [0, 1]^{|T_l|}$ for the samples in the current task data, the approximated loss of a group $G$ after training is:
\[
\tilde{\ell}(f_{\theta}, G) = \ell(f_{\theta}^{l-1}, G) - \eta \nabla_{\theta} \ell(f_{\theta}^{l-1}, G)^\top \biggl(\frac{1}{|T_l|} \sum_{d_i \in T_l} \mathbf{w}_l^i \nabla_{\theta} \ell(f_{\theta}^{l-1}, d_i)\biggr),
\]
where $\mathbf{w}_l^i$ is a training weight for the current task sample $d_i$. We then formulate an optimization problem to find the weights such that both loss and unfairness are minimized. Here we define $\mathbb{Y}$ as the set of all classes and $\mathbb{Y}_c$ as the set of classes in the current task. We represent accuracy as the average loss over the current task data and minimize the cost function $\mathcal{L}_{acc} = \tilde{\ell}(f_{\theta}, G_{\mathbb{Y}_c}) = \frac{1}{|\mathbb{Y}_c|} \sum_{y \in \mathbb{Y}_c} \tilde{\ell}(f_{\theta}, G_y) = \frac{1}{|\mathbb{Y}_c| |\mathbb{Z}|} \sum_{y \in \mathbb{Y}_c, z \in \mathbb{Z}} \tilde{\ell}(f_{\theta}, G_{y, z})$. For fairness, the cost function $\mathcal{L}_{fair}$ depends on the group fairness measure as we explain below. We then minimize $\mathcal{L}_{fair} + \lambda \mathcal{L}_{acc}$ where $\lambda$ is a hyperparameter that balances fairness and accuracy. 

\paragraph{Equal Error Rate (EER)}

This measure \citep{DBLP:conf/pods/Venkatasubramanian19} is defined as $\Pr(\hat{\text{y}} \neq y_1 | \text{y} = y_1) = \Pr(\hat{\text{y}} \neq y_2 | \text{y} = y_2)$ for $y_{1}, y_{2} \in \mathbb{Y}$, where $\hat{\text{y}}$ is the predicted class, and $\text{y}$ is the true class. 
We define the cost function for EER as the average absolute deviation of the class loss, consistent with the definition of group fairness metrics: 
$\mathcal{L}_{EER} = \frac{1}{|\mathbb{Y}|} \sum_{y \in \mathbb{Y}} \allowbreak | \tilde{\ell}(f_{\theta}, G_y) - \tilde{\ell}(f_{\theta}, G_\mathbb{Y}) |$. The entire optimization problem is: 
\begin{equation} \label{eq:eer}
\begin{gathered}
\min_{\mathbf{w}_l} \frac{1}{|\mathbb{Y}|} \sum_{y \in \mathbb{Y}} | \tilde{\ell}(f_{\theta}, G_y) - \tilde{\ell}(f_{\theta}, G_\mathbb{Y}) | \ + \lambda \frac{1}{|\mathbb{Y}_c|} \sum_{y \in \mathbb{Y}_c} \tilde{\ell}(f_{\theta}, G_y), \\
\text{where} \ \ \tilde{\ell}(f_{\theta}, G_y) = \ell(f_{\theta}^{l-1}, G_y) \\ -  
\eta \nabla_{\theta} \ell(f_{\theta}^{l-1}, G_y)^\top \biggl(\frac{1}{|T_l|} \sum_{d_i \in T_l} \mathbf{w}_l^i \nabla_{\theta} \ell(f_{\theta}^{l-1}, d_i)\biggr).
\end{gathered}
\end{equation}

\paragraph{Equalized Odds (EO)}
\label{EO_description}
This measure\,\citep{DBLP:conf/nips/HardtPNS16} is satisfied when sensitive groups have the same accuracy, i.e., $\Pr(\hat{\text{y}}=y|\text{y}=y, \text{z}=z_1) = \Pr(\hat{\text{y}}=y|\text{y}=y, \text{z}=z_2)$ for $y \in \mathbb{Y}$ and $z_{1}, z_{2} \in \mathbb{Z}$. We design the cost function for EO as $\mathcal{L}_{EO} = \frac{1}{|\mathbb{Y}| |\mathbb{Z}|} \sum_{y \in \mathbb{Y}, z \in \mathbb{Z}} |\tilde{\ell}(f_{\theta}, G_{y, z}) - \tilde{\ell}(f_{\theta}, G_{y})|$, and the entire optimization problem is:
\begin{equation} \label{eq:eo}
\begin{gathered}
\min_{\mathbf{w}_l} \frac{1}{|\mathbb{Y}| |\mathbb{Z}|} \sum_{y \in \mathbb{Y}, z \in \mathbb{Z}} |\tilde{\ell}(f_{\theta}, G_{y, z}) - \tilde{\ell}(f_{\theta}, G_{y})| \\ + 
\lambda\frac{1}{|\mathbb{Y}_c| |\mathbb{Z}|} \sum_{y \in \mathbb{Y}_c, z \in \mathbb{Z}} \tilde{\ell}(f_{\theta}, G_{y, z}), \\
\text{where} \ \ \tilde{\ell}(f_{\theta}, G_{y, z}) = \ell(f_{\theta}^{l-1}, G_{y, z}) \\ - 
\eta \nabla_{\theta} \ell(f_{\theta}^{l-1}, G_{y, z})^\top \biggl(\frac{1}{|T_l|} \sum_{d_i \in T_l} \mathbf{w}_l^i \nabla_{\theta} \ell(f_{\theta}^{l-1}, d_i)\biggr).
\end{gathered}
\end{equation}

\paragraph{Demographic Parity (DP)} \label{para:DP}
This measure\,\citep{DBLP:conf/kdd/FeldmanFMSV15} is satisfied by minimizing the difference in positive prediction rates between sensitive groups. Here, we extend the notion of demographic parity to the multi-class setting\,\citep{DBLP:conf/nips/AlabdulmohsinSK22, denis2023fairness}, i.e., $\Pr(\hat{\text{y}}=y|\text{z}=z_1) = \Pr(\hat{\text{y}}=y|\text{z}=z_2)$ for $y \in \mathbb{Y}$ and $z_{1}, z_{2} \in \mathbb{Z}$. In the binary setting of $\mathbb{Y} = \mathbb{Z} = \{0, 1\}$, a sufficient condition for demographic parity is suggested using the loss multiplied by the ratios of sensitive groups\,\citep{DBLP:conf/iclr/Roh0WS21}.
By extending to multi-class, we derive a sufficient condition as follows: $\frac{m_{y,z_1}}{m_{*,z_1}} \ell(f_{\theta}, G_{y, z_1}) = \frac{m_{y,z_2}}{m_{*,z_2}} \ell(f_{\theta}, G_{y, z_2})$ where $m_{y,z}:= |\{i: \text{y}_i = y, \text{z}_i = z\}|$ and $m_{*,z}:= |\{i: \text{z}_i = z\}|$. 
The proof is in Sec.~\ref{appendix:multi-dp}. Let us define $\ell'(f_{\theta}, G_{y, z}) = \frac{m_{y,z}}{m_{*,z}} \ell(f_{\theta}, G_{y, z})$ and $\ell'(f_{\theta}, G_{y}) = \frac{1}{|\mathbb{Z}|} \sum_{n=1}^{|\mathbb{Z}|} \frac{m_{y,z_n}}{m_{*,z_n}} \ell(f_{\theta}, G_{y, z_n})$. We then define the cost function for DP as $\mathcal{L}_{DP} = \frac{1}{|\mathbb{Y}| |\mathbb{Z}|} \sum_{y \in \mathbb{Y}, z \in \mathbb{Z}} |\tilde{\ell}'(f_\theta, G_{y, z}) - \tilde{\ell}'(f_\theta, G_y)|$. The entire optimization problem is:
\begin{equation}  \label{eq:dp}
\begin{gathered}
\min_{\mathbf{w}_l} \frac{1}{|\mathbb{Y}| |\mathbb{Z}|} \sum_{y \in \mathbb{Y}, z \in \mathbb{Z}} |\tilde{\ell}'(f_\theta, G_{y, z}) - \tilde{\ell}'(f_\theta, G_y)| \\ + 
\lambda \frac{1}{|\mathbb{Y}_c| |\mathbb{Z}|} \sum_{y \in \mathbb{Y}_c, z \in \mathbb{Z}} \tilde{\ell}(f_{\theta}, G_{y, z}), \\
\text{where} \ \ \tilde{\ell}(f_{\theta}, G_{y, z}) = \ell(f_{\theta}^{l-1}, G_{y, z}) \\ - 
\eta \nabla_{\theta} \ell(f_{\theta}^{l-1}, G_{y, z})^\top \biggl(\frac{1}{|T_l|} \sum_{d_i \in T_l} \mathbf{w}_l^i \nabla_{\theta} \ell(f_{\theta}^{l-1}, d_i)\biggr).
\end{gathered}
\end{equation}

To find the optimal sample weights for the current task data considering both model accuracy and fairness, we first transform the defined optimization problems of Eqs.~\ref{eq:eer},~\ref{eq:eo}, and~\ref{eq:dp} into the form of linear programming (LP) problems.

\begin{theorem}\label{thm:OptimToLP}
The fairness-aware optimization problems (Eqs.~\ref{eq:eer},~\ref{eq:eo}, and~\ref{eq:dp}) can be formulated as linear programming (LP) problems.
\end{theorem}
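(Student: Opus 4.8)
The plan is to show that, for each of the three problems, the objective and constraints are---or can be made---linear in the decision variables once auxiliary variables are introduced for the absolute-value terms. The crucial observation I would establish first is that every approximated loss $\tilde{\ell}(f_{\theta}, G)$ appearing in Eqs.~\ref{eq:eer}--\ref{eq:dp} is an \emph{affine} function of the weight vector $\mathbf{w}_l$. Indeed, the gradients $\nabla_{\theta} \ell(f_{\theta}^{l-1}, G)$ and $\nabla_{\theta} \ell(f_{\theta}^{l-1}, d_i)$ are all evaluated at the \emph{fixed} previous model $f_{\theta}^{l-1}$, so each inner product $c_i^G := \nabla_{\theta} \ell(f_{\theta}^{l-1}, G)^\top \nabla_{\theta} \ell(f_{\theta}^{l-1}, d_i)$ is a constant scalar and $\ell(f_{\theta}^{l-1}, G)$ is a constant. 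Hence $\tilde{\ell}(f_{\theta}, G) = \ell(f_{\theta}^{l-1}, G) - \frac{\eta}{|T_l|}\sum_{d_i \in T_l} c_i^G\, \mathbf{w}_l^i$ is affine in $\mathbf{w}_l$. For the DP problem I would additionally note that multiplying by the fixed ratio $m_{y,z}/m_{*,z}$ and taking the fixed convex combination over $z$ preserves affinity, so both $\tilde{\ell}'(f_{\theta}, G_{y,z})$ and $\tilde{\ell}'(f_{\theta}, G_y)$ remain affine.

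Given affinity, the accuracy term $\mathcal{L}_{acc}$ is a nonnegative linear combination of affine functions and is therefore itself affine in $\mathbf{w}_l$. The fairness term is a sum of absolute values of affine functions. I would linearize each such term with the standard epigraph reformulation: for every group index $g$, introduce an auxiliary variable $t_g \ge 0$ together with the two linear constraints $t_g \ge a_g(\mathbf{w}_l)$ and $t_g \ge -a_g(\mathbf{w}_l)$, where $a_g(\mathbf{w}_l)$ denotes the affine difference inside the corresponding absolute value (e.g.\ $\tilde{\ell}(f_{\theta}, G_y) - \tilde{\ell}(f_{\theta}, G_{\mathbb{Y}})$ for EER). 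Replacing each $|a_g(\mathbf{w}_l)|$ in the objective by $t_g$ yields an objective that is linear in $(\mathbf{w}_l, \mathbf{t})$. Since each $t_g$ enters the minimization with a positive coefficient, at any optimum it is driven down to $\max(a_g, -a_g) = |a_g|$, so the reformulated problem shares the optimal value and optimal $\mathbf{w}_l$ of the original. The box constraints $0 \le \mathbf{w}_l^i \le 1$ are already linear.

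Putting these pieces together, each problem becomes: minimize a linear function of $(\mathbf{w}_l, \mathbf{t})$ subject to the epigraph inequalities and the box constraints, all of which are linear---that is, an LP. I would present the EER case in full and then remark that EO and DP are handled identically, differing only in the group indexing and (for DP) the constant rescaling already shown to preserve affinity.

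The step I expect to be the main obstacle is not the LP reformulation itself, which is routine, but rather the clean justification of affinity: one must verify that all gradient quantities are anchored at $f_{\theta}^{l-1}$ and hence constant, so that the only dependence on $\mathbf{w}_l$ enters through the single linear term, and that the aggregations defining $G_{\mathbb{Y}}$, $G_y$, and the primed DP losses introduce no nonlinearity. Once this is pinned down, the equivalence of the epigraph reformulation follows by the standard argument.
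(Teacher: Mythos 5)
Your proposal is correct and follows essentially the same route as the paper's proof: first establish that every approximated loss $\tilde{\ell}(f_{\theta}, G)$ is affine in $\mathbf{w}_l$ because all gradients and losses are anchored at the fixed previous model $f_{\theta}^{l-1}$ (the paper writes this as $\tilde{\ell}(f_{\theta}, G) = a_G - \mathbf{b}_G^\top \mathbf{w}$), then linearize the absolute values in the fairness term and observe that the accuracy term and box constraints are already linear. The only difference is cosmetic: you use the epigraph reformulation ($t_g \ge a_g(\mathbf{w}_l)$, $t_g \ge -a_g(\mathbf{w}_l)$, minimize $\sum_g t_g$), whereas the paper's Lemma on minimizing sums of absolute values uses the variable-splitting device ($a_g = y_g^+ - y_g^-$ with $y_g^\pm \ge 0$, minimize $\sum_g (y_g^+ + y_g^-)$, with the complementarity $y_g^+ y_g^- = 0$ shown to hold automatically at optimality); these are standard, interchangeable linearizations, and both rest on the same tightness-at-optimum argument you give.
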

\label{para:optim_to_lp}
The loss of each group can be approximated as a linear function, as described in Lemma~\ref{lem:CF}. This result implies that the optimization problems, consisting of the loss of each group, can likewise be formulated as LP problems. The proof is in Sec.~\refappendix{appendix:Fairness_LP}. We solve the LP problems using linear optimization solvers (e.g., CPLEX\,\citep{cplex2009v12}). As we add the average loss of the current task data in Eqs.~\ref{eq:eer},~\ref{eq:eo}, and~\ref{eq:dp} as a regularization term, the optimal sample weights do not indicate a severely shifted distribution.

\subsection{Algorithm}
\label{subsec:alg}

\begin{algorithm}[t]
\small
\caption{Fair Class-Incremental Learning}
\label{alg:cil}
\begin{algorithmic}[1]
  \Require Current task data $T_l$, previous buffer data $\mathcal{M}=\{\mathcal{M}_1,\dots,\mathcal{M}_{l-1}\}$, previous model $f_{\theta}^{l-1}$, loss function $\ell$, learning rate $\eta$, hyperparameters $\{\alpha,\lambda,\tau\}$, and fairness measure $F$.
  
  \For{each epoch}
    \State $\mathbf{w}_l^* \gets \textbf{\textit{FSW}}\bigl(T_l,\mathcal{M},f_{\theta}^{l-1},\ell,\alpha,\lambda,F\bigr)$
    \State $g_{\mathrm{curr}} \gets \frac{1}{|T_l|}\sum_{d_i\in T_l}\mathbf{w}_l^{*i}\,\nabla_{\theta}\,\ell\bigl(f_{\theta}^{l-1},d_i\bigr)$
    \State $g_{\mathrm{prev}} \gets \nabla_{\theta}\,\ell\bigl(f_{\theta}^{l-1},\mathcal{M}\bigr)$
    \State $\theta \gets \theta - \eta\,\bigl(g_{\mathrm{curr}} + \tau\,g_{\mathrm{prev}}\bigr)$
  \EndFor
  
  \State $\mathcal{M}_l \gets \textit{Buffer Sample Selection}(T_l)$
  \State $\mathcal{M} \gets \mathcal{M} \cup \mathcal{M}_l$
\end{algorithmic}
\end{algorithm}

We describe the overall process of fair class-incremental learning in Alg.~\ref{alg:cil}. For the recently arrived current task data, we first perform our fairness-aware sample weighting (\method{}) to assign training weights that can help learn new knowledge of the current task while retaining accurate and fair memories of previous tasks (Step 2). Next, we train the model using the current task data with its corresponding weights and stored buffer data of previous tasks (Steps 3--5), where $\eta$ is a learning rate, and $\tau$ is a hyperparameter to balance between them during training. The sample weights are computed once at the beginning of each epoch, and they are applied to all batches for computational efficiency\,\citep{DBLP:conf/icml/KillamsettySRDI21, DBLP:conf/aaai/KillamsettySRI21}. This procedure is repeated until the model converges (Steps 1--5). Before moving on to the next task, we employ buffer sample selection to store a small data subset for the current task (Steps 6--7). Buffer sample selection can also be done with consideration for fairness, but our experimental observations indicate that selecting representative and diverse samples for the buffer, as previous studies have shown, results in better accuracy and fairness. We thus employ a simple random sampling technique for the buffer sample selection in our framework (see Sec.~\refappendix{appendix:buffer_strategy} for more details). 

\begin{algorithm}[t]
\small
\caption{Fairness-aware Sample Weighting (FSW)}
\label{alg:fsw}
\begin{algorithmic}[1]
  \Require Current task data $T_l = \{\dots, d_i, \dots\}$, previous buffer data, $\mathcal{M} = \bigcup_{y\in\mathbb{Y}-\mathbb{Y}_c,\,z\in\mathbb{Z}}G_{y,z}$, previous model $f_{\theta}^{l-1}$, loss $\ell$, hyperparameters $\{\alpha,\lambda\}$, and fairness measure $F$.  
  \Ensure Optimal training weights $\mathbf{w}_l^*$ for current task data

  \State $\ell_G \gets [\ell(f_{\theta}^{l-1},G_{1,1}), \dots, \ell(f_{\theta}^{l-1},G_{|\mathbb{Y}|,|\mathbb{Z}|})]$
  \State $g_G \gets [\nabla_{\theta}\ell(f_{\theta}^{l-1},G_{1,1}), \dots, \nabla_{\theta}\ell(f_{\theta}^{l-1},G_{|\mathbb{Y}|,|\mathbb{Z}|})]$
  \State $g_d \gets [\dots, \nabla_{\theta}\ell(f_{\theta}^{l-1},d_i), \dots]$

  \State \textbf{switch} $F$ \textbf{do}
  \State \quad \textbf{case} EER: $\mathbf{w}_l^* \gets \text{Solve Eq.~\ref{eq:eer}}$
  \State \quad \textbf{case} EO:  $\mathbf{w}_l^* \gets \text{Solve Eq.~\ref{eq:eo}}$
  \State \quad \textbf{case} DP:  $\mathbf{w}_l^* \gets \text{Solve Eq.~\ref{eq:dp}}$

  \State \Return $\mathbf{w}_l^*$
\end{algorithmic}
\end{algorithm}

Alg.~\ref{alg:fsw} shows the fairness-aware sample weighting (FSW) algorithm for the current task data. We first divide both the previous buffer data and the current task data into groups based on each class and sensitive attribute. Next, we compute the average loss and gradient vectors for each group (Steps 1--2), and individual gradient vectors for the current task data (Step 3). To compute gradient vectors, we use the last layer approximation, which only considers the gradients of the model's last layer, that is efficient and known to be reasonable\,\citep{DBLP:conf/icml/KatharopoulosF18, DBLP:conf/iclr/AshZK0A20, DBLP:conf/icml/MirzasoleimanBL20, DBLP:conf/aaai/KillamsettySRI21, DBLP:conf/icml/KillamsettySRDI21, DBLP:conf/aaai/KimHW24}
(see Sec.~\refappendix{appendix:gradient_last_layer} for details).
We then solve linear programming to find the optimal sample weights for a user-defined target fairness measure such as EER (Step 5), EO (Step 6), and DP (Step 7). 
Since the gradient norm of the current task data is significantly larger than that of the buffer data, we utilize normalized gradients to update the loss of each group and replace the learning rate $\eta$ with a hyperparameter $\alpha$ in the equations. Finally, we return the current task’s sample weights for training (Step 8). 

Training with \method{} theoretically guarantees model convergence under the assumptions that the training loss is Lipschitz continuous and strongly convex, and that a proper learning rate is used\,\citep{DBLP:conf/icml/KillamsettySRDI21, chai2022fairness, DBLP:journals/tsp/LuTHC20}. The computational complexity of \method{} is quadratic to the number of current task samples, as CPLEX, which uses simplex-based algorithms, typically exhibits quadratic complexity with respect to the number of variables when solving LP problems in practice\,\citep{DBLP:journals/ior/Bixby02}. Our empirical results show that for about twelve thousand current task samples, the time to solve an LP problem is a few seconds, which leads to a few minutes of overall runtime for MNIST datasets (see Sec.~\refappendix{subsec:runtime} for details). Since we focus on continual offline training of large batches or separate tasks, rather than online learning, the overhead is manageable enough to deploy updated models in real-world applications. If the task size becomes too large, clustering similar samples and assigning weights to the clusters, rather than samples, could be a solution to reduce the computational overhead.

\section{Experiments}
\label{sec:exp}

In this section, we construct experiments on \method{} and address the following research questions: {\bf RQ1} How well can \method{} mitigate the unfair forgetting that occurs in class-incremental learning with better accuracy-fairness tradeoff? {\bf RQ2} How does \method{} weight the samples? 
{\bf RQ3} Can \method{} be further integrated with fair post-processing techniques?

\subsection{Experiment Settings}
\label{subsec:expsettings}

\subsubsection{Metrics} \label{subsubsec:metrics}
We follow the prior fair continual learning literature to evaluate accuracy and fairness\,\citep{DBLP:conf/aaai/ChowdhuryC23, DBLP:conf/nips/TruongNRL23}. 

\paragraph{Average Accuracy} 
We denote $A_l = \frac{1}{l}\sum_{t=1}^{l} a_{l, t}$ as the accuracy at the $l^{th}$ task, where $a_{l, t}$ is the accuracy of the $t^{th}$ task after learning the $l^{th}$ task. We average $A_l$ over all tasks to obtain the final average accuracy, $\overline{A_l} = \frac{1}{L}\sum_{l=1}^{L} A_l$ where $L$ is the total number of tasks.

\paragraph{Average Fairness}
We measure fairness for each task and then average these values to obtain the overall fairness score. Depending on the evaluation scenario, we select one of three disparity measures. In all cases, lower disparity indicates better fairness.

\subparagraph{\it (1) EER  } computes the average difference in test error rates among classes: $\frac{1}{|\mathbb{Y}|} \sum_{y \in \mathbb{Y}} |\Pr(\hat{\text{y}} \neq y | \text{y} = y) - \Pr(\hat{\text{y}} \neq \text{y})|$. 
EER targets fairness in predictive quality, aiming for equitable accuracy or error across different prediction outcomes. EER is used as a target metric where a sensitive attribute ($z$) is not explicitly provided. This scenario includes commonly used class-incremental learning benchmarks where only the class label ($y$) is available. In such cases, EER serves as a practical proxy for fairness under task imbalance, focusing on equal error across classes.

On the other hand, when $z$ is available, fairness can also be evaluated using group-based metrics like EO and DP, which use both $y$ and $z$ to cover broader social norms.

\subparagraph{\it (2) EO disparity} computes the average difference in accuracy among sensitive groups for all classes: $\frac{1}{|\mathbb{Y}| |\mathbb{Z}|} \sum_{y \in \mathbb{Y}, z \in \mathbb{Z}} |\Pr(\hat{\text{y}}=y|\text{y}=y, \text{z}=z) - \Pr(\hat{\text{y}}=y|\text{y}=y)|$. EO is preferred when ensuring equal model performance across groups or when true/false positives are equally important.

\subparagraph{\it (3) DP disparity} computes the average difference in class prediction ratios among sensitive groups for all classes: $\frac{1}{|\mathbb{Y}| |\mathbb{Z}|} \sum_{y \in \mathbb{Y}, z \in \mathbb{Z}} \allowbreak |\Pr(\hat{\text{y}}=y|\text{z}=z) - \Pr(\hat{\text{y}}=y)|$. DP is preferred when mitigating data bias or achieving ‘blind fairness’ by equalizing output distributions.

\begin{table}[t]
  \setlength{\tabcolsep}{6.65pt}
  \caption{Experimental settings for the five datasets.}
  \centering
  \begin{tabular}{lcccc}
    \toprule
    {\bf Dataset} & {\bf Size} & {\bf \#Features} & {\bf \#Classes} & {\bf \#Tasks} \\ 
    \midrule
    {\sf MNIST} & 60K & 28$\times$28 & 10 & 5  \\
    {\sf FMNIST} & 60K & 28$\times$28 & 10 & 5  \\
    {\sf Biased MNIST} & 60K & 3$\times$28$\times$28 & 10 & 5 \\
    {\sf DRUG} & 1.3K & 12 & 6 & 3 \\
    {\sf BiasBios} & 253K & 128$\times$768 & 25 & 5 \\
    \bottomrule
  \end{tabular}
  \label{tbl:datasets}
\end{table}

\begin{table*}[t]
  \setlength{\tabcolsep}{6.4pt}
  \caption{Accuracy and fairness results with respect to (1) EER disparity, where class is considered the sensitive attribute for MNIST and FMNIST datasets, and (2) EO disparity, where background color and gender are the sensitive attributes for the Biased MNIST and DRUG datasets, respectively (see DP disparity and the BiasBios dataset results in Sec.~\refappendix{appendix:results-main}). 
  We mark the best and second-best results with \textbf{bold} and \underline{underline}, respectively, excluding the na\"ive methods. 
  }
  \centering
  \begin{tabular}{l|cccccccccc}
  \toprule
    {Methods} & \multicolumn{2}{c}{\sf MNIST} & \multicolumn{2}{c}{\sf FMNIST} & \multicolumn{2}{c}{\sf Biased MNIST} & \multicolumn{2}{c}{\sf DRUG}  & \multicolumn{2}{c}{\sf BiasBios} \\
    \cmidrule{1-11}
    {} & {Acc.} & {EER Disp.} & {Acc.} & {EER Disp.} & {Acc.} & {EO Disp.} & {Acc.} & {EO Disp.}  & {Acc.} & {EO Disp.} \\
    \midrule
    {Joint Training} & .989\tiny{$\pm$.000} & .003\tiny{$\pm$.000} & .921\tiny{$\pm$.002} & .024\tiny{$\pm$.002} & .944\tiny{$\pm$.002} & .108\tiny{$\pm$.003} & .442\tiny{$\pm$.015} & .179\tiny{$\pm$.052} & .823\tiny{$\pm$.002} & .076\tiny{$\pm$.001} \\
    {Fine Tuning} & .455\tiny{$\pm$.000} & .326\tiny{$\pm$.000} & .451\tiny{$\pm$.000} & .325\tiny{$\pm$.000} & .449\tiny{$\pm$.001} & .016\tiny{$\pm$.002} & .357\tiny{$\pm$.009} & .125\tiny{$\pm$.034} & .420\tiny{$\pm$.001} & .028\tiny{$\pm$.002} \\
    \cmidrule{1-11}
    {iCaRL} & .918\tiny{$\pm$.005} & .048\tiny{$\pm$.003} & \textbf{.852\tiny{$\pm$.002}} & \underline{.047\tiny{$\pm$.001}} & .802\tiny{$\pm$.008} & .365\tiny{$\pm$.021} & \textbf{.444\tiny{$\pm$.025}} & .190\tiny{$\pm$.017} & \textbf{.829\tiny{$\pm$.002}} & .084\tiny{$\pm$.003} \\
    {WA} & .911\tiny{$\pm$.007} & .052\tiny{$\pm$.006} & .809\tiny{$\pm$.005} & .088\tiny{$\pm$.003} & \textbf{.916\tiny{$\pm$.002}} & .140\tiny{$\pm$.004} & .408\tiny{$\pm$.022} & .134\tiny{$\pm$.029} & .796\tiny{$\pm$.003} & .076\tiny{$\pm$.001}\\
    {CLAD} & .835\tiny{$\pm$}.016 & .099\tiny{$\pm$}.016 & .782\tiny{$\pm$}.018 & .118\tiny{$\pm$}.022 & .871\tiny{$\pm$}.012 & .198\tiny{$\pm$}.022 & .410\tiny{$\pm$}.026 & .114\tiny{$\pm$}.043 & .799\tiny{$\pm$.003} & .074\tiny{$\pm$.002}\\
    \cmidrule{1-11}
    {GSS} & .889\tiny{$\pm$.010} & .080\tiny{$\pm$.009} & .732\tiny{$\pm$.021} & .149\tiny{$\pm$.019} & .809\tiny{$\pm$.005} & .325\tiny{$\pm$.017} & \underline{.426\tiny{$\pm$.010}} & .167\tiny{$\pm$.038} & \underline{.808\tiny{$\pm$.003}} & .081\tiny{$\pm$.002} \\
    {OCS} & \textbf{.929\tiny{$\pm$.002}} & \underline{.040\tiny{$\pm$.003}} & .799\tiny{$\pm$.008} & .109\tiny{$\pm$.007} & .824\tiny{$\pm$.007} & .331\tiny{$\pm$.013} & .406\tiny{$\pm$.024} & .142\tiny{$\pm$.003} & -- & -- \\
    \cmidrule{1-11}
    {FaIRL} & .558\tiny{$\pm$.060} & .273\tiny{$\pm$.018} & .531\tiny{$\pm$.032} & .289\tiny{$\pm$.019} & .411\tiny{$\pm$.012} & \textbf{.118\tiny{$\pm$.011}} & .354\tiny{$\pm$.011} & \textbf{.060\tiny{$\pm$.021}} & .400\tiny{$\pm$.060} & \textbf{.055\tiny{$\pm$.020}}\\
    \cmidrule{1-11}
    {\bf \method{}} & \underline{.925\tiny{$\pm$.004}} & \textbf{.032\tiny{$\pm$.005}} & \underline{.824\tiny{$\pm$.006}} & \textbf{.039\tiny{$\pm$.006}} & \underline{.909\tiny{$\pm$.004}} & \underline{.119\tiny{$\pm$.007}} & .406\tiny{$\pm$.014} & \underline{.077\tiny{$\pm$.010}} & \underline{.808\tiny{$\pm$.002}} & \underline{.072\tiny{$\pm$.001}} \\
    \bottomrule
  \end{tabular}
  \label{tbl:performance_all}
\end{table*}

\subsubsection{Datasets} \label{subsubsec:dataset}
We use a total of five datasets as shown in Table~\ref{tbl:datasets}. We first utilize commonly used benchmarks for continual image classification tasks, which include MNIST and Fashion-MNIST (FMNIST). Here we regard the class as the sensitive attribute and evaluate fairness with EER disparity. We also use multi-class fairness benchmark datasets that have sensitive attributes\,\citep{DBLP:conf/eccv/XuWKG20, putzel2022blackbox, DBLP:journals/taffco/ChuramaniKG23, denis2023fairness}: Biased MNIST, Drug Consumption (DRUG), and BiasBios. We consider background color as the sensitive attribute for Biased MNIST, and gender for DRUG and BiasBios. We then use EO and DP disparity to evaluate fairness. We also consider using other datasets in the fairness field, but they are unsuitable for class-incremental learning experiments because either there are only two classes, or it is difficult to apply group fairness metrics (see Sec.~\refappendix{appendix:dataset} for details).

\subsubsection{Models and Hyperparameters} \label{subsubsec:models_hyparparmas}
Following the experimental setups of \citep{DBLP:conf/iclr/ChaudhryRRE19, DBLP:conf/nips/MirzadehFPG20}, we use a two-layer MLP with each 256 neurons for the MNIST, FMNIST, Biased MNIST, and DRUG datasets. For the BiasBios dataset, we use a pre-trained BERT language model\,\citep{DBLP:conf/naacl/DevlinCLT19}. 
We employ single-head evaluation where a final layer of the model is shared for all tasks\,\citep{DBLP:journals/corr/abs-1805-09733, DBLP:conf/eccv/ChaudhryDAT18}. 
To solve the fairness-aware optimization problems and find optimal sample weights, we use CPLEX, a high-performance optimization solver developed by IBM that specializes in solving linear programming (LP). See Sec.~\ref{appendix:optimization_solver} for experiments comparing CPLEX with another LP solver, HiGHS\,\citep{huangfu2018parallelizing}. 
For our buffer storage, we store 32 samples per sensitive group for all experiments.  For the hyperparameters $\alpha$, $\lambda$, and $\tau$ used in our algorithms, we perform cross-validation with a sequential grid search to find their optimal parameters.
Model details and hyperparameter set are provided in Sec.~\refappendix{appendix:models_hyparparmas}.

\subsubsection{Baselines} \label{subsubsec:baselines}
Following prior work\,\citep{DBLP:conf/nips/AljundiLGB19, DBLP:conf/iclr/YoonMYH22}, all baselines are continual learning methods, as other approaches do not fit this scenario. We compare \method{} with algorithms from four categories:

\paragraph{Na\"ive methods} {\it Joint Training} assumes access to all the data from previous classes for training, thereby serving as an upper bound in terms of performance; {\it Fine Tuning} trains a model only on data from new classes, without access to previous data, and thus serves as a lower bound.

\paragraph{State-of-the-art methods} {\it iCaRL}\,\citep{DBLP:conf/cvpr/RebuffiKSL17} performs herding-based buffer selection and representation learning using knowledge distillation loss; {\it WA}\,\citep{DBLP:conf/cvpr/ZhaoXGZX20} is a model rectification method designed to correct the bias in the model's final fully-connected layer. {\it WA} uses weight aligning to equalize the norms of the weight vectors over classes; {\it CLAD}\,\citep{xu2024defying} is a representation learning method that disentangles the representation interference between old and new classes. 

\paragraph{Sample selection methods} {\it GSS}\,\citep{DBLP:conf/nips/AljundiLGB19} selects a buffer with diverse gradients of samples; {\it OCS}\,\citep{DBLP:conf/iclr/YoonMYH22} uses gradient-based similarity, diversity, and affinity scores to rank and select samples for both current and buffer data in a unified manner.

\paragraph{Fairness-aware methods} {\it FaIRL}\,\citep{DBLP:conf/aaai/ChowdhuryC23} performs fair representation learning by controlling the rate-distortion function. 

Our choice of baselines was driven by practical considerations of computational cost and spatial complexity. For instance, while {\it L2P}\,\citep{wang2022learning} achieves strong performance, it leverages a pretrained vision transformer that already embeds extensive prior information, making it less directly comparable to \method{}. Similarly, {\it DER}\,\citep{yan2021dynamically} trains a separate feature extractor for each task, which incurs additional storage overhead and complexity. In contrast, the baselines like {\it iCaRL} and {\it WA} are widely recognized as strong and representative baselines in many studies\,\citep{zhou2024class, he2024gradient, zhuang2024class}, offering a fair and practical comparison. {\it FairCL}\,\citep{DBLP:conf/nips/TruongNRL23} addresses fairness in semantic segmentation tasks arising from the imbalanced class distribution of pixels, but we consider this problem to be unrelated from ours to add the method as a baseline.


\subsection{Accuracy and Fairness Results}
\label{subsec:expresults}

To answer {\bf RQ1}, we compare \method{} with other baselines in terms of accuracy and corresponding fairness metrics as shown in Table~\ref{tbl:performance_all} and Sec.~\refappendix{appendix:results-main}. 
The accuracy-fairness Pareto front is shown in Fig.~\ref{fig:tradeoff_curve} and Sec.~\refappendix{appendix:tradeoff}.
Due to the excessive time required to run {\it OCS} on BiasBios, we are not able to measure the results.
Detailed sequential performance results and additional analyses on varying buffer sizes are provided in Sec.~\ref{appendix:results-seq} and~\refappendix{appendix:buffer_size}, respectively. 

Overall, \method{} achieves better accuracy-fairness tradeoff results compared to the baselines across all the datasets. Although \method{} does not achieve the best performance in accuracy, \method{} shows the best fairness results among the baselines with similar accuracies (e.g., {\it iCaRL}, {\it WA}, {\it CLAD}, {\it GSS}, and {\it OCS}) and thus has the best accuracy-fairness tradeoff. We observe that \method{} also improves model accuracy while enhancing the performance of underperforming groups for fairness. 

The state-of-the-art method, {\it iCaRL}, generally achieves high accuracy with low EER disparity results. However, {\it iCaRL}'s nearest-mean-of-exemplars approach for its classification model, the predictions are significantly affected by sensitive attribute values, resulting in high disparities for EO and DP. 
Although {\it WA} also performs well, the method adjusts the model weights for the current task classes as a whole, which leads to an unfair forgetting of sensitive groups and unstable results. 
The closest work to \method{} is {\it CLAD}, which disentangles the representations of new classes and a fixed proportion of conflicting old classes to mitigate imbalanced forgetting across classes. However, the proportion of conflicts may vary by task in practice, limiting {\it CLAD}'s ability to achieve group fairness. 
While the two sample selection methods {\it GSS} and {\it OCS} store diverse and representative samples in the buffer, these methods sometimes result in an imbalance in the number of buffer samples across sensitive groups. 
The fairness-aware method {\it FaIRL} leverages an adversarial debiasing framework combined with a rate-distortion function. While this method theoretically promises improved fairness, it suffers from significant accuracy degradation due to the instability of jointly training the feature encoder and discriminator.

\begin{figure}[t!]
    \centering
    \begin{subfigure}[t]{0.48\textwidth}
    \centering
    \includegraphics[width=0.84\linewidth]{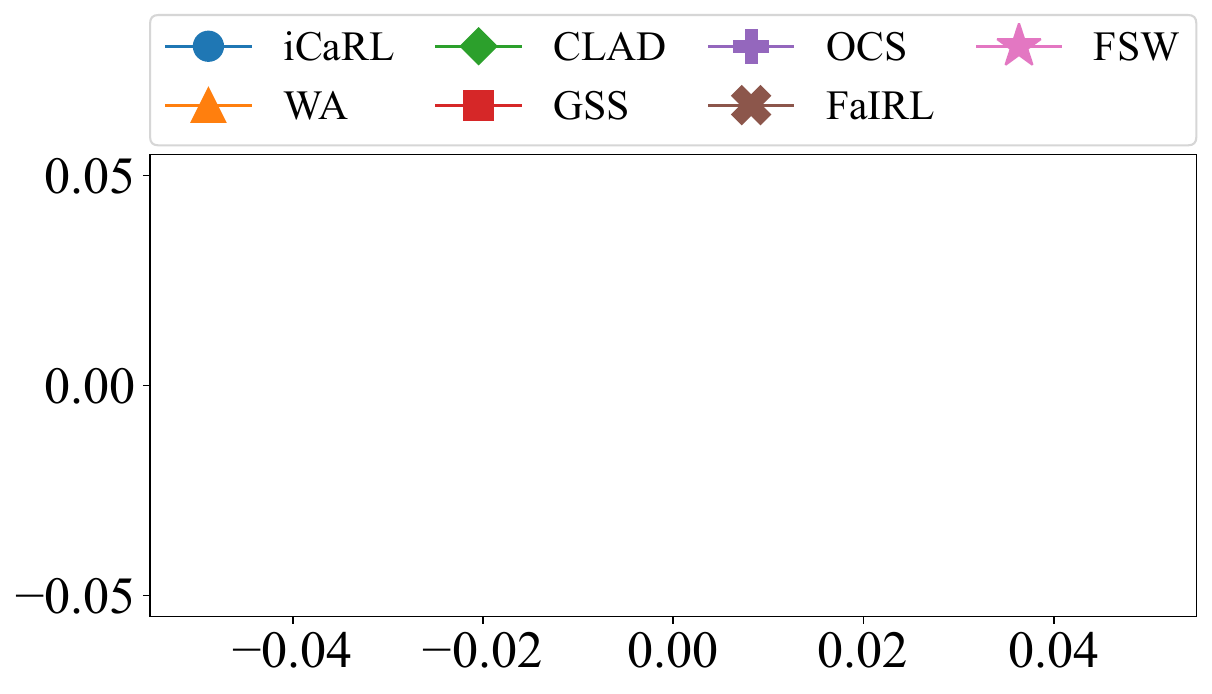}
    \vspace{0.2cm}
    \end{subfigure}
    \hfill
    \begin{subfigure}[t]{0.240\textwidth}
        \includegraphics[width=\linewidth]{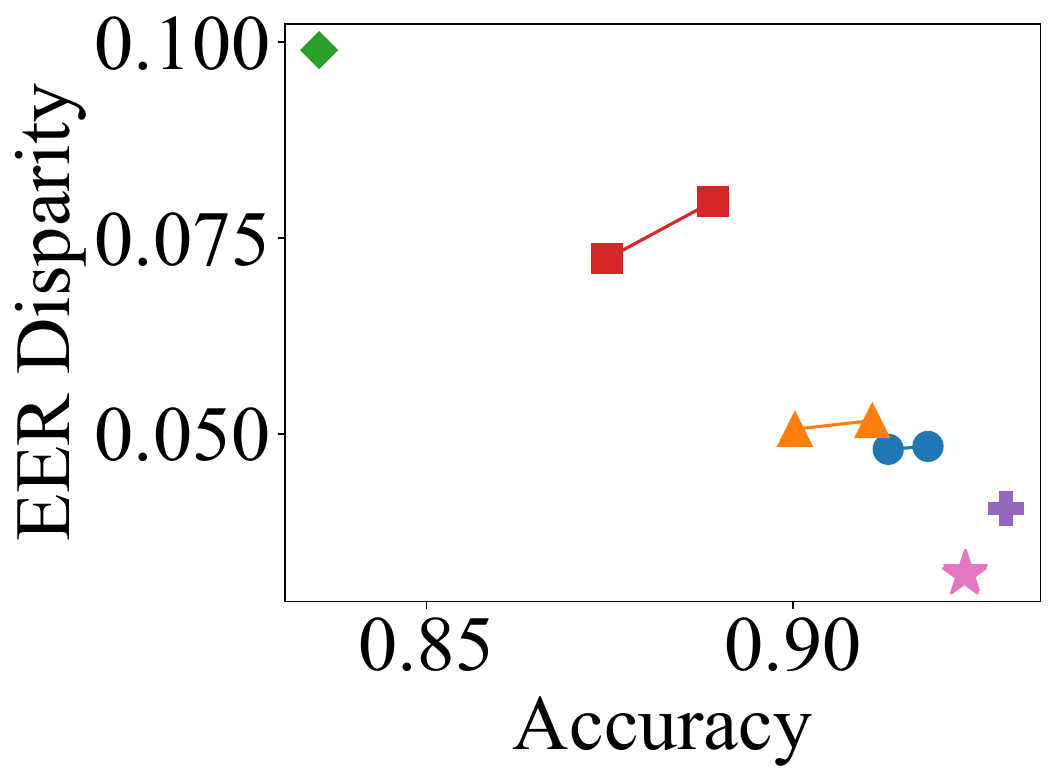}
        \caption{MNIST (EER).}
    \end{subfigure}
    \begin{subfigure}[t]{0.222\textwidth}
        \includegraphics[width=\linewidth]{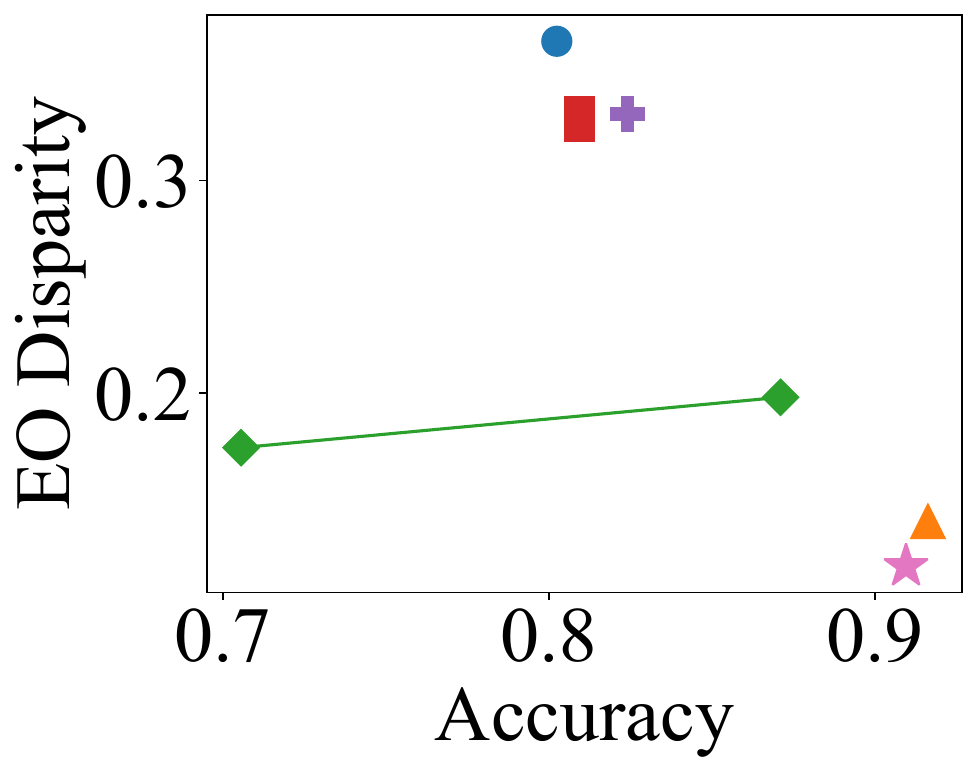}
        \caption{Biased MNIST (EO).}
    \end{subfigure}
    \caption{Tradeoff results between accuracy and fairness on the MNIST and Biased MNIST datasets. \method{} positioned in the lower right corner of the graph, indicating better accuracy-fairness tradeoff results compared to other baselines.}
    \label{fig:tradeoff_curve}
\end{figure}

In comparison, \method{} explicitly utilizes approximated loss and fairness measures to adjust the training weights for the current task samples, which leads to better model accuracy and fairness. As shown in Fig.~\ref{fig:tradeoff_curve} and detailed in Sec.~\refappendix{appendix:tradeoff}, \method{} consistently dominates the Pareto frontier, occupying the optimal bottom-right region across all evaluated tasks. These results demonstrate that \method{} consistently outperforms other baselines and achieves better accuracy-fairness tradeoff.


\begin{figure}[t]
    \begin{subfigure}[t]{0.243\textwidth}
        \includegraphics[width=\linewidth]{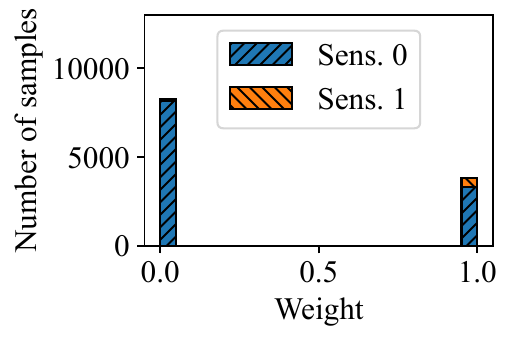}
        \caption{Task 2.}
    \end{subfigure}
    \begin{subfigure}[t]{0.225\textwidth}
        \includegraphics[width=\linewidth]{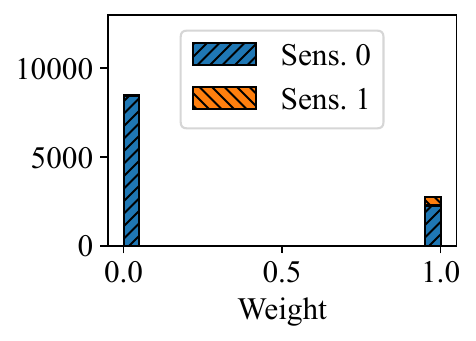}
        \caption{Task 3.}
    \end{subfigure}
    \begin{subfigure}[t]{0.243\textwidth}
        \includegraphics[width=\linewidth]{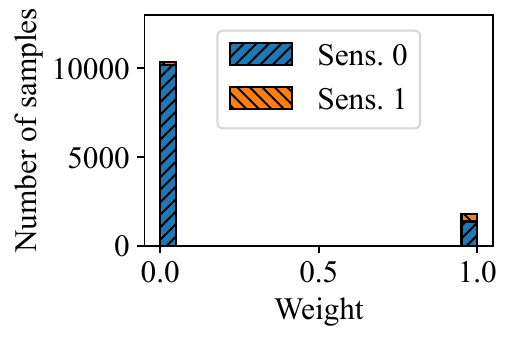}
        \caption{Task 4.}
    \end{subfigure}
    \begin{subfigure}[t]{0.225\textwidth}
        \includegraphics[width=\linewidth]{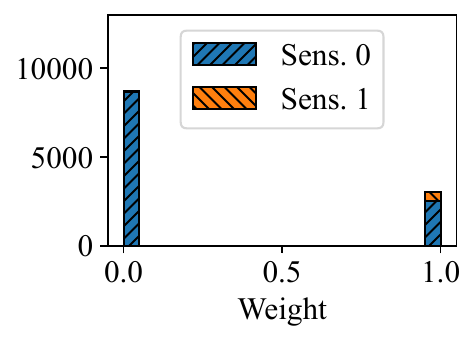}
        \caption{Task 5.}
    \end{subfigure}
    \caption{Distribution of sample weights for EO in sequential tasks of the Biased MNIST dataset.}\label{fig:weights_exp2}
\end{figure}

\begin{table}[t]
    \setlength{\tabcolsep}{2.05pt}
    \caption{Average counts of binary (0 or 1) and non-binary (not 0 or 1) sample weights for each optimization task. Since we take averages of all tasks except the first, the sum of (\# Binary) and (\# Non-binary) is not necessarily an integer.}
    \vspace{-0.cm}
    \centering
    \begin{tabular}{l|C{1.2cm}C{1.2cm}C{1.6cm}C{1.6cm}}
    \toprule
        {Dataset (Metric)} & {\sf MNIST (EER)} & {\sf FMNIST (EER)} & {\sf Biased MNIST (EO)} & {\sf Biased MNIST (DP)} \\
        \midrule
        {\# Binary} & 11830.8 & 11997.4 & 11831.9 & 11831.6 \\
        {\# Non-binary} & 3.0 & 2.6 & 1.9 & 2.1 \\
    \bottomrule
    \end{tabular}
    
    \vspace{0.5em} 
    
    \begin{tabular}{l|C{1.2cm}C{1.2cm}C{1.6cm}C{1.6cm}}
    \toprule
        {Dataset (Metric)} & {\sf Drug (EO)} & {\sf Drug (DP)} & {\sf BiasBios (EO)} & {\sf BiasBios (DP)} \\
        \midrule
        {\# Binary} & 446.5 & 446.1 & 25274.7 & 25273.2 \\
        {\# Non-binary} & 0.0 & 0.4 & 2.0 & 3.5 \\
    \bottomrule
    \end{tabular}
    \label{tbl:binaryanalysis_main}
\end{table}

\subsection{Sample Weighting Analysis}
\label{subsec:analysis}

To answer {\bf RQ2}, we analyze how our \method{} algorithm weights the current task samples at each task
in Fig.~\ref{fig:weights_exp2} and Sec.~\refappendix{subsec:sampleweighting}.
As the acquired sample weights may change with epochs during training, we show the average weight distribution of sensitive groups over all epochs. Since \method{} is not applied to the first task, where the model is trained with only the current task data, we present results starting from the second task. 

The result shows that \method{} assigns higher weights on average to the underperforming group (Sensitive group 1 in Fig.~\ref{fig:weights_exp2}) compared to the overperforming group (Sensitive group 0 in Fig.~\ref{fig:weights_exp2}). 
The weights are computed by considering complex forgetting relationships between sensitive groups, which differs from simply assigning higher weights to underperforming groups. 
Note that the acquired sample weights are mostly close to 0 or 1, which are extreme values. As optimal solutions of a linear program are obtained at the vertices of the feasible region, a convex polytope defined by $\mathbf{w}_l \in [0, 1]^{|T_l|}$ (as formulated in Sec.~\refappendix{sec:sampleweighting}), each \(w\) coincides with its extreme values. However, there are also some values that do not lie at these boundaries (0 or 1). The number of binary (0 or 1) and non-binary (not 0 or 1) samples weights are shown in Table~\ref{tbl:binaryanalysis_main}.

We also emphasize that if we limit the solution of the optimization problem to binary, which is equivalent to sample selection, the problem would transform into a mixed-integer linear programming problem, which is NP-hard and cannot be solved efficiently.
We also observe that \method{} assigns zero weight to a significant number of samples, reducing the training data used. This weighting approach provides an additional advantage in enabling efficient model training while retaining accuracy and fairness.

\begin{table}[t]
  \setlength{\tabcolsep}{7.8pt}
  \caption{Accuracy and fairness results on the MNIST and Biased MNIST datasets with or without \method{}.}
  \centering
  \begin{tabular}{l|cccc}
  \toprule
    {Methods} & \multicolumn{2}{c}{\sf MNIST} & \multicolumn{2}{c}{\sf Biased MNIST} \\
    \cmidrule{1-5}
    {} & {Acc.} & {EER Disp.} & {Acc.} & {EO Disp.} \\
    \midrule
    {W/o \method{}} & .912\tiny{$\pm$.004} & .051\tiny{$\pm$.005} & \textbf{.910\tiny{$\pm$.003}} & .126\tiny{$\pm$.005} \\
    {\bf \method{}} & \textbf{.925\tiny{$\pm$.004}} & \textbf{.032\tiny{$\pm$.005}} & .909\tiny{$\pm$.004} & \textbf{.119\tiny{$\pm$.007}}\\
    \bottomrule
  \end{tabular}
  \label{tbl:performance_ablation}
\end{table}

\begin{table}[t]
  \setlength{\tabcolsep}{3.35pt}
  \caption{Accuracy and more fairness metrics (including EO, DP, and EER Disparity) results on the Biased MNIST dataset without \method{} and \method{} with respect to EO and DP disparity.} 
  \centering
  \begin{tabular}{l|cccc}
  \toprule
    {Methods} & {Acc.} & {EO Disp.} & {DP Disp.} & {EER Disp.} \\
    \midrule
    {W/o \method{}} & \textbf{.910\tiny{$\pm$.003}} & .126\tiny{$\pm$.005}& .009\tiny{$\pm$.001}& .032\tiny{$\pm$.002} \\
    {\method{} (w.r.t. EO Disp.)} & .909\tiny{$\pm$.004} & \textbf{.119\tiny{$\pm$.007}}& \textbf{.008\tiny{$\pm$.001}}& .032\tiny{$\pm$.003}\\
    {\method{} (w.r.t. DP Disp.)} & .904\tiny{$\pm$.004} & .124\tiny{$\pm$.008}& \textbf{.008\tiny{$\pm$.001}}& \textbf{.029\tiny{$\pm$.004}}\\
    \bottomrule
  \end{tabular}
  \label{tbl:performance_ablation_extension_BMNIST_main}
\end{table}

\subsection{Ablation Study}
\label{subsec:ablationstudy}

To show the effectiveness of \method{} on accuracy and fairness, we perform an ablation study comparing the performance of using \method{} versus using all current task samples for training with equal weights. 
The results for the MNIST and Biased MNIST datasets are shown in Table~\ref{tbl:performance_ablation}. The results for DP disparity and other datasets, which are similar, are in Sec.~\refappendix{appendix:ablation}. As a result, applying sample weighting to the current task data is necessary to improve fairness while maintaining comparable accuracy. 

We also investigate how optimizing for one fairness metric influences other metrics in Table~\ref{tbl:performance_ablation_extension_BMNIST_main} and Sec.~\refappendix{appendix:ablation_extension}. Optimizing \method{} for a particular fairness metric partially mitigates disparities in other metrics for EO and DP, since both metrics utilize explicitly defined sensitive attributes, though its effectiveness is generally the highest for the targeted metric. In contrast, \method{} does not effectively reduce EER disparity, where the sensitive attribute is defined based on the class label rather than an explicitly defined sensitive attribute. 

\begin{table}[t]
  \setlength{\tabcolsep}{6.6pt}
  \caption{Accuracy and fairness (DP disparity) results when combining fair post-processing technique ($\epsilon$-fair) with continual learning methods ({\it iCaRL}, {\it WA}, {\it OCS}, and \method{}).}
  \centering
  \begin{tabular}{l|cccc}
  \toprule
    {Methods} & \multicolumn{2}{c}{\sf Biased MNIST} & \multicolumn{2}{c}{\sf DRUG} \\
    \cmidrule{1-5}
    {} & {Acc.} & {DP Disp.} & {Acc.} & {DP Disp.} \\
    \midrule
    {iCaRL} & .802\tiny{$\pm$.008} & .015\tiny{$\pm$.001} & \textbf{.444\tiny{$\pm$.025}} & .093\tiny{$\pm$.009} \\
    {WA} & .916\tiny{$\pm$.002} & .009\tiny{$\pm$.001} & .408\tiny{$\pm$.022} & .067\tiny{$\pm$.013} \\
    {OCS} & .824\tiny{$\pm$}.007 & .035\tiny{$\pm$}.003 & .393\tiny{$\pm$.017} & .053\tiny{$\pm$.012} \\
    {\bf \method{}} & .904\tiny{$\pm$.004} & .008\tiny{$\pm$.001} & .405\tiny{$\pm$.013} & .043\tiny{$\pm$.004} \\
    \cmidrule{1-5}
    {iCaRL -- $\epsilon$-fair} & \underline{.944\tiny{$\pm$.008}} & \underline{.006\tiny{$\pm$.002}} & \underline{.427\tiny{$\pm$.018}} & \underline{.026\tiny{$\pm$.004}} \\
    {WA -- $\epsilon$-fair} & \textbf{.953\tiny{$\pm$.003}} & \underline{.006\tiny{$\pm$.002}} & .404\tiny{$\pm$}.021 & .044\tiny{$\pm$}.020 \\
    {OCS -- $\epsilon$-fair} &  \textbf{.952\tiny{$\pm$}.003} &  .032\tiny{$\pm$}.004 & .384\tiny{$\pm$.009} & .051\tiny{$\pm$.002} \\
    {\bf \method{} -- $\epsilon$-fair} & .906\tiny{$\pm$}.006 & \textbf{.005\tiny{$\pm$}.001} & .405\tiny{$\pm$.013} & \textbf{.021\tiny{$\pm$.004}} \\
    \bottomrule
  \end{tabular}
  \label{tbl:performance_post}
\end{table}

\subsection{Integrating \method{} with Fair Post-processing}
\label{subsec:postprocessing}

To answer {\bf RQ3}, we emphasize the extensibility of \method{} by combining it with a state-of-the-art fair post-processing technique for multi-class tasks, $\epsilon$-fair\,\citep{denis2023fairness}. Since $\epsilon$-fair supports DP, we only report DP results of the Biased MNIST and DRUG datasets in Table~\ref{tbl:performance_post}. 
More results and explanations are in Sec.~\refappendix{appendix:postprocessing}.
Comparing original continual learning methods with their integration of $\epsilon$-fair reduces fairness disparity, while accuracy sometimes degrades. This result shows that combining the fair post-processing technique can further improve fairness after model training with continual learning methods. In addition, \method{} still shows a better accuracy-fairness tradeoff than existing continual learning methods, even when combined with the fair post-processing technique, compared to existing continual learning methods. 

\section{Conclusion}
\label{sec:conclusion}

We proposed \method{}, a fairness-aware sample weighting algorithm for class-incremental learning. Unlike conventional class-incremental learning, we demonstrated how training with all the current task data using equal weights may result in unfair catastrophic forgetting. 
We theoretically showed that the average gradient vector of the current task data should not oppose the average gradient vector of a sensitive group to avoid unfair forgetting. We then proposed \method{} as a solution to adjust the average gradient vector of the current task data, thereby achieving better accuracy-fairness tradeoff results. \method{} supports various group fairness measures by converting the optimization problem into a linear program. 
In our experiments, \method{} consistently outperformed baselines in fairness while maintaining comparable accuracy across diverse datasets from various domains. 
Future work includes generalizing to multiple sensitive attributes, exploring strategies to further reduce computational overhead, and possible strategies of \method{} when data drift occurs.
Further discussion and analysis of these future directions are provided in Sec.~\refappendix{appendix:future_work}.

\paragraph{Limitations} \label{limitation}
Efforts to improve fairness inevitably result in a decrease in accuracy. Despite this inherent trade-off, \method{} effectively minimizes the accuracy drop and achieves the best accuracy-fairness tradeoff among all other evaluated approaches (see Sec.~\ref{subsec:expresults} and ~\refappendix{appendix:tradeoff} for more details). 
Additionally, our theoretical analysis relies on a first-order Taylor approximation, which may introduce approximation errors. While higher-order Taylor approximations could theoretically yield more accurate results, they render the optimization problem non-convex and NP-hard. Empirical analysis further demonstrates that the approximation error diminishes rapidly over training epochs, indicating that our method effectively mitigates approximation inaccuracies introduced by the approximation over time (see Sec.~\ref{appendix:higher_order_approximation} and ~\refappendix{appendix:apx_error} for more details, respectively). 


\begin{acks}
This material is based on work that is partially funded by an unrestricted gift from Google.
This work was supported by the Institute of Information \& Communications Technology Planning \& Evaluation\,(IITP) grant funded by the Korea government\,(MSIT) (No.\ RS-2022-II220157, Robust, Fair, Extensible Data-Centric Continual Learning).
This work was supported by the Institute of Information \& Communications Technology Planning \& Evaluation\,(IITP) grant funded by the Korea government\,(MSIT) (No.\ RS-2024-00444862, Non-invasive near-infrared based AI technology for the diagnosis and treatment of brain diseases).
\end{acks}





\bibliographystyle{ACM-Reference-Format}
\balance
\bibliography{main}

\ifthenelse{\boolean{fullpaper}}{\clearpage
\nobalance 
\newpage
\appendix

\section{Technical Appendices}
\label{appendix:theory}

\subsection{Notations used in the paper} \label{appendix:notations}
Continuing from Sec.~\ref{para:notations}, Table~\ref{tbl:notations} summarizes the notations used in our paper to enhance readability.

\begin{table}[h!]
  \small
  \caption{Summary of notations used throughout the paper.} 
  \centering
  \begin{tabular}{l|C{0.75\linewidth}}
  \toprule
    Symbol & Description \\
    \cmidrule{1-2}
    {X} & Feature of a data point.\\
    {$y$ or y} & True label of a data point.\\
    {$\hat{y}$ or $\hat{\text{y}}$} & Predicted label of a data point.\\
    {$z$ or z} & Sensitive attribute of a data point.\\
    {$d$} & Data point including X and $y$.\\
    {$\mathbb{Y}$} & Set of all classes.\\
    {$\mathbb{Y}_c$} & Set of classes in current task.\\
    {$\mathbb{Z}$} & Set of sensitive attributes.\\
    \midrule
    {$L$} & Total number of tasks.\\
    {$l$} & Index of current task.\\
    {$T_l$} & Dataset for current task $l$ (including $d$).\\
    {$\mathbf{w}_l$} & Training weight of current task $l$.\\
    {$\mathcal{M}_l$} & Buffer data for current task $l$.\\
    {N} & Number of classes in each task.\\
    {M} & Size of buffer.\\
    {m} & Number of samples of buffer data per task.\\
    \midrule
    {$G$} & Sensitive group.\\
    {$G_y$} & Sensitive group with label $y$.\\
    {$G_{y,z}$} & Sensitive group with label $y$ and sensitive attribute $z$.\\
    {$G_{\mathbb{Y}}$} & Average of all $G_y$.\\
    \midrule
    {$\theta$} & Model parameters.\\
    {$f_\theta^{l-1}$} & Model for previous task $l-1$.\\
    {$f_\theta$} & Current Model.\\
    {$\ell$ or $\mathcal{L}$} & Loss function.\\
    {$\tilde{\ell}$} & Approximated loss function.\\
    {$\ell(f_\theta, G)$} & Average loss for the samples in the group $G$.\\
    {$\nabla\ell(f_\theta, G)$} & Average gradient vector for the samples in the group $G$.\\
    {$\nabla\ell(f_\theta, d)$} & Average gradient vector for a sample $d$.\\
    \midrule
    {$\eta$} & Model learning rate.\\
    {$\alpha$} & Hyperparameter that determines the model update rate during optimization.\\
    {$\lambda$} & Hyperparameter that balances accuracy and fairness.\\
    {$\tau$} & Hyperparameter that balances previous and current task samples.\\
    \bottomrule
  \end{tabular}
  \label{tbl:notations}
\end{table}

\subsection{Theoretical Analysis of Unfairness in Class-Incremental Learning} \label{appendix:linear-approximation}

Continuing from Sec.~\ref{sec:unfairforgetting}, we prove the lemma on the updated loss of a group of data after learning the current task data.
\begin{lemma}[Restated from Lemma~\ref{lem:CF}] \label{lem:CF_appendix}
Denote $G$ as a sensitive group containing features $X$ and true labels $y$. Also, denote $f_{\theta}^{l-1}$ as a previous model and $f_{\theta}$ as the updated model after training on the current task $T_l$. Let $\ell$ be any differentiable loss function (e.g., cross-entropy loss), and $\eta$ be a learning rate. Then, the loss of the sensitive group of data after training with a current task sample $d_i \in T_l$ is approximated as follows: 
\begin{equation*}
\tilde{\ell}(f_{\theta}, G) = \ell(f_{\theta}^{l-1}, G) - \eta \nabla_{\theta} \ell(f_{\theta}^{l-1}, G)^\top \nabla_{\theta} \ell(f_{\theta}^{l-1}, d_i),
\end{equation*}
where $\tilde{\ell}(f_{\theta}, G)$ is the approximated average loss between model predictions $f_{\theta}(X)$ and true labels $y$, whereas $\ell(f_{\theta}^{l-1}, G)$ is the exact average loss, $\nabla_{\theta} \ell(f_{\theta}^{l-1}, G)$ is the average gradient vector for the samples in the group $G$, and $\nabla_{\theta} \ell(f_{\theta}^{l-1}, d_i)$ is the gradient vector for a sample $d_i$, each with respect to the previous model $f_{\theta}^{l-1}$.
\end{lemma}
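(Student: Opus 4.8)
The plan is to treat the claim as a direct consequence of a single stochastic gradient descent step followed by a first-order Taylor expansion of the group loss about the previous parameters. First I would make the parameter update explicit: training on the current task sample $d_i$ with learning rate $\eta$ moves the weights from $\theta^{l-1}$ to $\theta = \theta^{l-1} - \eta \nabla_{\theta} \ell(f_{\theta}^{l-1}, d_i)$, so that the displacement is $\Delta\theta = \theta - \theta^{l-1} = -\eta \nabla_{\theta} \ell(f_{\theta}^{l-1}, d_i)$. Since the learning rate is small, $\Delta\theta$ is a small perturbation, which is exactly the regime in which a first-order expansion is accurate.

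Next I would expand the average group loss $\ell(f_{\theta}, G)$, viewed as a function of $\theta$, around the previous parameters $\theta^{l-1}$. The Taylor expansion gives
\begin{equation*}
\ell(f_{\theta}, G) = \ell(f_{\theta}^{l-1}, G) + \nabla_{\theta} \ell(f_{\theta}^{l-1}, G)^\top \Delta\theta + O(\|\Delta\theta\|^2).
\end{equation*}
Substituting $\Delta\theta = -\eta \nabla_{\theta} \ell(f_{\theta}^{l-1}, d_i)$ into the linear term yields
\begin{equation*}
\ell(f_{\theta}, G) = \ell(f_{\theta}^{l-1}, G) - \eta \nabla_{\theta} \ell(f_{\theta}^{l-1}, G)^\top \nabla_{\theta} \ell(f_{\theta}^{l-1}, d_i) + O(\eta^2 \|\nabla_{\theta} \ell(f_{\theta}^{l-1}, d_i)\|^2),
\end{equation*}
and defining $\tilde{\ell}(f_{\theta}, G)$ to be the right-hand side with the remainder dropped gives precisely the stated approximation. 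Here $\ell(f_{\theta}^{l-1}, G)$ is the exact average loss over $G$ under the previous model, while $\tilde{\ell}(f_{\theta}, G)$ is the first-order approximation of the post-update loss.

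The main obstacle, and the only nontrivial step, is justifying that the quadratic remainder $O(\eta^2 \|\nabla_{\theta} \ell(f_{\theta}^{l-1}, d_i)\|^2)$ can be discarded. I would handle this by invoking the standard assumption used throughout the continual learning literature (GEM and its successors): for a sufficiently small learning rate, the loss is locally linear in $\theta$ over one optimization step, so the first-order term dominates and is taken to be the cause of forgetting. I would also note that this is an approximation rather than an equality, which is why the statement uses $\tilde{\ell}$; the quality of the approximation and its empirical error behavior are addressed separately in the paper's discussion of Taylor truncation and higher-order terms. Finally, I would remark that the same argument applies verbatim when $d_i$ is replaced by a set of current-task samples, simply by replacing the single-sample gradient with the averaged gradient, which is the extension the theorem later relies on.
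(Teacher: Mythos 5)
Your proposal is correct and follows essentially the same route as the paper's proof: write the single-step gradient update $\theta = \theta^{l-1} - \eta \nabla_{\theta}\ell(f_{\theta}^{l-1}, d_i)$ and apply a first-order Taylor expansion of the group loss about $\theta^{l-1}$, with the approximation defining $\tilde{\ell}$. Your explicit tracking of the $O(\eta^2)$ remainder and its justification via local linearity is a slightly more careful presentation of the same argument, matching the discussion the paper defers to its remarks on approximation error and higher-order terms.
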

\begin{proof}
Assume that the loss can be well approximated by its first-order Taylor expansion around $\theta^{l-1}$.
We update the model using gradient descent with the current task sample $d_i \in T_l$ and learning rate $\eta$ as follows:
\begin{equation*}
\theta = \theta^{l-1} - \eta \nabla_{\theta} \ell(f_{\theta}^{l-1}, d_i).
\end{equation*}
Using the Taylor series approximation,
\begin{equation*}
\begin{aligned}
\tilde{\ell}(f_{\theta}, G) &= \ell(f_{\theta}^{l-1}, G) + \nabla_{\theta} \ell(f_{\theta}^{l-1}, G)^\top (\theta - \theta^{l-1}) \\
&= \ell(f_{\theta}^{l-1}, G) + \nabla_{\theta} \ell(f_{\theta}^{l-1}, G)^\top (- \eta \nabla_{\theta} \ell(f_{\theta}^{l-1}, d_i)) \\
&= \ell(f_{\theta}^{l-1}, G) - \eta \nabla_{\theta} \ell(f_{\theta}^{l-1}, G)^\top \nabla_{\theta} \ell(f_{\theta}^{l-1}, d_i).
\end{aligned}
\end{equation*}
If we update the model using all the current task data $T_l$, the equation is formulated as $\tilde{\ell}(f_{\theta}, G) = \ell(f_{\theta}^{l-1}, G) - \eta \nabla_{\theta} \ell(f_{\theta}^{l-1}, G)^\top \allowbreak \nabla_{\theta} \ell(f_{\theta}^{l-1}, T_l)$. 
Therefore, if the average gradient vectors of the sensitive group and the current task data have opposite directions, i.e., $\nabla_{\theta} \ell(f_{\theta}^{l-1}, G)^\top \nabla_{\theta} \ell(f_{\theta}^{l-1}, T_l) < 0$, learning the current task data increases the loss of the sensitive group data and finally leads to catastrophic forgetting.
\end{proof}
We next derive a sufficient condition for unfair forgetting.
\begin{theorem}[Restated from Theorem~\ref{thm:CFcondition}]\label{thm:CFcondition_appendix}
    Let $\ell$ be the cross-entropy loss and $G_1$ and $G_2$ the overperforming and underperforming sensitive groups of data, respectively. Also let $d_i$ be a training sample that satisfy the following conditions: $\ell(f_\theta^{l-1}, G_1) < \ell(f_\theta^{l-1}, G_2)$ while $\nabla_\theta \ell(f_\theta^{l-1}, G_1)^\top \nabla_\theta \ell(f_\theta^{l-1}, d_i) > 0$ and $\nabla_\theta \ell(f_\theta^{l-1}, G_2)^\top \nabla_\theta \ell(f_\theta^{l-1}, d_i) \allowbreak < 0$. Then $|\tilde{\ell}(f_\theta, G_1) - \tilde{\ell}(f_\theta, G_2)| > |\ell(f_\theta^{l-1}, G_1) - \ell(f_\theta^{l-1}, G_2)|$.
\end{theorem}
\begin{proof}
Using the derived equation in the lemma~\ref{lem:CF_appendix} $\tilde{\ell}(f_{\theta}, G) = \ell(f_{\theta}^{l-1}, G) - \eta \nabla_{\theta} \ell(f_{\theta}^{l-1}, G)^\top \nabla_{\theta} \ell(f_{\theta}^{l-1}, d_i)$, we compute the disparity of losses between the two groups $G_1$ and $G_2$ after the model update as follows:
\begin{equation*}
\begin{aligned}
\quad&|\tilde{\ell}(f_{\theta}, G_1) - \tilde{\ell}(f_{\theta}, G_2)| \\
=\; &|(\ell(f_{\theta}^{l-1}, G_1) - \eta \nabla_{\theta} \ell(f_{\theta}^{l-1}, G_1)^\top\nabla_{\theta} \ell(f_{\theta}^{l-1}, d_i)) \\
-\; &(\ell(f_{\theta}^{l-1}, G_2) - \eta {\nabla_{\theta} \ell(f_{\theta}^{l-1}, G_2)^\top} \nabla_{\theta} \ell(f_{\theta}^{l-1}, d_i))| \\
=\; & |(\ell(f_{\theta}^{l-1}, G_1) - \ell(f_{\theta}^{l-1}, G_2))\\
-\; & \eta ({\nabla_{\theta} \ell(f_{\theta}^{l-1}, G_1)^\top} \nabla_{\theta} \ell(f_{\theta}^{l-1}, d_i) - {\nabla_{\theta} \ell(f_{\theta}^{l-1}, G_2)^\top} \nabla_{\theta} \ell(f_{\theta}^{l-1}, d_i))|. \\
\end{aligned}
\end{equation*}
Since $\ell(f_\theta^{l-1}, G_1) < \ell(f_\theta^{l-1}, G_2)$, it leads to $\ell(f_\theta^{l-1}, G_1) - \ell(f_\theta^{l-1}, G_2)\allowbreak < 0$. Next, the two assumptions of ${\nabla_\theta \ell(f_\theta^{l-1}, G_1)^\top} \nabla_\theta \ell(f_\theta^{l-1}, d_i) > 0$ and ${\nabla_\theta \ell(f_\theta^{l-1}, G_2)^\top} \nabla_\theta \ell(f_\theta^{l-1}, d_i) < 0$ make $- \eta ({\nabla_{\theta} \ell(f_{\theta}^{l-1}, G_1)^\top}\allowbreak \nabla_{\theta} \ell(f_{\theta}^{l-1}, d_i) - {\nabla_{\theta} \ell(f_{\theta}^{l-1}, G_2)^\top} \nabla_{\theta} \ell(f_{\theta}^{l-1}, d_i)) < 0$. Since the two terms in the absolute value equation are both negative, \\
\begin{equation*}
\begin{aligned}
\quad&|\tilde{\ell}(f_{\theta}, G_1) - \tilde{\ell}(f_{\theta}, G_2)| \\
=\; &|\ell(f_{\theta}^{l-1}, G_1) - \ell(f_{\theta}^{l-1}, G_2)| \\
+\; &|- \eta ({\nabla_{\theta} \ell(f_{\theta}^{l-1}, G_1)^\top} \nabla_{\theta} \ell(f_{\theta}^{l-1}, d_i) - {\nabla_{\theta} \ell(f_{\theta}^{l-1}, G_2)^\top} \nabla_{\theta} \ell(f_{\theta}^{l-1}, d_i))|  \\
>\, &|\ell(f_{\theta}^{l-1}, G_1) - \ell(f_{\theta}^{l-1}, G_2)|.
\end{aligned}
\end{equation*}
We finally have $|\tilde{\ell}(f_\theta, G_1) - \tilde{\ell}(f_\theta, G_2)| > |\ell(f_\theta^{l-1}, G_1) - \ell(f_\theta^{l-1}, G_2)|$, which implies that fairness deteriorates after training on the current task data. 
\end{proof}

\subsection{Higher-order approximation for Taylor series}
\label{appendix:higher_order_approximation}

Continuing from Sec.~\ref{sec:unfairforgetting}, we can extend the approximated loss to a higher-order Taylor approximation to improve theoretical accuracy, but it makes the optimization non-convex and NP-hard. 

\begin{theorem}\label{thm:higher_order_approximation}
Employing a second-order Taylor approximation on Lemma~\ref{lem:CF} makes the optimization problem non-convex and NP-hard.
\end{theorem}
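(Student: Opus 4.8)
The plan is to substitute a second-order Taylor expansion into the weighted loss of Lemma~\ref{lem:CF}, expose each group loss as a quadratic form in the sample weights $\mathbf{w}_l$, and then invoke the known NP-hardness of indefinite box-constrained quadratic programming. First I would set up the expansion. Writing $n = |T_l|$, let $\mathsf{G} = [\,\nabla_\theta\ell(f_\theta^{l-1},d_1),\dots,\nabla_\theta\ell(f_\theta^{l-1},d_n)\,]$ be the matrix of per-sample gradients and $H_G = \nabla_\theta^2 \ell(f_\theta^{l-1},G)$ the group Hessian. Since the weighted update is $\theta - \theta^{l-1} = -\frac{\eta}{n}\mathsf{G}\mathbf{w}_l$, the expansion gives
\[
\tilde\ell(f_\theta,G) = \ell(f_\theta^{l-1},G) - \frac{\eta}{n}\nabla_\theta\ell(f_\theta^{l-1},G)^\top\mathsf{G}\mathbf{w}_l + \frac{\eta^2}{2n^2}\mathbf{w}_l^\top\bigl(\mathsf{G}^\top H_G \mathsf{G}\bigr)\mathbf{w}_l,
\]
so $\tilde\ell(f_\theta,G) = a_G + b_G^\top\mathbf{w}_l + \tfrac12\mathbf{w}_l^\top Q_G\mathbf{w}_l$ with $Q_G = \frac{\eta^2}{n^2}\mathsf{G}^\top H_G \mathsf{G}$. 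Consequently $\mathcal{L}_{acc}$ is quadratic in $\mathbf{w}_l$, and each fairness cost becomes a sum of absolute values of quadratics, since it is built from differences $\tilde\ell(f_\theta,G_y)-\tilde\ell(f_\theta,G_{\mathbb Y})$ whose quadratic part is $Q_{G_y}-Q_{G_{\mathbb Y}}$.

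For the non-convexity claim, I would argue that the relevant matrix has a negative eigenvalue. By Sylvester's law of inertia, $\mathsf{G}^\top H_G\mathsf{G}$ inherits a negative eigenvalue whenever $H_G$ is indefinite and $\mathsf{G}$ has full column rank; cross-entropy Hessians with respect to the network parameters are indefinite in general. Moreover, even if each $Q_G$ were positive semidefinite, the difference $Q_{G_1}-Q_{G_2}$ that appears inside the fairness objective is generically indefinite. A quadratic objective with an indefinite matrix over the box $[0,1]^n$ is non-convex, which settles the first half of the statement almost immediately once the quadratic form is exposed.

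For NP-hardness I would reduce from indefinite box-constrained quadratic programming, which is NP-hard even when the matrix has a single negative eigenvalue \cite{pardalos1991quadratic}. The reduction embeds an arbitrary such instance with matrix $A$ by choosing the current-task samples so that their last-layer gradients make $\mathsf{G}$ full rank (e.g.\ the identity) and the group Hessian realizes $H_G \propto A$, yielding $Q_G = A$; taking $\lambda$ large or restricting to a single group suppresses the fairness penalties and leaves exactly $\min_{\mathbf{w}_l\in[0,1]^n}\tfrac12\mathbf{w}_l^\top A\mathbf{w}_l + b^\top\mathbf{w}_l$. Since every instance of the hard problem is thereby realized as an instance of the second-order optimization, the latter is NP-hard.

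The main obstacle is making this reduction rigorous: I must verify that the structured matrix $Q_G = \mathsf{G}^\top H_G\mathsf{G}$ is expressive enough to encode a hardness instance under the genuine constraints of the setting (admissible gradients and a valid loss Hessian), and that the $\ell_1$-type fairness terms can be neutralized cleanly rather than accidentally restoring convexity or shifting the optimizer. Controlling the interaction between the absolute-value fairness penalty and the indefinite quadratic, so that the reduction preserves the location of minimizers, is the delicate step; the non-convexity direction, by contrast, follows directly from the inertia argument.
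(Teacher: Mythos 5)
Your proposal is correct and follows essentially the same route as the paper's proof: a second-order Taylor expansion turns each group loss into a quadratic form in the sample weights, the fairness term then contains the difference of (transformed) Hessians --- the paper's $Q = H_\theta(G_1)-H_\theta(G_2)$, your $Q_{G_1}-Q_{G_2}$ --- which is indefinite in general and hence makes the problem non-convex, and NP-hardness is concluded by invoking the Pardalos--Vavasis result on quadratic problems with a negative eigenvalue. The only divergence is that the paper stops at a class-membership argument (it rewrites the fairness penalty with a slack variable as a QCQP with one non-PSD constraint matrix and cites the hardness of that class), whereas you additionally sketch an explicit reduction embedding a hard box-constrained QP instance; the expressiveness obstacle you flag for that reduction is real, but the paper's own proof does not attempt or resolve it either, so your argument establishes everything the paper's does.
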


\begin{proof}
Using a second-order Taylor expansion around the current model parameters $\theta$, the approximated loss becomes: 
\[
\tilde{\ell}(G) = \ell(G) - \nabla_\theta \ell(G)^\top \mathbf{x} + \frac{1}{2} \mathbf{x}^\top H_\theta(G) \mathbf{x},
\]
where $H_\theta(G)$ is the Hessian (assumed to be PSD; we also prove a case when it is not, and we define $\mathbf{x}:= \frac{\eta}{|T|} \sum_{d_i \in T} \mathbf{w}^i \nabla_\theta \ell(d_i)$. 
This function is convex in $\mathbf{x}$ and thus convex in $\mathbf{w}$.
To encourage fairness, we penalize the difference between the two group losses:
\[
\mathcal{L}_{\text{fair}}(\mathbf{w}) := \left| \tilde{\ell}(G_1) - \tilde{\ell}(G_2) \right| = \left| c + b^\top \mathbf{x} + \frac{1}{2} \mathbf{x}^\top Q \mathbf{x} \right|,
\]
where $c$ and $b$ are appropriate constants and $Q = H_\theta(G_1) - H_\theta(G_2)$.
Introducing a slack variable $t$, we rewrite the fairness loss constraint as:
\[
- t \leq c + b^\top \mathbf{x} + \frac{1}{2} \mathbf{x}^\top Q \mathbf{x} \leq t.
\]

This yields a Quadratically Constrained Quadratic Program (QCQP) in $\mathbf{w}$.
The constraint involves a quadratic form with matrix $Q = H_\theta(G_1) - H_\theta(G_2)$. In general, $Q$ is indefinite unless $H_\theta(G_1) = H_\theta(G_2)$. Hence, at least one constraint is non-convex.

It is known that a QCQP, even with a single non-PSD constraint matrix, is non-convex and NP-hard.\,\citep{pardalos1991quadratic}
Although $\tilde{\ell}(G)$ is convex in $\mathbf{w}$, the fairness term introduces non-convexity. Thus, the overall optimization problem is non-convex and NP-hard.

Furthermore, if $H_\theta(G)$ is not PSD, then $\tilde{\ell}(G)$ itself becomes non-convex in $\mathbf{w}$, which also leads to an NP-hard optimization problem.
\end{proof}

\subsection{From Cross-Entropy Loss to Group Fairness Metrics}
\label{appendix:CE-as-approximator}
Continuing from Sec.~\ref{sec:unfairforgetting}, we explain how to approximate the group fairness metrics using cross-entropy loss disparity.
Existing works\,\citep{DBLP:conf/naacl/ShenHCBF22, DBLP:conf/iclr/Roh0WS21, roh2023drfairness} empirically demonstrated that using the cross-entropy loss disparity provides reasonable proxies for common group fairness metrics such as equalized odds (EO) and demographic parity (DP) disparity. In addition, we theoretically describe how minimizing the cost function for EO using the cross-entropy loss disparity (i.e., $\mathcal{L}_{EO} = \frac{1}{|\mathbb{Y}| |\mathbb{Z}|} \sum_{y \in \mathbb{Y}, z \in \mathbb{Z}} |\ell(f_{\theta}, G_{y, z}) - \ell(f_{\theta}, G_{y})|$ where $\ell$ is a cross-entropy loss) leads to ensuring EO disparity. \cite{DBLP:conf/naacl/ShenHCBF22} theoretically and empirically showed that using cross-entropy loss instead of the 0-1 loss (i.e., $\mathbbm{1}(\text{y}\neq\hat{\text{y}})$ where $\mathbbm{1}(\cdot)$ is an indicator function, which is equivalent to the probability of correct prediction) can still capture EO disparity in binary classification. We now prove how applying the cross-entropy loss disparity for EO can be extended to multi-class classification as follows:

Let $\mathbb{Y}$ denote the set of all class labels. For each sample $i$, let $\text{y}_i \in \mathbb{Y}$ be its true label, $\hat{\text{y}}_i \in \mathbb{Y}$ its predicted label, and $\text{z}_i$ its sensitive attribute. Define $m_{y,z} := \left| \left\{ i : \text{y}_i = y,\; \text{z}_i = z \right\} \right|$ as the size of a sensitive group.
Also, let
\(\left( \text{y}_i^1, \text{y}_i^2, \dots, \text{y}_i^{|\mathbb{Y}|} \right)^\top\) and 
\(\left( \hat{\text{y}}_i^1, \hat{\text{y}}_i^2, \dots, \hat{\text{y}}_i^{|\mathbb{Y}|} \right)^\top\) 
denote the one-hot encoded vector of the true label $\text{y}_i$ and the predicted probability distribution over all classes for sample $i$, respectively, where $\text{y}_i^j \in \{0, 1\}$ and $\hat{\text{y}}_i^j \in [0,1]$ for $j \in [1,2,\dots,|\mathbb{Y}|]$.
Then, the cross-entropy loss for a sensitive group $G_{y, z}$ can be transformed as follows:
\begin{equation*}
\begin{aligned}
\ell(f_{\theta}, G_{y, z}) &= - \frac{1}{m_{y,z}} \sum_{i=1}^{m_{y,z}}\Big( \sum_{j=1}^{|\mathbb{Y}|} \text{y}_{i}^{j}\cdot\log(\hat{\text{y}}_{i}^{j})\Big) \\ 
&= - \frac{1}{m_{y,z}} \sum_{i=1}^{m_{y,z}} \log(\hat{\text{y}}_{i}^{y}).
\end{aligned}
\end{equation*}
Since $\hat{\text{y}}_{i}^{y}$ is equivalent to $p(\hat{\text{y}}_i=y)$ and we are measuring a loss for the sensitive group (\text{y} = $y$, \text{z} = $z$), $\ell(f_{\theta}, G_{y, z})=-\frac{1}{m_{y,z}} \sum_{i} \log(p(\hat{\text{y}}_i))$ is an unbiased estimator of $- \log p(\hat{\text{y}}|\text{y} = y, \text{z} = z)$. Likewise, $\ell(f_{\theta}, G_{y})$ is an unbiased estimator of $- \log p(\hat{\text{y}}|\text{y} = y)$, and our cost function becomes equivalent to $\Big|\log\frac{p(\hat{\text{y}}|\text{y} = y)}{p(\hat{\text{y}}|\text{y} = y, \text{z} = z)}\Big|$. Since $\frac{p(\hat{\text{y}}|\text{y} = y)}{p(\hat{\text{y}}|\text{y} = y, \text{z} = z)} \allowbreak = 1$ for all $y$, $z$ implies that prediction outcomes are independent of the sensitive attribute given the true label, we conclude that minimizing the cost function for EO can satisfy equalized odds. 

We next perform experiments to evaluate how well the cost function for EO approximates EO disparity (i.e., $\frac{1}{|\mathbb{Y}| |\mathbb{Z}|} \sum_{y \in \mathbb{Y}, z \in \mathbb{Z}} |\Pr(\hat{\text{y}}=y|\text{y}=y, \text{z}=z) - \Pr(\hat{\text{y}}=y|\text{y}=y)|$) on the Biased MNIST dataset as shown in Fig.~\ref{fig:CE_01}. Although the scales of the two metrics are different, the simultaneous movements of these two trends suggest that our cost function is effective in satisfying equalized odds.

\begin{figure}[H]
\centering
  \includegraphics[width=1.0\columnwidth]{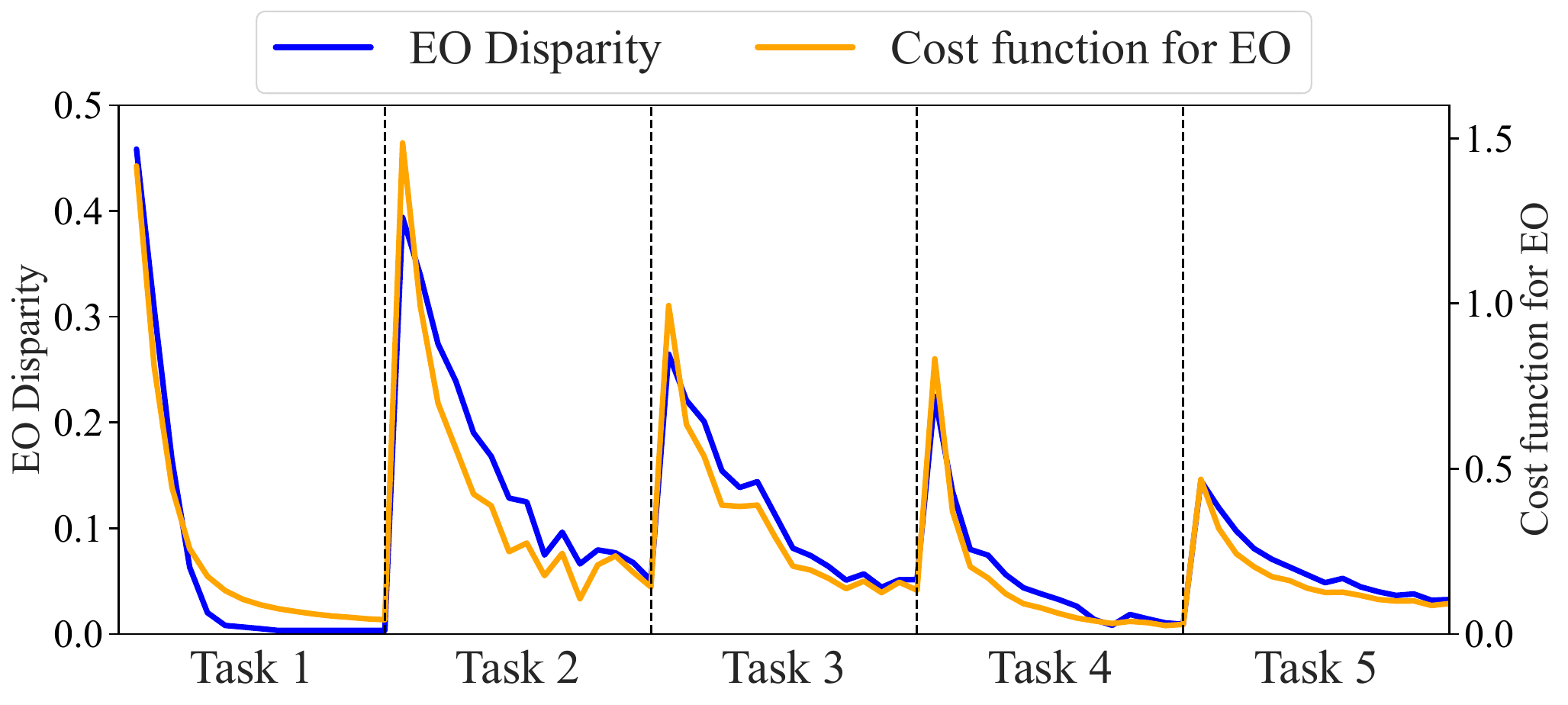}
  \caption{Comparison of EO disparity and cost function for EO during training on the Biased MNIST dataset. We train a model for 15 epochs per task.}
  \label{fig:CE_01}
\end{figure}

We can also extend our theoretical claim to the cost function for EER and DP. In case of EER, $\ell(f_{\theta}, G_y)$ and $\ell(f_{\theta}, G_{\mathbb{Y}})$ are unbiased estimators of $- \log p(\hat{\text{y}}|\text{y} = y)$ and $-\log p(\hat{\text{y}})$, respectively. 
Similarly, for DP, $\ell'(f_{\theta}, G_{y, z})$ and $\ell'(f_{\theta}, G_y)$ are unbiased estimators of $-\log p(\hat{\text{y}}|\text{z}=z)$ and $-\log p(\hat{\text{y}})$, respectively.

\subsection{Derivation of a Sufficient Condition for Demographic Parity in the Multi-Class Setting}
\label{appendix:multi-dp}
Continuing from Sec.~\ref{para:DP}, we derive a sufficient condition for satisfying demographic parity in the multi-class setting.

\begin{proposition}
In the multi-class setting, $\frac{m_{y,z_1}}{m_{*,z_1}} \ell(f_{\theta}, G_{y, z_1}) = \frac{m_{y,z_2}}{m_{*,z_2}} \ell(f_{\theta}, G_{y, z_2})$ where $m_{y,z} := |\{i: \text{y}_i = y, \text{z}_i = z\}|$ and $m_{*,z} := |\{i: \text{z}_i = z\}|$ for $y \in \mathbb{Y}$ and $z_1, z_2 \in \mathbb{Z}$ can serve as a sufficient condition for demographic parity.
\end{proposition}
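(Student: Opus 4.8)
The plan is to show that the stated equality, read through the cross-entropy-loss proxy established in Sec.~\ref{appendix:CE-as-approximator}, forces the per-class positive prediction rates to coincide across sensitive groups, which is precisely the multi-class demographic parity requirement $\Pr(\hat{\text{y}}=y\mid\text{z}=z_1)=\Pr(\hat{\text{y}}=y\mid\text{z}=z_2)$. I would start from this DP definition and expand the group-marginal prediction rate by conditioning on the true label:
\[
\Pr(\hat{\text{y}}=y\mid\text{z}=z)=\sum_{y'\in\mathbb{Y}}\Pr(\hat{\text{y}}=y\mid\text{y}=y',\text{z}=z)\,\Pr(\text{y}=y'\mid\text{z}=z).
\]
Identifying the empirical class proportion $\Pr(\text{y}=y'\mid\text{z}=z)=m_{y',z}/m_{*,z}$, the diagonal term ($y'=y$) equals $\frac{m_{y,z}}{m_{*,z}}\Pr(\hat{\text{y}}=y\mid\text{y}=y,\text{z}=z)$, which exposes exactly the weighting $m_{y,z}/m_{*,z}$ appearing in the proposition.

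Next I would invoke the proxy established in Sec.~\ref{appendix:CE-as-approximator}: the group loss $\ell(f_\theta,G_{y,z})$ estimates the correct-prediction log-confidence $-\log\Pr(\hat{\text{y}}=y\mid\text{y}=y,\text{z}=z)$, so that $\ell'(f_\theta,G_{y,z})=\frac{m_{y,z}}{m_{*,z}}\ell(f_\theta,G_{y,z})$ serves as the surrogate for $-\log\Pr(\hat{\text{y}}=y\mid\text{z}=z)$. Under this identification, imposing $\frac{m_{y,z_1}}{m_{*,z_1}}\ell(f_\theta,G_{y,z_1})=\frac{m_{y,z_2}}{m_{*,z_2}}\ell(f_\theta,G_{y,z_2})$ for every class $y$ and every pair $z_1,z_2$ equates the surrogate prediction rates; since $-\log$ is strictly monotone, this yields $\Pr(\hat{\text{y}}=y\mid\text{z}=z_1)=\Pr(\hat{\text{y}}=y\mid\text{z}=z_2)$ and hence demographic parity. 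This mirrors the binary FairBatch condition of \citep{DBLP:conf/iclr/Roh0WS21}, now enforced class-wise to cover the multi-class setting.

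The main obstacle is justifying why matching these reweighted losses is genuinely \emph{sufficient} rather than merely indicative, and here the argument is unavoidably a proxy rather than an exact implication. Two gaps must be isolated and controlled. First, the off-diagonal terms $\sum_{y'\neq y}\Pr(\hat{\text{y}}=y\mid\text{y}=y',\text{z}=z)\,\Pr(\text{y}=y'\mid\text{z}=z)$ are discarded; I would argue they are negligible for a reasonably accurate classifier, so the diagonal term dominates the group-marginal rate. Second, scaling a logarithmic loss by the class ratio $m_{y,z}/m_{*,z}$ is not literally the logarithm of the reweighted marginal probability, so reading $\ell'(f_\theta,G_{y,z})$ as $-\log\Pr(\hat{\text{y}}=y\mid\text{z}=z)$ commutes the ratio weighting with the log only approximately. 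I would therefore present the result as a sufficient condition in the same empirical-proxy sense as Proposition~\ref{prop1}, citing the experiment in Sec.~\ref{appendix:CE-as-approximator} (Fig.~\ref{fig:CE_01}), where the loss-based cost function is shown to track the true fairness disparity.
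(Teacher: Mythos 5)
Your opening decomposition coincides with the paper's own first step: both expand $\Pr(\hat{\text{y}}=y\mid\text{z}=z)$ over the true label and identify the class proportions $m_{y,z}/m_{*,z}$. From there, however, the paper's proof (Sec.~\ref{appendix:multi-dp}) takes a tighter route, and your version has a genuine gap exactly where you flag it. The paper never discards the off-diagonal terms. It represents each of them with the 0-1 loss, $\Pr(\hat{\text{y}}=y,\text{y}=y_n\mid\text{z}=z)=\frac{1}{m_{*,z}}\sum_{j:y_j=y_n,z_j=z}\mathbbm{1}(y_j\neq\hat{y}_j)$, so that after substituting a differentiable loss the DP requirement for class $y$ becomes a linear identity among weighted group losses,
\[
-\frac{m_{y,z_1}}{m_{*,z_1}}\ell(f_\theta,G_{y,z_1})+\sum_{y_n\neq y}\frac{m_{y_n,z_1}}{m_{*,z_1}}\ell(f_\theta,G_{y_n,z_1})
=-\frac{m_{y,z_2}}{m_{*,z_2}}\ell(f_\theta,G_{y,z_2})+\sum_{y_n\neq y}\frac{m_{y_n,z_2}}{m_{*,z_2}}\ell(f_\theta,G_{y_n,z_2}).
\]
The off-diagonal term involving true label $y_n$ is, up to sign, exactly the diagonal quantity of class $y_n$, so imposing the proposed equality for \emph{every} $y\in\mathbb{Y}$ matches all terms, diagonal and off-diagonal, in every one of these equations simultaneously. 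That term-by-term matching is the key idea your proposal is missing: sufficiency follows from linearity of the decomposition, not from the off-diagonal contributions being negligible, and it requires no assumption that the classifier is accurate.

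The second gap you flag is also real, and the paper avoids it by the ordering of its approximations. Because $\ell$ is a log-type loss, identifying $\frac{m_{y,z}}{m_{*,z}}\ell(f_\theta,G_{y,z})$ with $-\log\Pr(\hat{\text{y}}=y\mid\text{z}=z)$ and exponentiating equates quantities of the form $\Pr(\hat{\text{y}}=y\mid\text{y}=y,\text{z}=z)^{m_{y,z}/m_{*,z}}$ -- a geometric weighting -- rather than the arithmetic weighting $\frac{m_{y,z}}{m_{*,z}}\Pr(\hat{\text{y}}=y\mid\text{y}=y,\text{z}=z)$ that actually appears in the DP decomposition; strict monotonicity of $-\log$ cannot repair this mismatch. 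The paper sidesteps it entirely by carrying out all probability algebra exactly with the 0-1 loss, which is linear in the prediction events, and only at the final step replacing the 0-1 loss by a standard differentiable loss, so that the sole approximation is localized to that substitution. Your argument would essentially become the paper's if you kept the off-diagonal terms, expressed everything via 0-1 losses, and deferred the cross-entropy surrogate to the end.
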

\begin{proof}
In the multi-class setting, we can extend the definition of demographic parity as $\Pr(\hat{\text{y}} = y|\text{z}=z_1) = \Pr(\hat{\text{y}} = y|\text{z}=z_2)$ for $y \in \mathbb{Y}$ and $z_1, z_2 \in \mathbb{Z}$. The term $\Pr(\hat{\text{y}} = y|\text{z}=z)$ can be decomposed as follows: $\Pr(\hat{\text{y}} = y|\text{z}=z) = \Pr(\hat{\text{y}} = y, \text{y}=y|\text{z}=z) + \sum_{y_n \neq y} \Pr(\hat{\text{y}} = y, \text{y}=y_n|\text{z}=z)$. Without loss of generality, we set $z_1 = 0$ and $z_2 = 1$. Then the definition of demographic parity in the multi-class setting now becomes
\begin{equation*}
\begin{aligned}
&\Pr(\hat{\text{y}} = y, \text{y}=y|\text{z}=0) + \sum_{y_n \neq y} \Pr(\hat{\text{y}} = y, \text{y}=y_n|\text{z}=0) \\ 
=&\Pr(\hat{\text{y}} = y, \text{y}=y|\text{z}=1) + \sum_{y_n \neq y} \Pr(\hat{\text{y}} = y, \text{y}=y_n|\text{z}=1).
\end{aligned}
\end{equation*}
The term $\Pr(\hat{\text{y}} = y, \text{y}=y|\text{z}=0)$ can be represented with the 0-1 loss as follows:
\begin{equation*}
\begin{aligned}
\Pr(\hat{\text{y}} = y, \text{y}=y|\text{z}=0) &= \frac{\Pr(\hat{\text{y}} = y, \text{y}=y, \text{z}=0)}{\Pr(\text{z}=0)} \\
&= \frac{\Pr(\hat{\text{y}} = y | \text{y}=y, \text{z}=0) \Pr(\text{y}=y, \text{z}=0)}{\Pr(\text{z}=0)} \\
&= \frac{1}{m_{*,0}} \sum_{i:y_i=y,z_i=0} (1 - \mathbbm{1}(y_i \neq \hat{y}_i)),
\end{aligned}
\end{equation*}
where $\mathbbm{1}(\cdot)$ is an indicator function. Similarly, $\Pr(\hat{\text{y}} = y, \text{y}=y_n|\text{z}=0)$ for $y_n \neq y$ can be transformed as follows: 
\begin{equation*}
\begin{aligned}
\Pr(\hat{\text{y}} = y, \text{y}=y_n|\text{z}=0) &= \frac{\Pr(\hat{\text{y}} = y, \text{y}=y_n, \text{z}=0)}{\Pr(\text{z}=0)} \\
&= \frac{\Pr(\hat{\text{y}} = y | \text{y}=y_n, \text{z}=0) \Pr(\text{y}=y_n, \text{z}=0)}{\Pr(\text{z}=0)} \\
&= \frac{1}{m_{*,0}} \sum_{j:y_j=y_n,z_j=0} \mathbbm{1}(y_j \neq \hat{y}_j).
\end{aligned}
\end{equation*}
By applying the same technique to $\Pr(\hat{\text{y}} = y, \text{y}=y|\text{z}=1)$ and $\Pr(\hat{\text{y}} = y, \text{y}=y_n|\text{z}=1)$, we have the 0-1 loss-based definition of demographic parity:
\begin{equation*}
\begin{aligned}
\quad&\frac{1}{m_{*,0}} \sum_{i:y_i=y,z_i=0} (1 - \mathbbm{1}(y_i \neq \hat{y}_i)) + \sum_{i:y_i \neq y} \frac{1}{m_{*,0}} \sum_{j:y_j= y_i,z_j=0} \mathbbm{1}(y_j \neq \hat{y}_j) \\
=\;&\frac{1}{m_{*,1}} \sum_{i:y_i=y,z_i=1} (1 - \mathbbm{1}(y_i \neq \hat{y}_i)) + \sum_{i:y_i \neq y} \frac{1}{m_{*,1}} \sum_{j:y_j= y_i,z_j=1} \mathbbm{1}(y_j \neq \hat{y}_j).
\end{aligned}
\end{equation*}
Since the 0-1 loss is not differentiable, it is not suitable to approximate the loss using gradients as in Eq.~\ref{eq:loss_update}. We thus approximate the 0-1 loss to a standard loss function $\ell$ (e.g., cross-entropy loss), 

\begin{equation*}
\begin{aligned}
\quad&\frac{1}{m_{*,0}} \sum_{i:y_i=y,z_i=0} -\ell(f_{\theta}, d_i) + \sum_{i:y_i \neq y} \frac{1}{m_{*,0}} \sum_{j:y_j= y_i,z_j=0} \ell(f_{\theta}, d_j) \\
=\;&\frac{1}{m_{*,1}} \sum_{i:y_i=y,z_i=1} -\ell(f_{\theta}, d_i) + \sum_{i:y_i \neq y} \frac{1}{m_{*,1}} \sum_{j:y_j= y_i,z_j=1} \ell(f_{\theta}, d_j),
\end{aligned}
\end{equation*}
where $\ell(f_{\theta}, d_j)$ is the loss between the model prediction $f_{\theta}(d_j)$ and the true label $y_j$. By replacing $\sum_{i:y_i=y,z_i=z} \ell(f_{\theta}, d_i)$ with $m_{y,z} \ell(f_{\theta}, G_{y,z})$,
\begin{equation*}
\begin{aligned}
\quad&\frac{m_{y,0}}{m_{*,0}} (-\ell(f_{\theta}, G_{y,0})) + \sum_{i:y_i \neq y} \frac{m_{y_i,0}}{m_{*,0}} \ell(f_{\theta}, G_{y_i, 0}) \\
=\;& \frac{m_{y,1}}{m_{*,1}} (-\ell(f_{\theta}, G_{y,1})) + \sum_{i:y_i \neq y} \frac{m_{y_i,1}}{m_{*,1}} \ell(f_{\theta}, G_{y_i, 1}).
\end{aligned}
\end{equation*}
To satisfy the constraint for all $y \in \mathbb{Y}$, the corresponding terms on the left-hand side and the right-hand side of the equation should be equal, i.e., $\frac{m_{y,0}}{m_{*,0}} \ell(f_{\theta}, G_{y,0}) = \frac{m_{y,1}}{m_{*,1}} \ell(f_{\theta}, G_{y,1})$. In general, we derive a sufficient condition for demographic parity as $\frac{m_{y,z_1}}{m_{*,z_1}} \ell(f_{\theta}, G_{y, z_1}) = \frac{m_{y,z_2}}{m_{*,z_2}} \ell(f_{\theta}, G_{y, z_2})$. Note that the number of samples in sensitive groups (e.g., $m_{*, z}$ and $m_{y, z}$) is derived from the definition of demographic parity, which is independent of sample weights.
\end{proof}

\subsection{LP Formulation of Fairness-aware Optimization Problems}\label{appendix:Fairness_LP}
Continuing from Sec.~\ref{para:optim_to_lp}, we prove Theorem~\ref{thm:OptimToLP}, which implies that fairness-aware optimization problems can be transformed into linear programming problems. This transformation is made possible by using Lemma~\ref{lem:min_absolute_with_linear}, which suggests that minimizing the sum of absolute values with linear terms can be transformed into a linear programming form.

\begin{lemma}\label{lem:min_absolute_with_linear} 
The following optimization problem can be reformulated into a linear programming form. Note that in the following equation, y and z refer to arbitrary variables, not to the label or sensitive attribute, respectively.
\begin{gather*}
\min_{\mathbf{x}}\sum_{i=1}^n |y_i| + z_i \\
\begin{aligned}
    \mathit{s.t.}& \quad y_i = a_i - {\mathbf{b}_i^\top} \mathbf{x}, \quad z_i = c_i - {\mathbf{d}_i^\top} \mathbf{x}
    \\& a_i, c_i, y_i, z_i \in \mathbb{R}, \,\, \mathbf{b}_i, \mathbf{d}_i \in {\mathbb{R}^{m\times1}}, \,\, \forall i \in \{1, \ldots, n\}, \,\, \mathbf{x} \in {[0, 1]^{m\times1}}.
\end{aligned}
\end{gather*}
\end{lemma}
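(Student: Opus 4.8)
The plan is to remove the only source of nonlinearity — the absolute-value terms $|y_i|$ — by the standard epigraph reformulation, leaving a problem that is manifestly a linear program. First I would substitute the equality constraints directly into the objective, so that each $y_i = a_i - \mathbf{b}_i^\top \mathbf{x}$ and $z_i = c_i - \mathbf{d}_i^\top \mathbf{x}$ is expressed as an explicit affine function of the decision variable $\mathbf{x}$. The linear part of the objective, $\sum_{i=1}^n z_i = \sum_{i=1}^n (c_i - \mathbf{d}_i^\top \mathbf{x})$, needs no further treatment because it is already affine in $\mathbf{x}$, and the box constraint $\mathbf{x} \in [0,1]^m$ is simply the collection of linear inequalities $0 \le x_j \le 1$.

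Next I would introduce, for each $i \in \{1,\dots,n\}$, an auxiliary scalar $t_i \in \mathbb{R}$ together with the two linear inequalities
\[
-t_i \le a_i - \mathbf{b}_i^\top \mathbf{x} \le t_i, \qquad i = 1,\dots,n,
\]
and replace every $|y_i|$ in the objective by $t_i$. Collecting the $t_i$ into a vector $\mathbf{t} = (t_1,\dots,t_n)^\top$, the reformulated program
\[
\min_{\mathbf{x},\,\mathbf{t}} \ \sum_{i=1}^n \bigl( t_i + c_i - \mathbf{d}_i^\top \mathbf{x} \bigr) \quad \text{s.t.} \quad -t_i \le a_i - \mathbf{b}_i^\top \mathbf{x} \le t_i,\ \ \mathbf{x} \in [0,1]^m
\]
has a linear objective and only linear constraints in the combined variable $(\mathbf{x},\mathbf{t})$, and is therefore a linear program by definition.

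The one step that requires an argument rather than mechanical rewriting is verifying that this reformulation is \emph{equivalent} to the original problem, i.e.\ that the optimum satisfies $t_i = |a_i - \mathbf{b}_i^\top \mathbf{x}| = |y_i|$. I would argue this by observing that the pair of constraints forces $t_i \ge |y_i|$, while the objective is strictly increasing in each $t_i$ (it appears with coefficient $+1$ and is otherwise unbounded above); hence at any minimizer each $t_i$ is driven down to its lower bound $|y_i|$, recovering the original objective value for every feasible $\mathbf{x}$. Since the feasible region in $\mathbf{x}$ is left unchanged, the two problems share the same optimal $\mathbf{x}$ and the same optimal value. I do not expect a genuine obstacle here: the only care needed is to confirm the monotonicity of the objective in $\mathbf{t}$ so that the slack variables are tight at optimality, which is immediate from the sign of their coefficients.
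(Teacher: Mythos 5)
Your proof is correct, but it takes a genuinely different route from the paper's. You use the epigraph (slack-variable) reformulation: introduce $t_i$ with the pair of inequalities $-t_i \le a_i - \mathbf{b}_i^\top \mathbf{x} \le t_i$, replace $|y_i|$ by $t_i$, and argue tightness at optimality from the monotonicity of the objective in each $t_i$. The paper instead uses the classical variable-splitting technique: write $y_i = y_i^+ - y_i^-$ with $y_i^+, y_i^- \ge 0$, replace $|y_i|$ by $y_i^+ + y_i^-$, and then show that the complementarity condition $y_i^+ y_i^- = 0$ (which is what makes the substitution valid) can be dropped, because any solution violating it can be strictly improved by the exchange $(y_i^+, y_i^-) \mapsto (y_i^+ - y_i^-, 0)$ or $(0, y_i^- - y_i^+)$, so minimization enforces it automatically. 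The two constructions are duals of each other in spirit: yours adds $n$ variables and $2n$ inequalities, the paper's adds $2n$ nonnegative variables and keeps equalities; your equivalence argument (slack variables are driven to their lower bounds) is arguably the more direct of the two, while the paper's follows the classical LP literature it cites and requires the slightly more delicate exchange argument. Both correctly establish the lemma, and both handle the affine $z_i$ part trivially since it is already linear in $\mathbf{x}$.
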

\begin{proof}
The transformation for minimizing the sum of absolute values was introduced in\,\citep{robert1958linear, mccarl1997applied, math10020283}. Note that considering the additional affine term does not affect the flow of the proof. We first substitute $y_i$ for $y_i^+ - y_i^-$ where both $y_i^+$ and $y_i^-$ are nonnegative. Then, the optimization problem becomes
\begin{gather*}
\min_{\mathbf{x}}\sum_{i=1}^n|y_i^+ - y_i^-| + z_i \\
\begin{aligned}
    \mathit{s.t.}& \quad y_i^+ - y_i^- = a_i - {\mathbf{b}_i^\top} \mathbf{x}, \quad z_i = c_i - {\mathbf{d}_i^\top} \mathbf{x}, 
    \\ & y_i^+ - y_i^- = y_i, \quad y_i^+, y_i^- \in \mathbb{R}^+, 
    \\ & a_i, c_i, y_i, z_i \in \mathbb{R}, \,\, \mathbf{b}_i, \mathbf{d}_i \in {\mathbb{R}^{m\times1}}, \,\, \forall i \in \{1, \ldots, n\}, \,\, \mathbf{x} \in {[0, 1]^{m\times1}}.
\end{aligned}
\end{gather*}
This problem is still nonlinear. However, the absolute value terms can be simplified when either $y_i^+$ or $y_i^-$ equals to zero (i.e., $y_i^+y_i^-=0$), as the consequent absolute value reduces to zero plus the other term. Then, the absolute value term can be written as the sum of two variables,
\begin{gather*}
    |y_i^+ - y_i^-| = |y_i^+| + |y_i^-| = y_i^+ + y_i^- \quad \text{if} \quad y_i^+y_i^-=0.
\end{gather*}
By using the assumption, the formulation becomes
\begin{gather*}
\min_{\mathbf{x}}\sum_{i=1}^n y_i^+ + y_i^- + z_i \\
\begin{aligned}
    \mathit{s.t.}& \quad y_i^+ - y_i^- = a_i - {\mathbf{b}_i^\top} \mathbf{x}, \quad z_i = c_i - {\mathbf{d}_i^\top} \mathbf{x}, 
    \\ & y_i^+ - y_i^- = y_i, \quad \underline{y_i^+ y_i^- = 0}, \quad y_i^+, y_i^- \in \mathbb{R}^+, 
    \\ & a_i, c_i, y_i, z_i \in \mathbb{R}, \,\, \mathbf{b}_i, \mathbf{d}_i \in {\mathbb{R}^{m\times1}}, \,\, \forall i \in \{1, \ldots, n\}, \,\, \mathbf{x} \in {[0, 1]^{m\times1}},
\end{aligned}
\end{gather*}
with the underlined condition added. However, this condition can be dropped. Assume there exist $y_i^+$ and $y_i^-$, which do not satisfy $y_i^+ y_i^- = 0$. When $y_i^+ \geq y_i^- > 0$, there exists a better solution ($y_i^+ - y_i^-$, $0$) instead of ($y_i^+$, $y_i^-$), which satisfies all the conditions, but has a smaller objective function value $y_i^+ - y_i^- + 0 + z_i < y_i^+ + y_i^- + z_i$. For the case of $y_i^- > y_i^+ > 0$, a solution ($0$, $y_i^- - y_i^+$) works better for similar reasons. Thus, the minimization automatically leads to $y_i^+ y_i^- = 0$, and the underlined nonlinear constraint becomes unnecessary. Consequently, the final formulation becomes LP:
\begin{gather*}
\min_{\mathbf{x}}\sum_{i=1}^n y_i^+ + y_i^- + z_i \\
\begin{aligned}
    \mathit{s.t.}& \quad y_i^+ - y_i^- = a_i - {\mathbf{b}_i^\top} \mathbf{x}, \quad z_i = c_i - {\mathbf{d}_i^\top} \mathbf{x}, 
    \\ & y_i^+ - y_i^- = y_i  y_i^+, \quad y_i^- \in \mathbb{R}^+,
    \\ & a_i, c_i, y_i, z_i \in \mathbb{R}, \,\, \mathbf{b}_i, \mathbf{d}_i \in {\mathbb{R}^{m\times1}}, \,\, \forall i \in \{1, \ldots, n\}, \,\, \mathbf{x} \in {[0, 1]^{m\times1}}.
\end{aligned}
\end{gather*}
\end{proof}
Applying Lemma~\ref{lem:min_absolute_with_linear}, we next prove Theorem~\ref{thm:OptimToLP}.
By using the result of Theorem~\ref{thm:OptimToLP}, we show that the fairness-aware optimization problems, where the objective function includes both fairness ($\mathcal{L}_{fair}$) and accuracy ($\mathcal{L}_{acc}$) losses, can be transformed into linear programming (LP) problems.

\begin{theorem}[Restated from Theorem~\ref{thm:OptimToLP}]\label{thm:OptimToLP_appendix}
The fairness-aware optimization problems (Eqs.~\ref{eq:eer},~\ref{eq:eo}, and~\ref{eq:dp}) can be formulated as linear programming (LP) problems.
\end{theorem}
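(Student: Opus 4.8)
The plan is to reduce each of the three optimization problems to the canonical form of Lemma~\ref{lem:min_absolute_with_linear}, whose conclusion is precisely that such a problem admits an LP reformulation. The entire argument rests on a single structural observation: by Lemma~\ref{lem:CF}, the approximated group loss $\tilde{\ell}(f_\theta, G)$ is an affine function of the sample weights $\mathbf{w}_l$, since $\ell(f_\theta^{l-1}, G)$ and the gradient inner products are constants evaluated at the fixed previous model $f_\theta^{l-1}$, and $\mathbf{w}_l$ enters only linearly through $\frac{1}{|T_l|}\sum_{d_i \in T_l} \mathbf{w}_l^i \nabla_\theta \ell(f_\theta^{l-1}, d_i)$. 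I would begin by writing this explicitly as $\tilde{\ell}(f_\theta, G) = a_G - \mathbf{b}_G^\top \mathbf{w}_l$ for a suitable constant scalar $a_G$ and constant vector $\mathbf{b}_G \in \mathbb{R}^{|T_l|}$, so that I can later read off the coefficients $a_i, \mathbf{b}_i, c_i, \mathbf{d}_i$ needed to invoke the lemma.

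Second, I would verify that every aggregated and reweighted loss appearing in the three objectives inherits this affine structure. The reference losses $\tilde{\ell}(f_\theta, G_\mathbb{Y})$ (for EER) and $\tilde{\ell}(f_\theta, G_y)$ (for EO) are by definition uniform averages of per-group losses, hence affine as finite linear combinations of affine functions. For DP, the rescaled loss $\tilde{\ell}'(f_\theta, G_{y,z}) = \frac{m_{y,z}}{m_{*,z}}\tilde{\ell}(f_\theta, G_{y,z})$ multiplies an affine function by the constant $\frac{m_{y,z}}{m_{*,z}}$, which is independent of $\mathbf{w}_l$ (as noted in Sec.~\ref{appendix:multi-dp}), and $\tilde{\ell}'(f_\theta, G_y)$ is again a fixed weighted average of such terms; both therefore remain affine. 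Consequently each difference inside an absolute value, e.g.\ $\tilde{\ell}(f_\theta, G_y) - \tilde{\ell}(f_\theta, G_\mathbb{Y})$, is affine in $\mathbf{w}_l$.

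Third, I would match each objective to the template of Lemma~\ref{lem:min_absolute_with_linear}. The fairness term $\mathcal{L}_{fair}$ in each problem is a constant-scaled sum of absolute values of affine functions, so each summand corresponds to a $|y_i|$ term with $y_i = a_i - \mathbf{b}_i^\top \mathbf{w}_l$ the relevant affine difference; the accuracy regularizer $\lambda \mathcal{L}_{acc}$ is a sum of affine functions $\tilde{\ell}(f_\theta, G_y)$ or $\tilde{\ell}(f_\theta, G_{y,z})$, matching the $z_i = c_i - \mathbf{d}_i^\top \mathbf{w}_l$ terms. The differing cardinalities of the fairness and accuracy index sets (e.g.\ $\mathbb{Y}$ versus $\mathbb{Y}_c$ in EER) are immaterial, since the pairing of $|y_i|$ with $z_i$ in the lemma is only notational and the two sums may be handled independently; formally one may set $c_i = 0$ and $\mathbf{d}_i = 0$ for the missing accuracy terms. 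The domain constraint $\mathbf{w}_l \in [0,1]^{|T_l|}$ is exactly the box constraint $\mathbf{x} \in [0,1]^{m\times 1}$ in the lemma. Absorbing the positive normalization constants into the coefficients, each of Eqs.~\ref{eq:eer},~\ref{eq:eo}, and~\ref{eq:dp} then instantiates the hypothesis of Lemma~\ref{lem:min_absolute_with_linear}, and its LP reformulation follows immediately.

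I expect the main obstacle to be purely bookkeeping rather than conceptual: carefully tracking the index sets ($\mathbb{Y}$ versus $\mathbb{Y}_c$, and the joint index $(y,z)$) so that the affine coefficients $a_i, \mathbf{b}_i, c_i, \mathbf{d}_i$ are assembled consistently, and confirming in the DP case that the ratios $\frac{m_{y,z}}{m_{*,z}}$ are genuinely constant with respect to the optimization variable. Once the affineness of every loss term is established, the reduction to the form of Lemma~\ref{lem:min_absolute_with_linear} is mechanical, as that lemma already discharges the nontrivial step of linearizing the absolute values via the nonnegative slack decomposition $y_i = y_i^+ - y_i^-$.
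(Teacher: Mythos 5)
Your proposal is correct and follows essentially the same route as the paper's own proof: both establish via Lemma~\ref{lem:CF} that each group loss is affine in $\mathbf{w}_l$ (writing $\tilde{\ell}(f_\theta,G)=a_G-\mathbf{b}_G^\top\mathbf{w}_l$, with the DP ratios $\tfrac{m_{y,z}}{m_{*,z}}$ treated as constants), and then reduce each of Eqs.~\ref{eq:eer},~\ref{eq:eo}, and~\ref{eq:dp} case by case to the canonical form of Lemma~\ref{lem:min_absolute_with_linear}. Your handling of the index-set mismatch between the fairness terms (over $\mathbb{Y}$ or $\mathbb{Y}\times\mathbb{Z}$) and the accuracy terms (over $\mathbb{Y}_c$ or $\mathbb{Y}_c\times\mathbb{Z}$) by padding with zero coefficients is a minor bookkeeping point the paper leaves implicit, but it does not change the argument.
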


\begin{proof}
For every update of the model, the corresponding loss of each group can be approximated linearly in the same way as in Sec.~\ref{appendix:linear-approximation}: $\tilde{\ell}(f_{\theta}, G) = \ell(f_{\theta}^{l-1}, G) - \eta \nabla_{\theta} \ell(f_{\theta}^{l-1}, G)^\top \nabla_{\theta} \ell(f_{\theta}^{l-1}, T_l)$. 
With a technique of sample weighting for the current task data, $\nabla_{\theta} \ell(f_{\theta}^{l-1}, T_l)$ can be changed as $\frac{1}{|T_l|} \sum_{d_i \in T_l} \mathbf{w}_l^i \nabla_{\theta} \ell(f_{\theta}^{l-1}, d_i)$ where $\mathbf{w}_l^i$ represents a training weight for the current task sample $d_i$. 

We believe that this transformation is natural and valid, as models are generally updated using the average gradient of training data, formulated as $\frac{1}{|T_l|} \sum_{d_i \in T_l} \nabla_{\theta} \ell(f_{\theta}^{l-1}, d_i)$, and a training weight is additionally assigned to each sample for weighting. Here, $|T_l|$ is the number of samples in the current task data, which is independent of the fairness notions considered. Note that if the normalization coefficient $\frac{1}{|T_l|}$ is replaced with $\frac{1}{\sum \mathbf{w}_l^i}$, the revised equation cannot handle the case where all weights are zero. Also, our revised optimization problems of Eqs.~\ref{eq:eer},~\ref{eq:eo}, and~\ref{eq:dp} would no longer be LP. \\ 
Thus, $\tilde{\ell}(f_{\theta}, G)$ can be rewritten as follows:

\begin{equation*}
    \begin{aligned}
        \tilde{\ell}(f_{\theta}, G) 
        &= \ell(f_{\theta}^{l-1}, G) - \eta {\nabla_{\theta} \ell(f_{\theta}^{l-1}, G)^\top \biggl(\frac{1}{|T_l|} \sum_{d_i \in T_l} \mathbf{w}_l^i \nabla_{\theta} \ell(f_{\theta}^{l-1}, d_i)\biggr)}\\
        &= \ell(f_{\theta}^{l-1}, G) - {\frac{\eta}{|T_l|}\nabla_{\theta} \ell(f_{\theta}^{l-1}, G)^\top}\bigl\langle[\,\ldots,\;,\nabla_{\theta} \ell(f_{\theta}^{l-1}, d_i),\;\ldots\,]
        \\&\qquad\qquad\qquad\qquad\qquad\qquad\qquad,[\,\ldots,\;w_l^i,\;\ldots\,]^\top\bigr\rangle\\
        &= a_G - {\mathbf{b}_G^\top} \mathbf{w},
    \end{aligned}
\end{equation*}
where $a_G:=\ell(f_{\theta}^{l-1}, G)$ and $\mathbf{b}_G:=\frac{\eta}{|T_l|}[\,\ldots,\;,\nabla_{\theta} \ell(f_{\theta}^{l-1}, d_i),\;\ldots\,]^\top \allowbreak \cdot \nabla_{\theta} \ell(f_{\theta}^{l-1}, G)$ are a constant and a vector with constants, respectively, and $\mathbf{w}:= [\,\ldots,\;w_l^i,\;\ldots\,]^\top$ is a variable where $w_l^i\in[0, 1]$.

\textbf{Case 1}. If target fairness measure is EER ($\mathcal{L}_{fair} = \mathcal{L}_{EER}$), \\
\begin{equation*}
    \begin{aligned}
        \quad&\mathcal{L}_{EER} +  \lambda \mathcal{L}_{acc} \\
        =\;& \frac{1}{|\mathbb{Y}|} \sum_{y \in \mathbb{Y}} | \tilde{\ell}(f_{\theta}, G_y) - \tilde{\ell}(f_{\theta}, G_{\mathbb{Y}}) | + \lambda  \frac{1}{|\mathbb{Y}_c|} \sum_{y \in \mathbb{Y}_c} \tilde{\ell}(f_{\theta}, G_y) \\
        =\;& \frac{1}{|\mathbb{Y}|} \sum_{y \in \mathbb{Y}} | (a_{G_y} - a_{G_\mathbb{Y}}) - {(\mathbf{b}_{G_y} - \mathbf{b}_{G_\mathbb{Y}})^\top} \mathbf{w} | \\
        +\;& \lambda  \frac{1}{|\mathbb{Y}_c|} \sum_{y \in \mathbb{Y}_c} (a_{G_y} - {\mathbf{b}_{G_y}^\top}\mathbf{w}).
    \end{aligned}
\end{equation*}

\textbf{Case 2}. If target fairness measure is EO ($\mathcal{L}_{fair} = \mathcal{L}_{EO}$), \\
\begin{equation*}
    \begin{aligned}
        \quad&\mathcal{L}_{EO} + \lambda \mathcal{L}_{acc} \\
        =\;& \frac{1}{|\mathbb{Y}| |\mathbb{Z}|} \sum_{y \in \mathbb{Y}, z \in \mathbb{Z}} |\tilde{\ell}(f_{\theta}, G_{y, z}) - \tilde{\ell}(f_{\theta}, G_{y})| \\
        +\;& \lambda  \frac{1}{|\mathbb{Y}_c||\mathbb{Z}|} \sum_{y \in \mathbb{Y}_c, z \in \mathbb{Z}} \tilde{\ell}(f_{\theta}, G_{y, z}) \\
        =\;& \frac{1}{|\mathbb{Y}| |\mathbb{Z}|} \sum_{y \in \mathbb{Y}, z \in \mathbb{Z}} | (a_{G_{y, z}} - a_{G_{y}}) - {(\mathbf{b}_{G_{y, z}} - \mathbf{b}_{G_{y}})^\top} \mathbf{w} | \\
        +\;& \lambda \frac{1}{|\mathbb{Y}_c||\mathbb{Z}|} \sum_{y \in \mathbb{Y}_c, z \in \mathbb{Z}} (a_{G_{y, z}} - {\mathbf{b}_{G_{y, z}}^\top}\mathbf{w}).
    \end{aligned}
\end{equation*}

\textbf{Case 3}. If target fairness measure is DP ($\mathcal{L}_{fair} = \mathcal{L}_{DP}$), \\
\begin{equation*}
    \begin{aligned}
        \quad&\mathcal{L}_{DP} + \lambda \mathcal{L}_{acc} \\
        =\;& \frac{1}{|\mathbb{Y}| |\mathbb{Z}|} \sum_{y \in \mathbb{Y}, z \in \mathbb{Z}} |\tilde{\ell}'(f_{\theta}, G_{y, z}) - \tilde{\ell}'(f_{\theta}, G_{y})| \\
        +\;& \lambda  \frac{1}{|\mathbb{Y}_c||\mathbb{Z}|} \sum_{y \in \mathbb{Y}_c, z \in \mathbb{Z}} \tilde{\ell}(f_{\theta}, G_{y, z}) \\
        =\;& \frac{1}{|\mathbb{Y}| |\mathbb{Z}|} \sum_{y \in \mathbb{Y}, z \in \mathbb{Z}} | (a'_{G_{y, z}} - a'_{G_{y}}) - {(\mathbf{b}'_{G_{y, z}} - \mathbf{b}'_{G_{y}})^\top} \mathbf{w} | \\
        +\;&\lambda \frac{1}{|\mathbb{Y}_c||\mathbb{Z}|} \sum_{y \in \mathbb{Y}_c, z \in \mathbb{Z}} (a_{G_{y, z}} - {\mathbf{b}_{G_{y, z}}^\top}\mathbf{w}),
    \end{aligned}
\end{equation*}
where $a'_{G_{y, z}}:=\frac{m_{y,z}}{m_{*,z}} a_{G_{y, z}}$, $a'_{G_{y}}:=\sum\limits_{z\in\mathbb{Z}}\frac{m_{y,z}}{m_{*,z}}a_{G_{y, z}}$, $\mathbf{b}'_{G_{y, z}}:=\frac{m_{y,z}}{m_{*,z}} \mathbf{b}_{G_{y, z}}$, $\mathbf{b}'_{G_{y}}:=\sum\limits_{z\in\mathbb{Z}}\frac{m_{y,z}}{m_{*,z}}\mathbf{b}_{G_{y, z}}$.

Since $a_G$ and $\mathbf{b}_G$ are composed of constant values, each equation above can be reformulated to a linear programming form by applying Lemma~\ref{lem:min_absolute_with_linear}.
\end{proof}

\paragraph{Remark (Non-triviality of the solution)}
\label{appendix:non_trivial_solution}
Although the weights $\mathbf{w}_l$ are constrained to $[0, 1]^{|T_l|}$, the optimization does not admit the trivial solution $\mathbf{w}=\mathbf{0}$ unless $\lambda=0$.
This is because the objective jointly optimizes both accuracy and fairness objectives: assigning zero weights eliminates the fairness term, while the prediction loss reduces to a constant term (the $a_G$ terms).

\paragraph{Remark (Generality of the fairness formulation)} 
\label{appendix:more_metrics}
Although we focus on EER, EO, and DP in this work, the formulation in Sec.~\ref{appendix:Fairness_LP} is not restricted to these specific metrics. More generally, any group fairness notion that can be expressed as a function of group-wise performance disparities admits a similar LP formulation, and can therefore be incorporated into our framework.

\clearpage
\newpage

\section{More Experimental Results}
\label{appendix:experiments}
\subsection{T-SNE Results for Real Datasets}
\label{subsec:tsne}
Continuing from Sec.~\ref{sec:intro}, we provide t-SNE results for real datasets to show that data overlapping between different classes also occurs in real scenarios, similar to the synthetic dataset results depicted in Fig.~\ref{fig:synthetic_dataset}. Using t-SNE, we project the high-dimensional data of the MNIST, FMNIST, Biased MNIST, and DRUG datasets into a lower-dimensional 2D space with $x_1$ and $x_2$, as shown in Fig.~\ref{fig:tsne_exp}. Since BiasBios is a text dataset that requires pre-trained embeddings to represent the data, we do not include the t-SNE results for it. In the MNIST dataset, the images with labels of 3 (red), 5 (brown), and 8 (yellow) exhibit similar characteristics and overlap, but belong to different classes. As another example, in the FMNIST dataset, the images of the classes `Sandal' (brown), `Sneaker' (gray), and `Ankel boot' (sky-blue) also have similar characteristics and overlap.

\begin{figure}[!t]
    \raggedright
    \begin{subfigure}[t]{0.4\textwidth}
        \includegraphics[width=\linewidth]{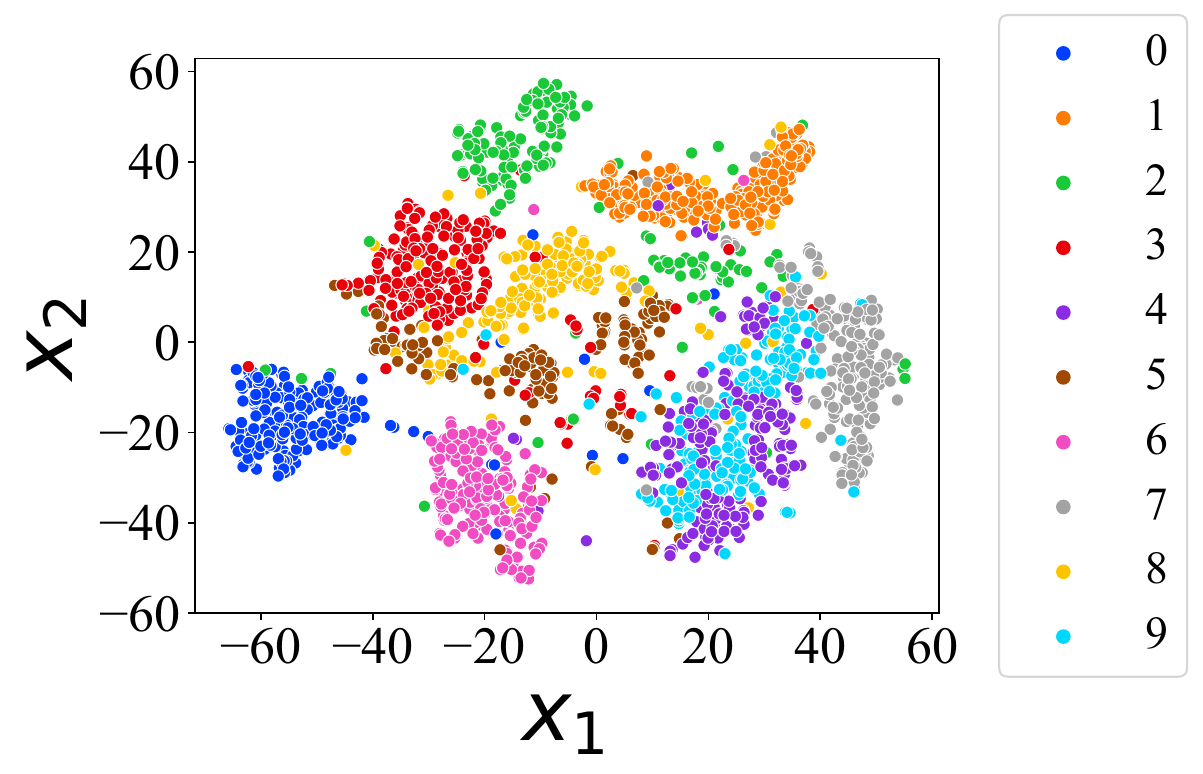}
        \caption{MNIST.}
        \label{fig:mnist_tsne}
    \end{subfigure}
    \hspace{0.03\textwidth}
    \begin{subfigure}[t]{0.47\textwidth}
        \includegraphics[width=\linewidth]{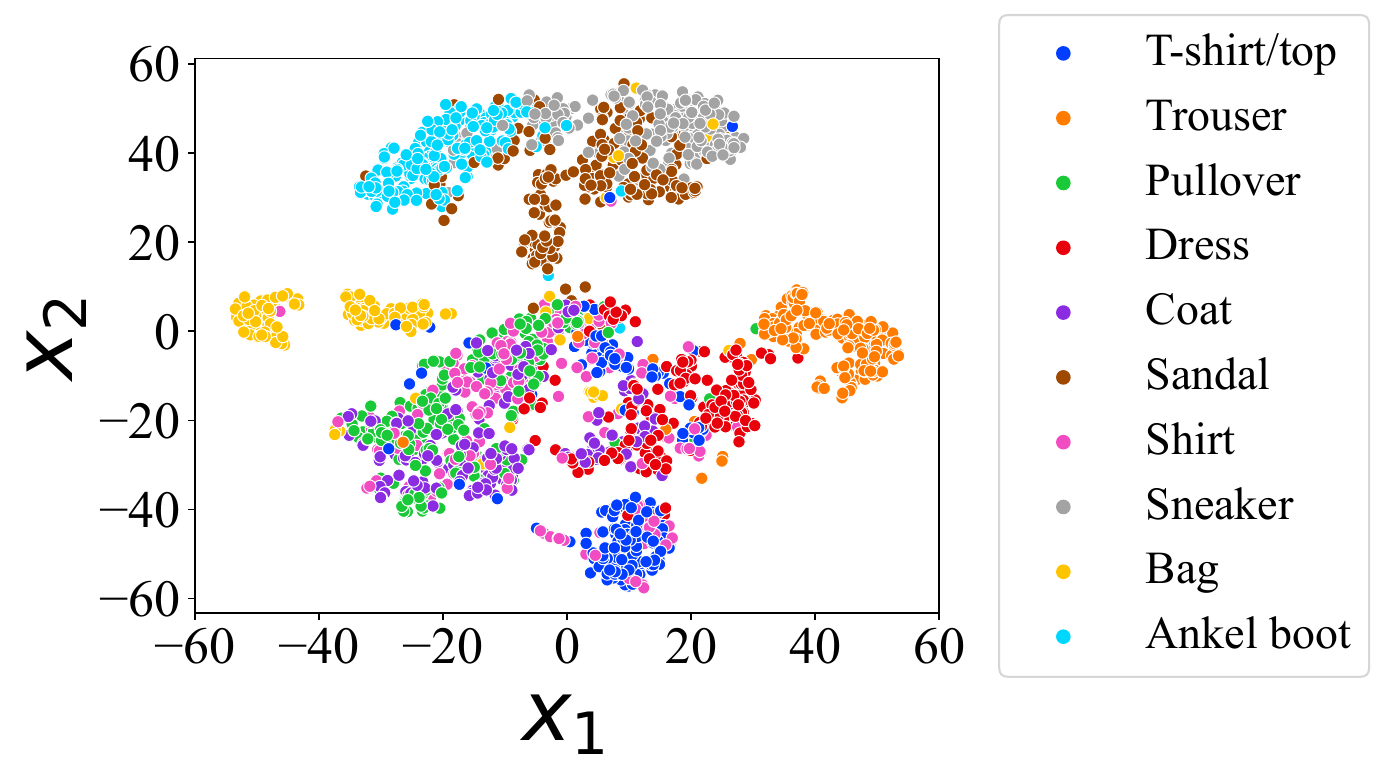}
        \caption{FMNIST.}
        \label{fig:fmnist_tsne}
    \end{subfigure}
    \begin{subfigure}[t]{0.4\textwidth}
        \includegraphics[width=\linewidth]{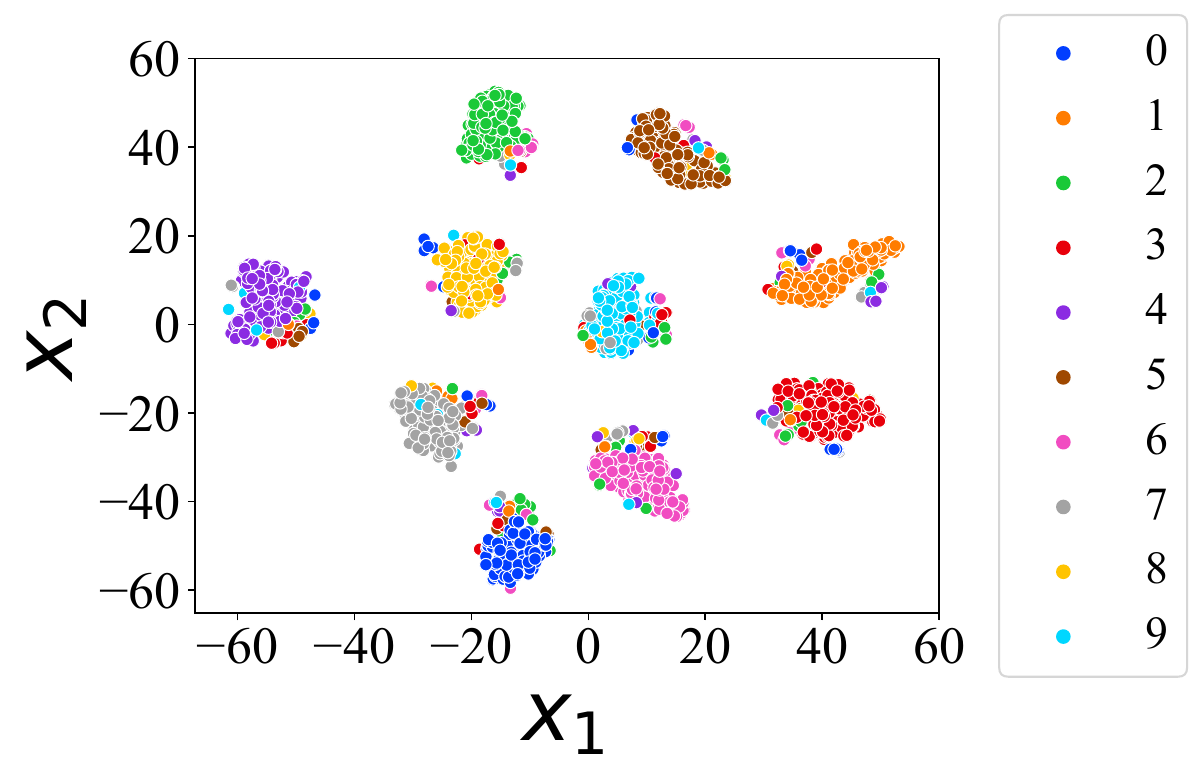}
        \caption{Biased MNIST.}
        \label{fig:biased_mnist_tsne}
    \end{subfigure}
    \hspace{0.03\textwidth}
    \begin{subfigure}[t]{0.5\textwidth}
        \includegraphics[width=\linewidth]{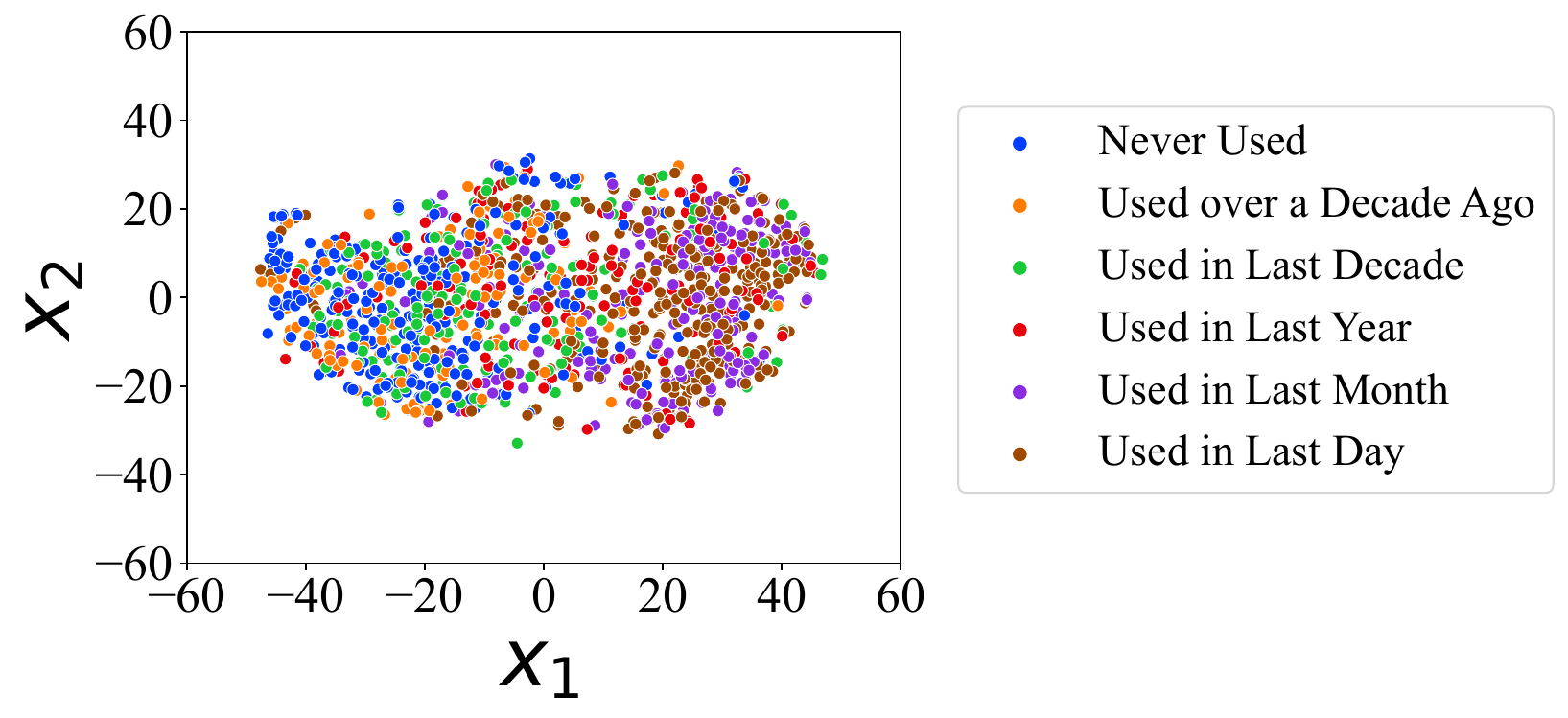}
        \caption{DRUG.}
        \label{fig:drug_tsne}
    \end{subfigure}
    \caption{t-SNE results for the MNIST, FMNIST, Biased MNIST, and DRUG datasets.}\label{fig:tsne_exp}
\end{figure}

\subsection{Approximation Error of Taylor Series}
\label{appendix:apx_error}
Continuing from Sec.~\ref{sec:unfairforgetting}, we provide empirical approximation errors between true losses and approximated losses derived from first-order Taylor series on the MNIST and Biased MNIST datasets as shown in Fig.~\ref{fig:loss_approx_error}. For each task, we train the model for 5 epochs and 15 epochs on the MNIST and Biased MNIST datasets, respectively. The approximation error is large when a new task begins because new samples with unseen classes are introduced. However, the error gradually decreases as the number of epochs increases while training a model for the task.

\begin{figure}[h]
    \centering
    \begin{subfigure}[t]{0.43\textwidth}
        \includegraphics[width=\linewidth]{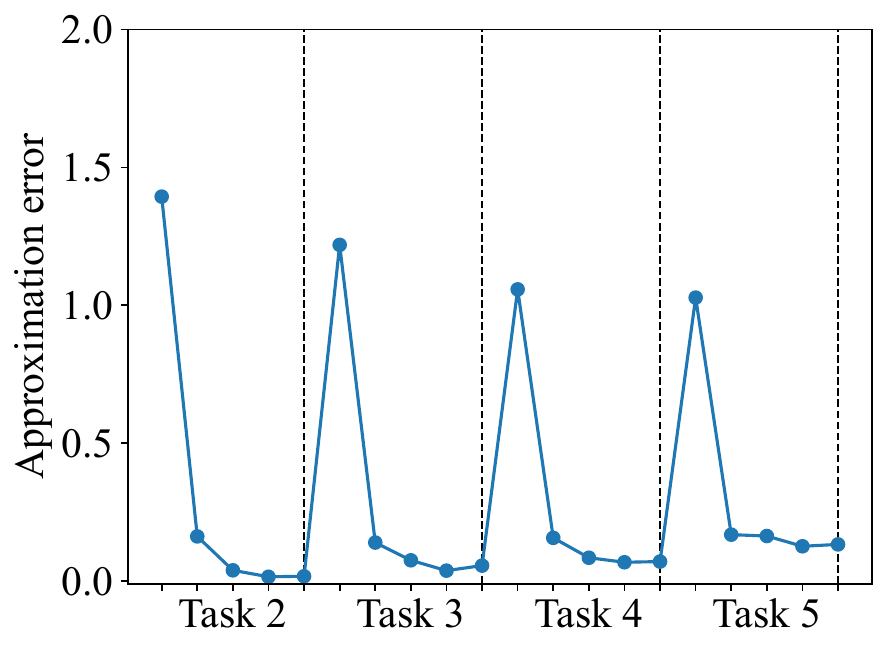}
        \caption{MNIST.}
        \label{fig:MNIST_approx_error}
    \end{subfigure}
    \begin{subfigure}[t]{0.43\textwidth}
        \includegraphics[width=\linewidth]{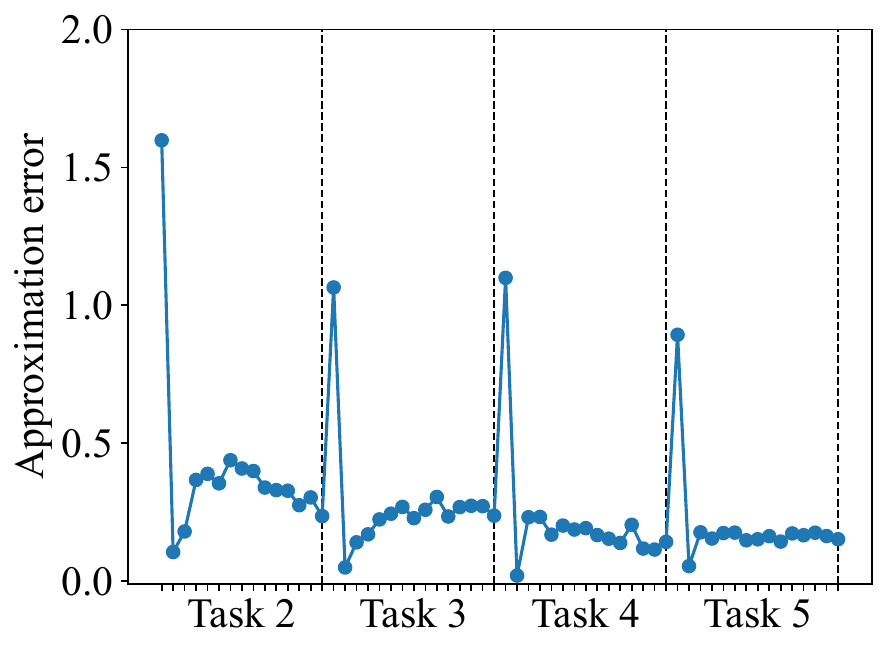}
        \caption{Biased MNIST.}
        \label{fig:Biased_MNIST_approx_error}
    \end{subfigure}
    \caption{Absolute errors between true losses and approximated losses derived from first-order Taylor series while training a model.}
    \label{fig:loss_approx_error}
\end{figure}



\subsection{Computational Complexity and Runtime Results of \method{}}
\label{subsec:runtime}

Continuing from Sec.~\ref{subsec:alg}, we provide computational complexity as shown in Fig.~\ref{fig:complexity}. Our empirical results show that for about twelve thousand current-task samples, the time to solve an LP problem is a few seconds for the MNIST dataset. By applying the log-log regression model to the results in Fig.~\ref{fig:complexity}, the computational complexity of solving LP at each epoch is $\mathcal{O}(|T_l|^{1.642})$ where $|T_l|$ denotes the number of current task samples. We note that this complexity can be quadratic in the worst case. If the task size becomes too large, we believe that clustering similar samples and assigning weights to the clusters, rather than samples, could be a solution to reduce the computational overhead. 
A more detailed discussion is provided in Sec.~\ref{appendix:reduce_overhead}.

We also show overall runtime results of \method{} using the MNIST and Biased MNIST datasets and compare the total runtime of all baselines in Fig.~\ref{fig:runtime} and Table~\ref{tbl:runtime_comparison}, respectively. 
In Fig.~\ref{fig:runtime}, we present the overall runtime of \method{} divided into three steps: Gradient Computation, CPLEX Computation, and Model Training. The total runtime of \method{} with other baselines is in Table~\ref{tbl:runtime_comparison}, and we checked that the overhead of \method{} is not excessive compared to the baselines.

\begin{figure}[h]
    \centering
    \includegraphics[width=0.4\textwidth]{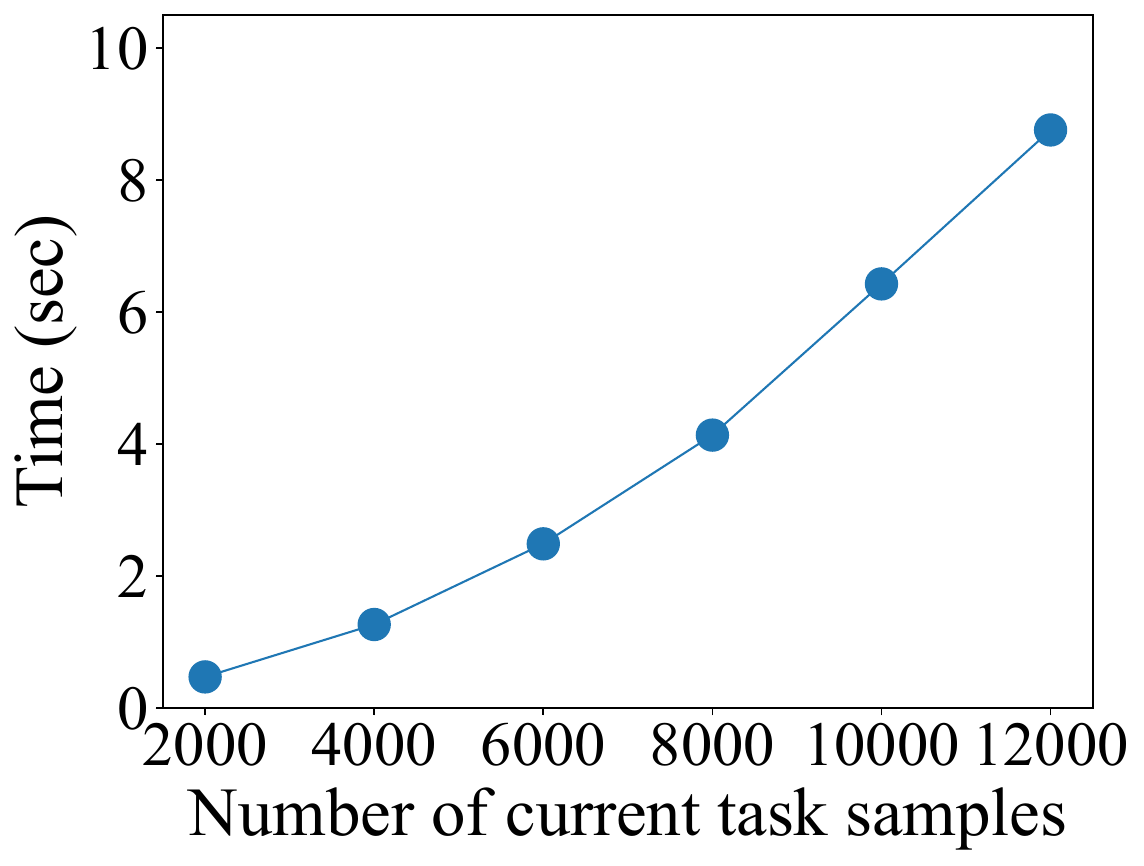}
    \caption{Runtime results of solving a single LP problem in \method{} using CPLEX for the MNIST dataset.}
    \label{fig:complexity}
\end{figure}

\begin{figure}[h]
    \centering
    \includegraphics[width=0.48\textwidth]{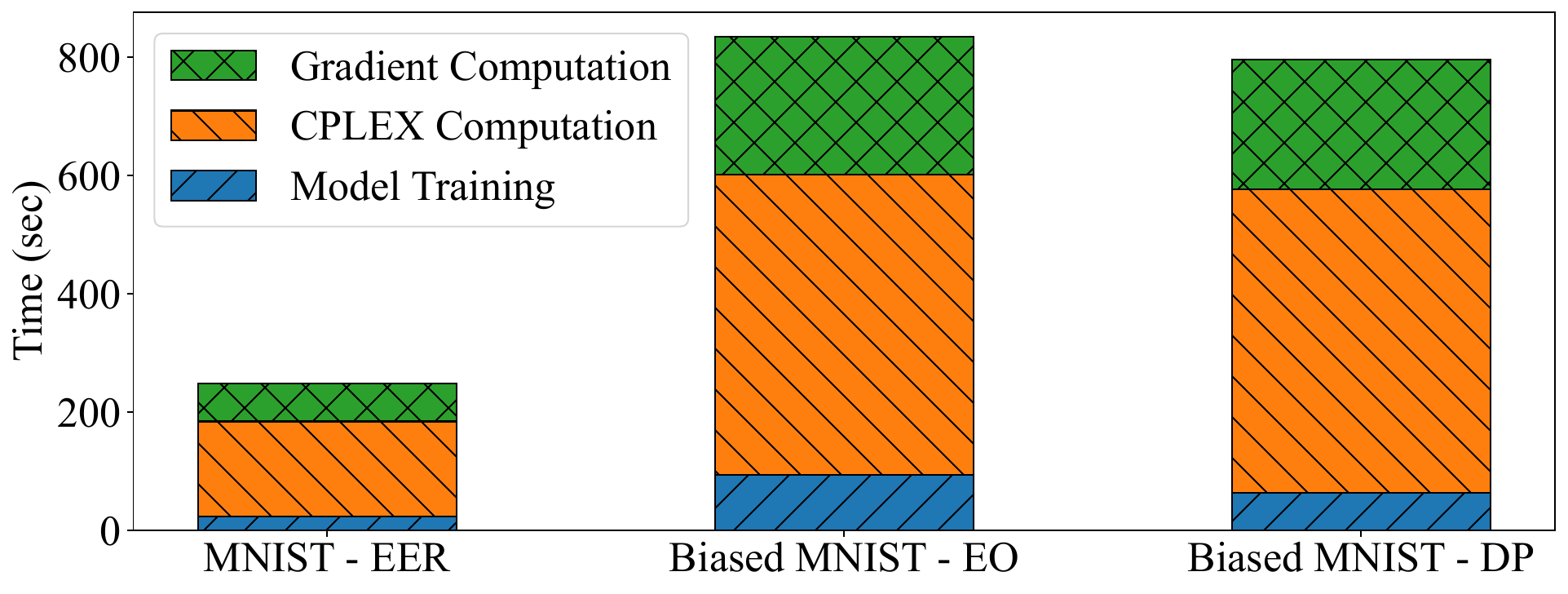}
    \caption{Overall runtime results of our framework on all tasks for three datasets: MNIST--EER, Biased MNIST--EO, and Biased MNIST--DP.}
    \label{fig:runtime}
\end{figure}

\begin{table}[h!]
    \setlength{\tabcolsep}{1.82pt}
    \caption{Overall runtime results of \method{} (w.r.t. EO Disp.) with other baselines on the Biased MNIST datasets.}
    \centering
    \begin{tabular}{l|ccccccc}
        \toprule
        {Methods} & {\method{}} & {iCaRL} & {WA} & {CLAD} & {GSS} & {OCS} & {FaIRL} \\
        \midrule
        {Time (sec)} & 835\tiny{$\pm$018} & 947\tiny{$\pm$015} & 177\tiny{$\pm$023} & 96\tiny{$\pm$018} & 5563\tiny{$\pm$069} & 2681\tiny{$\pm$074} & 5827\tiny{$\pm$228} \\
        \bottomrule
    \end{tabular}
    \label{tbl:runtime_comparison}
\end{table}

\subsection{More Details on Datasets}
\label{appendix:dataset}

Continuing from Sec.~\ref{subsubsec:dataset}, we provide more details of the two datasets using the class as the sensitive attribute and the three datasets with separate sensitive attributes. For datasets with a total of $\mathcal{C}$ classes, we divide the datasets into $L$ sequences of tasks where each task consists of $\mathcal{C}/L$ classes, and assume that task boundaries are available\,\citep{DBLP:journals/corr/abs-1904-07734}. We also consider using standard benchmark datasets in the fairness field, but they are unsuitable for class-incremental learning experiments either because there are only two classes (e.g., COMPAS\,\citep{angwin2016machine}, AdultCensus\,\citep{DBLP:conf/kdd/Kohavi96}, and Jigsaw\,\citep{jigsaw-unintended-bias-in-toxicity-classification}), or because it is difficult to apply group fairness metrics. For instance, in the case of CelebA\,\citep{DBLP:conf/iccv/LiuLWT15}, each person is considered a class, making the sensitive attribute dependent on the true label.

\paragraph{MNIST\,\citep{DBLP:journals/pieee/LeCunBBH98}} The MNIST dataset is a standard benchmark for evaluating the performance of machine learning models, especially in image classification tasks. The dataset is a collection of grayscale images of handwritten digits ranging from 0 to 9, each measuring 28 pixels in width and 28 pixels in height. The dataset consists of 60,000 training images and 10,000 test images. We configure a class-incremental learning setup, where a total of 10 classes are evenly distributed across 5 tasks, with 2 classes per task. We assume the class itself is the sensitive attribute.

\paragraph{Fashion-MNIST (FMNIST)\,\citep{DBLP:journals/corr/abs-1708-07747}} The Fashion-MNIST dataset is a specialized variant of the original MNIST dataset, designed for the classification of various clothing items into 10 distinct classes. The classes include `T-shirt/top', `Trouser', `Pullover', `Dress', `Coat', `Sandal', `Shirt', `Sneaker', `Bag', and `Ankle boot'. The dataset consists of grayscale images with dimensions of 28 pixels by 28 pixels including 60,000 training images and 10,000 test images. We configure a class-incremental learning setup, where a total of 10 classes are evenly distributed across 5 tasks, with 2 classes per task. We assume the class itself is the sensitive attribute.

\paragraph{Biased MNIST\,\citep{DBLP:conf/icml/BahngCYCO20}} The Biased MNIST dataset is a modified version of the MNIST dataset that introduces bias by incorporating background colors highly correlated with the digits. We select 10 distinct background colors and assign one to each digit from 0 to 9. For the training images, each digit is assigned the selected background color with a probability of 0.95, or one of the other colors at random with a probability of 0.05. For the test images, the background color of each digit is assigned from the selected color or other random colors with equal probability of 0.5. The dataset consists of 60,000 training images and 10,000 test images. We configure a class-incremental learning setup, where a total of 10 classes are evenly distributed across 5 tasks, with 2 classes per task. We set the background color as the sensitive attribute and consider two sensitive groups: the origin color and other random colors for each digit.

\paragraph{Drug Consumption (DRUG)\,\citep{fehrman2017factor}} The Drug Consumption dataset contains information about the usage of various drugs by individuals and correlates it with different demographic and personality traits. The dataset includes records for 1,885 respondents, each with 12 attributes including NEO-FFI-R, BIS-11, ImpSS, level of education, age, gender, country of residence, and ethnicity. We split the dataset into the ratio of 70/30 for training and testing. All input attributes are originally categorical, but we quantify them as real values for training. Participants were questioned about their use of 18 drugs, and our task is to predict cannabis usage. The label variable contains six classes: `Never Used', `Used over a Decade Ago', `Used in Last Decade', `Used in Last Year', `Used in Last Month', and `Used in Last Day'. We configure a class-incremental learning setup, where a total of 6 classes are distributed across 3 tasks, with 2 classes per task. We set gender as the sensitive attribute and consider two sensitive groups: male and female.

\paragraph{BiasBios\,\citep{DBLP:conf/fat/De-ArteagaRWCBC19}} The BiasBios dataset is a benchmark designed to explore and evaluate bias in natural language processing models, particularly in the context of profession classification from bios. The dataset consists of short textual biographies collected from online sources, labeled with one of the 28 profession classes, such as `professor', `nurse', or `software engineer'. The dataset includes gender annotations, which makes it suitable for studying biases related to gender. The dataset contains approximately 350k biographies where 253k are for training and 97k for testing. We configure a class-incremental learning setup using the 25 most-frequent professions, where a total of 25 classes are distributed across 5 tasks, with 5 classes per task. As the number of samples for each class varies significantly, we arrange the classes in descending order based on their size\,\citep{DBLP:conf/aaai/ChowdhuryC23}. We set gender as the sensitive attribute and consider two sensitive groups: male and female.

\subsection{More Details on Models and Hyperparemeters}
\label{appendix:models_hyparparmas}

Continuing from Sec.~\ref{subsubsec:models_hyparparmas}, we provide more details on experimental settings. 
For training, we use an SGD optimizer with momentum 0.9 and a batch size of 64 for all experiments. We also set the initial learning rate and the number of epochs for each dataset as follows: For the MNIST, FMNIST, Biased MNIST, and DRUG datasets, we train both our model and baselines with initial learning rates of [0.001, 0.01, 0.1], for 5, 5, 15, and 25 epochs, respectively. For the BiasBios dataset, we use learning rates of [0.00002, 0.0001, 0.001] for 10 epochs and set the maximum token length to 128. For hyperparameters, we perform cross-validation with a grid search for $\alpha \in \{0.0005, 0.001, 0.002, 0.01\}$, $\lambda \in \{0.1, 0.5, 1\}$, and $\tau \in \{1, 2, 5, 10\}$. All evaluations are performed on separate test sets and repeated with five random seeds. We write the average and standard deviation of performance results and run experiments on Intel Xeon Silver 4114 CPUs and NVIDIA RTX A6000 GPUs.

\subsection{More Results on Accuracy and Fairness}
\label{appendix:results-main}

Continuing from Sec.~\ref{subsec:expresults}, we compare \method{} with other baselines with respect to EER, EO, and DP disparity as shown in Tables~\ref{tbl:performance_eer_disp},~\ref{tbl:performance_eo_disp}, and~\ref{tbl:performance_dp_disp}, respectively. 

On BiasBios, {\it iCaRL}’s accuracy surpasses even {\it Joint training}, commonly considered the upper bound, while it has low accuracy on the Biased MNIST dataset.
{\it iCaRL}'s high accuracy on BiasBios likely stems from its prototypical classifier, which is structurally distinct from the fully-connected layer used by other methods. On the other hand, \method{} achieves fairness and accuracy comparable to {\it Joint training} on BiasBios, effectively mitigating unfair forgetting without compromising accuracy.
As illustrated by the t-SNE visualization on the Biased MNIST dataset (Sec.~\ref{subsec:tsne}), some samples are misaligned due to background color biases, causing difficulty for {\it iCaRL} to accurately classify these points. In comparison, \method{} consistently exhibits lower EO and DP disparities compared to {\it iCaRL} for all the datasets. This supports that \method{} is more robust, particularly when the backbone network struggles to generate discriminative representations.


We note that the performance of {\it FaIRL} in our experiments may not fully represent its potential. However, the complicated model structure of {\it FaIRL} not only causes the instability of training, but also introduces additional multiple loss terms, resulting in numerous hyperparameters that are difficult to optimize. Using the identical hyperparameter set from {\it FaIRL} shows significantly lower performance, mainly because we standardize all models to use the same backbone for a fair comparison, differing from {\it FaIRL}’s original setup. Moreover, among all the baseline methods, {\it FaIRL} requires the longest training time, as discussed in Sec~\ref{subsec:runtime}, limiting our ability to thoroughly tune its hyperparameters. Consequently, the inherent challenges of the method and experimental constraints lead to suboptimal results.

\begin{table*}[h!]
  \setlength{\tabcolsep}{27pt}
  \caption{Accuracy and fairness results on the MNIST and FMNIST datasets with respect to EER disparity, where the class is the sensitive attribute. We compare \method{} with four types of baselines: na\"ive ({\it Joint Training} and {\it Fine Tuning}), state-of-the-art ({\it iCaRL}, {\it WA}, and {\it CLAD}), sample selection ({\it GSS} and {\it OCS}), and fairness-aware ({\it FaIRL}) methods. We mark the best and second-best results with \textbf{bold} and \underline{underline}, respectively.}
  \centering
  \begin{tabular}{l|cccc}
  \toprule
    {Methods} & \multicolumn{2}{c}{\sf MNIST} & \multicolumn{2}{c}{\sf FMNIST} \\
    \cmidrule{1-5}
    {} & {Acc.} & {EER Disp.} & {Acc.} & {EER Disp.} \\
    \midrule
    {Joint Training} & .989\tiny{$\pm$.000} & .003\tiny{$\pm$.000} & .921\tiny{$\pm$.002} & .024\tiny{$\pm$.002} \\
    {Fine Tuning}& .455\tiny{$\pm$.000} & .326\tiny{$\pm$.000} & .451\tiny{$\pm$.000} & .325\tiny{$\pm$.000} \\
    \cmidrule{1-5}
    {iCaRL} & .918\tiny{$\pm$.005} & .048\tiny{$\pm$.003} & \textbf{.852\tiny{$\pm$.002}} & \underline{.047\tiny{$\pm$.001}} \\
    {WA} & .911\tiny{$\pm$.007} & .052\tiny{$\pm$.006} & .809\tiny{$\pm$.005} & .088\tiny{$\pm$.003} \\
    {CLAD} & .835\tiny{$\pm$.016} & .099\tiny{$\pm$.016} & .782\tiny{$\pm$.018} & .118\tiny{$\pm$.022} \\
    \cmidrule{1-5}
    {GSS} & .889\tiny{$\pm$.010} & .080\tiny{$\pm$.009} & .732\tiny{$\pm$.021} & .149\tiny{$\pm$.019} \\
    {OCS} & \textbf{.929\tiny{$\pm$.002}} & \underline{.040\tiny{$\pm$.003}} & .799\tiny{$\pm$.008} & .109\tiny{$\pm$.007} \\
    \cmidrule{1-5}
    {FaIRL} & .558\tiny{$\pm$.060} & .273\tiny{$\pm$.018} & .531\tiny{$\pm$.032} & .289\tiny{$\pm$.019} \\
    \cmidrule{1-5}
    {\bf \method{}} & \underline{.925\tiny{$\pm$.004}} & \textbf{.032\tiny{$\pm$.005}} & \underline{.824\tiny{$\pm$.006}} & \textbf{.039\tiny{$\pm$.006}} \\
    \bottomrule
  \end{tabular}
  \label{tbl:performance_eer_disp}
\end{table*}


\begin{table*}[h!]
  \setlength{\tabcolsep}{15.6pt}
  \caption{Accuracy and fairness results on the Biased MNIST, DRUG, and BiasBios datasets with respect to EO disparity, where background color is the sensitive attribute for Biased MNIST, and gender for DRUG and BiasBios, respectively. Due to the excessive time ($>$5 days) required to run {\it OCS} on BiasBios, we are not able to measure the results and mark them as `--'. The other settings are the same as in Table~\ref{tbl:performance_eer_disp}.}
  \centering
  \begin{tabular}{l|cccccc}
  \toprule
    {Methods} & \multicolumn{2}{c}{\sf Biased MNIST} & \multicolumn{2}{c}{\sf DRUG} & \multicolumn{2}{c}{\sf BiasBios} \\
    \cmidrule{1-7}
    {} & {Acc.} & {EO Disp.} & {Acc.} & {EO Disp.} & {Acc.} & {EO Disp.} \\
    \midrule
    {Joint Training} & .944\tiny{$\pm$.002} & .108\tiny{$\pm$.003} & .442\tiny{$\pm$.015} & .179\tiny{$\pm$.052} & .823\tiny{$\pm$.002} & .076\tiny{$\pm$.001} \\
    {Fine Tuning} & .449\tiny{$\pm$.001} & .016\tiny{$\pm$.002} & .357\tiny{$\pm$.009} & .125\tiny{$\pm$.034} & .420\tiny{$\pm$.001} & .028\tiny{$\pm$.002} \\
    \cmidrule{1-7}
    {iCaRL} & .802\tiny{$\pm$.008} & .365\tiny{$\pm$.021} & \textbf{.444\tiny{$\pm$.025}} & .190\tiny{$\pm$.017} & \textbf{.829\tiny{$\pm$.002}} & .084\tiny{$\pm$.003} \\
    {WA} & \textbf{.916\tiny{$\pm$.002}} & .140\tiny{$\pm$.004} & .408\tiny{$\pm$.022} & .134\tiny{$\pm$.029} & .796\tiny{$\pm$.003} & .076\tiny{$\pm$.001} \\
    {CLAD} & .871\tiny{$\pm$.012} & .198\tiny{$\pm$.022} & .410\tiny{$\pm$.026} & .114\tiny{$\pm$.043} & .799\tiny{$\pm$.003} & .074\tiny{$\pm$.002} \\
    \cmidrule{1-7}
    {GSS} & .809\tiny{$\pm$.005} & .325\tiny{$\pm$.017} & \underline{.426\tiny{$\pm$.010}} & .167\tiny{$\pm$.038} & \underline{.808\tiny{$\pm$.003}} & .081\tiny{$\pm$.002} \\
    {OCS} & .824\tiny{$\pm$.007} & .331\tiny{$\pm$.013} & .406\tiny{$\pm$.024} & .142\tiny{$\pm$.030} & -- & -- \\
    \cmidrule{1-7}
    {FaIRL} & .411\tiny{$\pm$.012} & \textbf{.118\tiny{$\pm$.011}} & .354\tiny{$\pm$.011} & \textbf{.060\tiny{$\pm$.021}} & .400\tiny{$\pm$.060} & \textbf{.055\tiny{$\pm$.020}} \\
    \cmidrule{1-7}
    {\bf \method{}} & \underline{.909\tiny{$\pm$.004}} & \underline{.119\tiny{$\pm$.007}} & .406\tiny{$\pm$.014} & \underline{.077\tiny{$\pm$.010}} & \underline{.808\tiny{$\pm$.002}} & \underline{.072\tiny{$\pm$.001}} \\
    \bottomrule
  \end{tabular}
  \label{tbl:performance_eo_disp}
\end{table*}


\begin{table*}[h!]
  \setlength{\tabcolsep}{15.6pt}
  \caption{Accuracy and fairness results on the Biased MNIST, DRUG, and BiasBios datasets with respect to DP disparity. The other settings are the same as in Table~\ref{tbl:performance_eo_disp}.}
  \centering
  \begin{tabular}{l|cccccc}
  \toprule
    {Methods} & \multicolumn{2}{c}{\sf Biased MNIST} & \multicolumn{2}{c}{\sf DRUG} & \multicolumn{2}{c}{\sf BiasBios} \\
    \cmidrule{1-7}
    {} & {Acc.} & {DP Disp.} & {Acc.} & {DP Disp.} & {Acc.} & {DP Disp.} \\
    \midrule
    {Joint Training}& .944\tiny{$\pm$.002} & .006\tiny{$\pm$.001} & .442\tiny{$\pm$.015} & .090\tiny{$\pm$.020}& .823\tiny{$\pm$.002} & .021\tiny{$\pm$.000} \\
    {Fine Tuning}& .449\tiny{$\pm$.001} & .017\tiny{$\pm$.008} & .357\tiny{$\pm$.009} & .102\tiny{$\pm$.013} & .420\tiny{$\pm$.001} & .028\tiny{$\pm$.002} \\
    \cmidrule{1-7}
    {iCaRL}& .802\tiny{$\pm$.008} & .015\tiny{$\pm$.001} & \textbf{.444\tiny{$\pm$.025}} & .093\tiny{$\pm$.009} & \textbf{.829\tiny{$\pm$.002}} & \underline{.022\tiny{$\pm$.000}} \\
    {WA} & \textbf{.916\tiny{$\pm$.002}} & \underline{.009\tiny{$\pm$.001}} & .408\tiny{$\pm$.022} & .067\tiny{$\pm$.013} & .796\tiny{$\pm$.003} & \underline{.022\tiny{$\pm$.000}} \\
    {CLAD} & .871\tiny{$\pm$.012} & .013\tiny{$\pm$.001} & \underline{.410\tiny{$\pm$.026}} & .069\tiny{$\pm$.019} & .799\tiny{$\pm$.003} & \underline{.022\tiny{$\pm$.000}} \\
    \cmidrule{1-7}
    {GSS} & .809\tiny{$\pm$.005} & .039\tiny{$\pm$.003} & .392\tiny{$\pm$.022} & .065\tiny{$\pm$.015} & .808\tiny{$\pm$.003} & .023\tiny{$\pm$.000} \\
    {OCS} & .824\tiny{$\pm$.007} & .035\tiny{$\pm$.003} & .393\tiny{$\pm$.017} & .053\tiny{$\pm$.012} & -- & -- \\
    \cmidrule{1-7}
    {FaIRL}& .411\tiny{$\pm$.012} & .026\tiny{$\pm$.008} & .354\tiny{$\pm$.011} & \textbf{.040\tiny{$\pm$.008}} & .400\tiny{$\pm$.060} & \textbf{.015\tiny{$\pm$.002}} \\
    \cmidrule{1-7}
    {\bf \method{}} & \underline{.904\tiny{$\pm$.004}} & \textbf{.008\tiny{$\pm$.001}} & .405\tiny{$\pm$.013} & \underline{.043\tiny{$\pm$.004}} & \underline{.809\tiny{$\pm$.003}} & \underline{.022\tiny{$\pm$.000}} \\
    \bottomrule
  \end{tabular}
  \label{tbl:performance_dp_disp}
\end{table*}

\subsection{More Results on Tradeoff between Accuracy and Fairness}
\label{appendix:tradeoff}
Continuing from Sec.~\ref{subsec:expresults}, we evaluate the tradeoff between accuracy and fairness of \method{} with other baselines as shown in Fig.~\ref{fig:MNIST_FMNIST_EER_tradeoff}--Fig.~\ref{fig:Bios_tradeoff}. 
The figures show \method{} positioned in the lower right corner of the graph, indicating better accuracy-fairness tradeoff results compared to other baselines. 
\method{} in the figures represents the result for different values of $\lambda$, a hyperparameter that balances fairness and accuracy. Since other baselines do not have a balancing parameter, we select Pareto-optimal points from all search spaces, where a Pareto-optimal point is defined as a point for which there does not exist another point with both higher accuracy and lower fairness disparity. 

\subsection{More Results on Sequential Accuracy and Fairness}
\label{appendix:results-seq}

Continuing from Sec.~\ref{subsec:expresults}, we present the sequential performance results for each task as shown in Fig.~\ref{fig:mnist_seq_acc_eer}--Fig.~\ref{fig:biasbios_seq_acc_dp}. 

\newpage

\begin{figure*}[h!]
    \centering
    \begin{subfigure}[t]{0.8\textwidth}
    \centering
    \includegraphics[width=\linewidth]{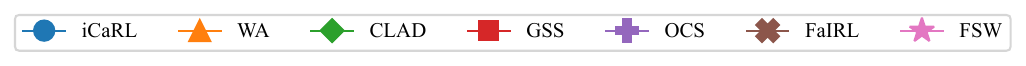}
    \end{subfigure}
    \begin{subfigure}[t]{0.2655\textwidth}
        \includegraphics[width=\linewidth]{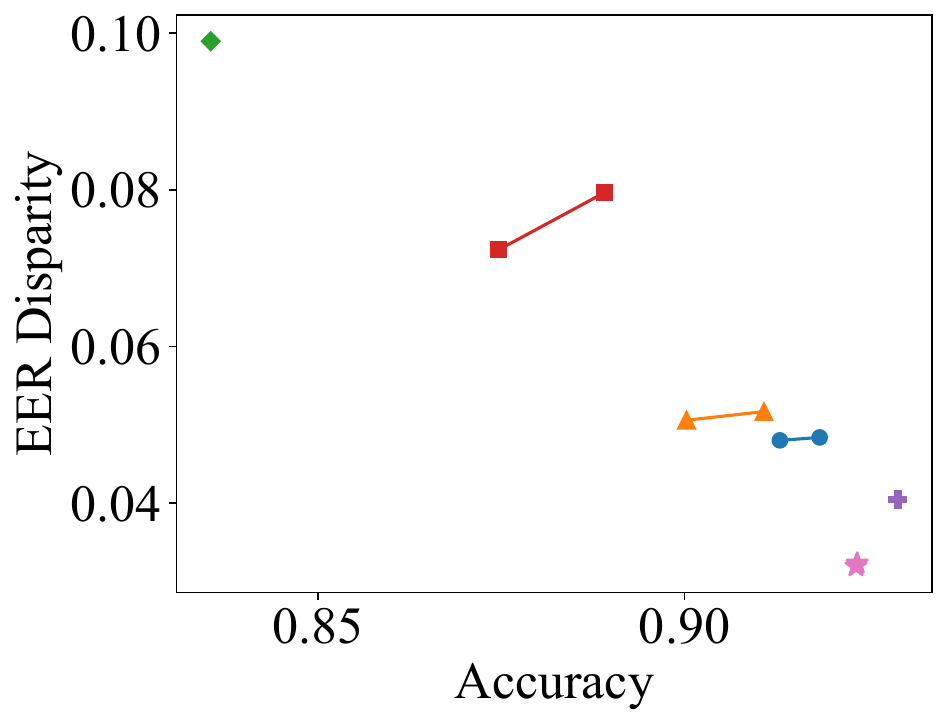}
        \caption{MNIST (EER).}
    \end{subfigure}
    \hspace{1cm}
    \begin{subfigure}[t]{0.273\textwidth}
        \includegraphics[width=\linewidth]{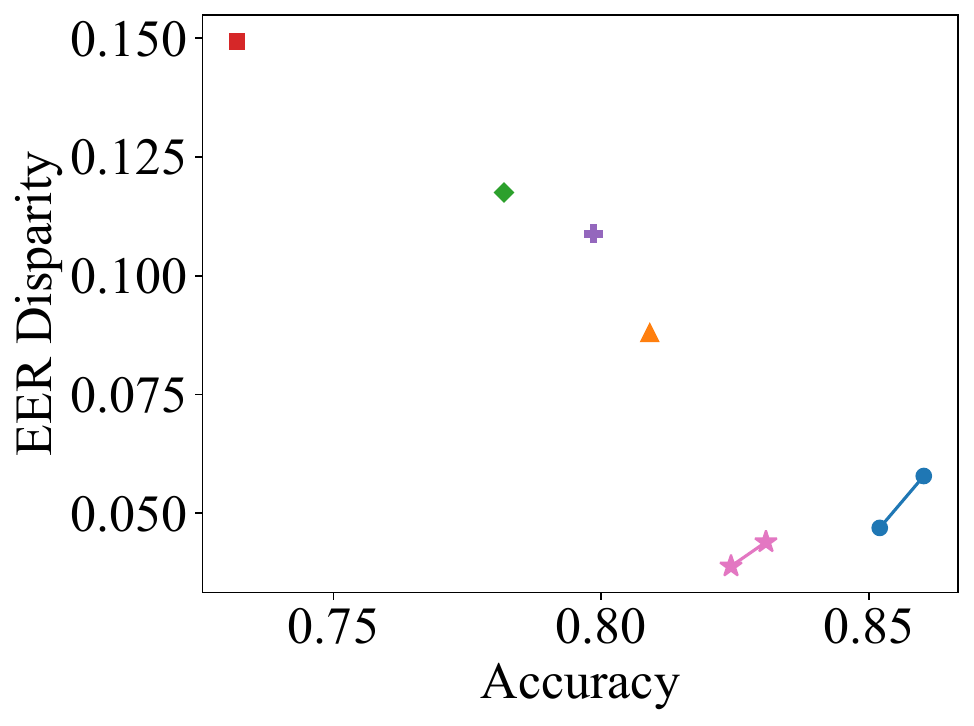}
        \caption{FMNIST (EER).}
    \end{subfigure}
    \caption{Tradeoff results between accuracy and fairness (EER) on the MNIST and FMNIST datasets.}
    \label{fig:MNIST_FMNIST_EER_tradeoff}
\end{figure*}
\begin{figure*}[h!]
    \centering
    \begin{subfigure}[t]{0.2655\textwidth}
        \includegraphics[width=\linewidth]{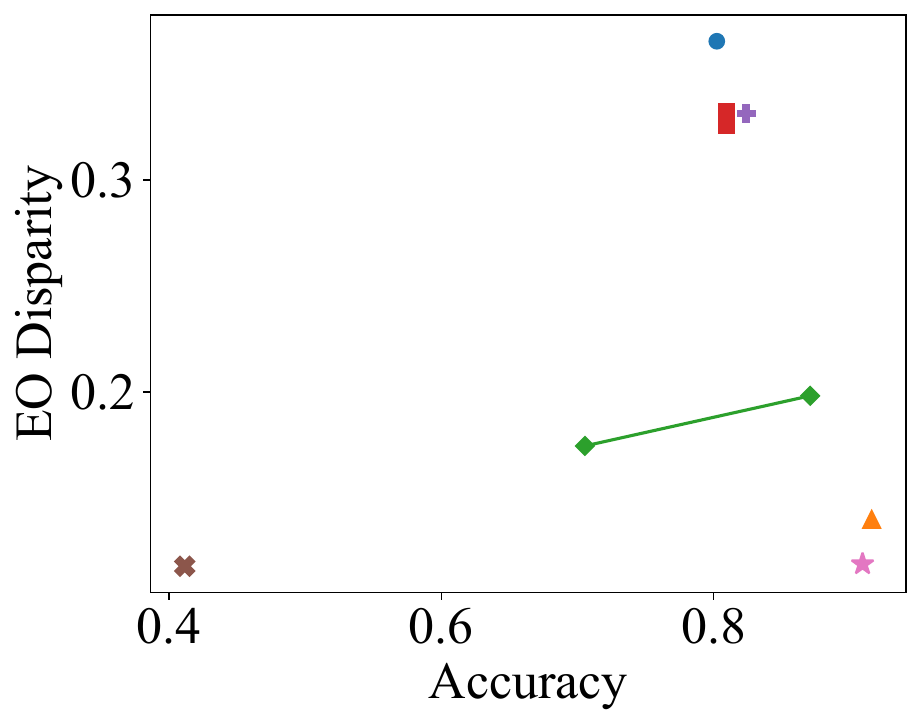}
        \caption{Biased MNIST (EO).}
    \end{subfigure}
    \hspace{1cm}
    \begin{subfigure}[t]{0.273\textwidth}
        \includegraphics[width=\linewidth]{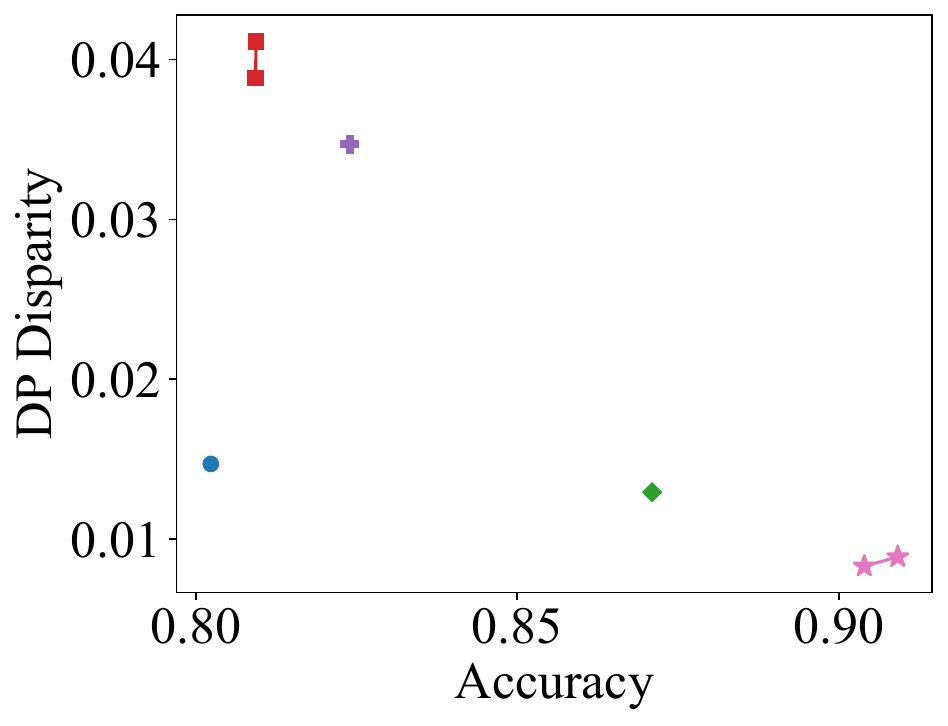}
        \caption{Biased MNIST (DP).}
    \end{subfigure}
    \caption{Tradeoff results between accuracy and fairness (EO and DP) on the Biased MNIST dataset.}
    \label{fig:BiasedMNIST_tradeoff}
\end{figure*}

\begin{figure*}[h!]
    \centering
    \begin{subfigure}[t]{0.273\textwidth}
        \includegraphics[width=\linewidth]{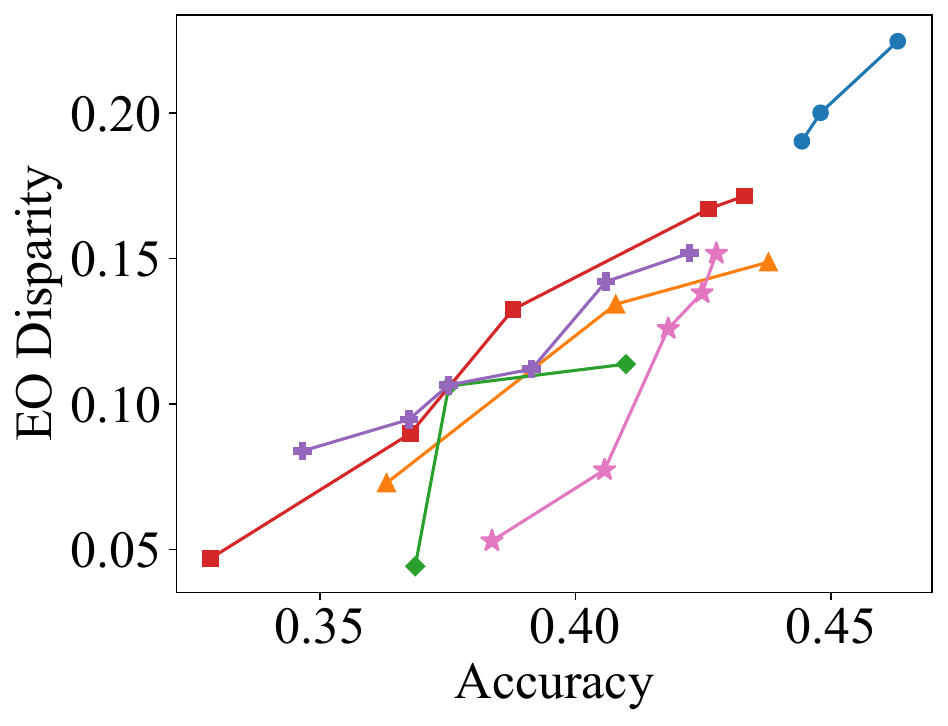}
        \caption{DRUG (EO).}
    \end{subfigure}
    \hspace{1cm}
    \begin{subfigure}[t]{0.273\textwidth}
        \includegraphics[width=\linewidth]{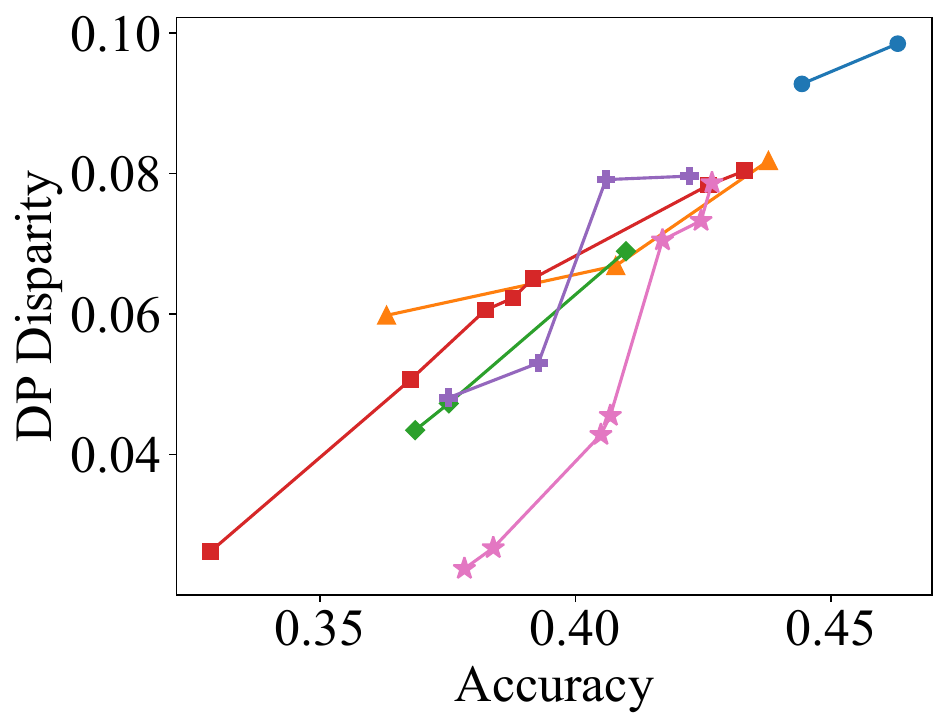}
        \caption{DRUG (DP).}
    \end{subfigure}
    \caption{Tradeoff results between accuracy and fairness (EO and DP) on the DRUG dataset.}
    \label{fig:Drug_tradeoff}
\end{figure*}

\begin{figure*}[h!]
    \centering
    \begin{subfigure}[t]{0.2655\textwidth}
        \includegraphics[width=\linewidth]{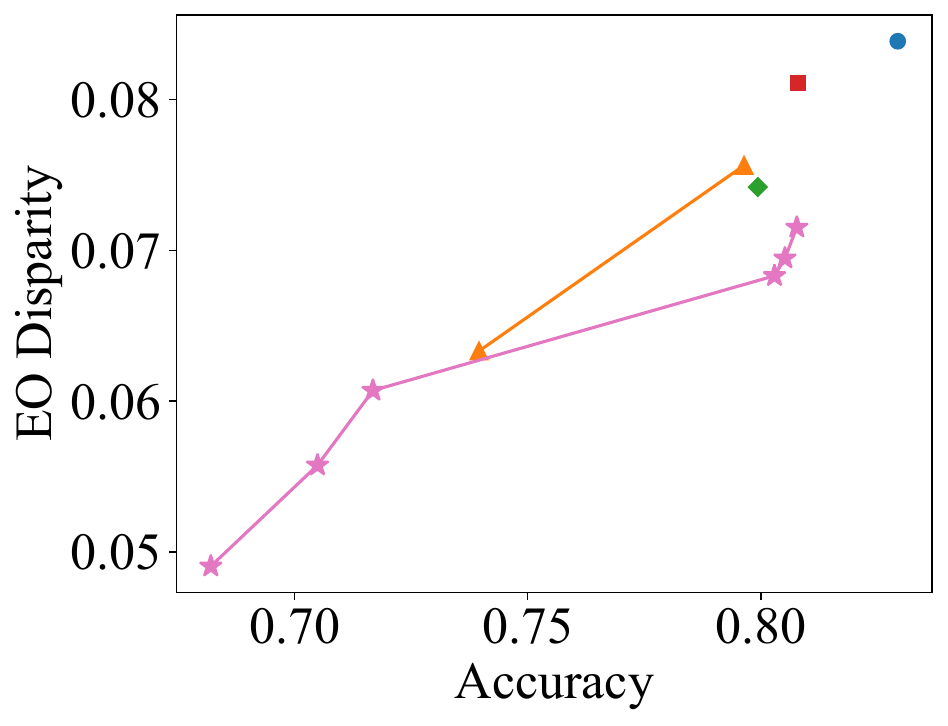}
        \caption{BiasBios (EO).}
    \end{subfigure}
    \hspace{1cm}
    \begin{subfigure}[t]{0.273\textwidth}
        \includegraphics[width=\linewidth]{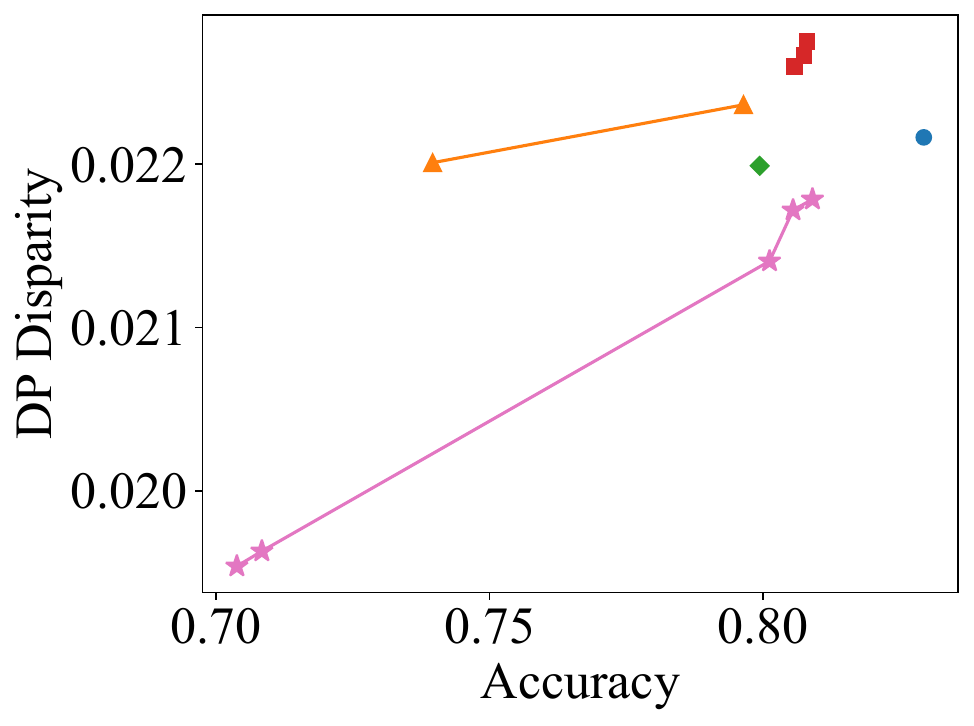}
        \caption{BiasBios (DP).}
    \end{subfigure}
    \caption{Tradeoff results between accuracy and fairness (EO and DP) on the BiasBios dataset.}
    \label{fig:Bios_tradeoff}
\end{figure*}

\clearpage

\begin{figure*}[h!]
    \centering
    \begin{subfigure}[t]{0.8\textwidth}
    \centering
    \includegraphics[width=\linewidth]{figures/legend.pdf}
    \end{subfigure}
    \begin{subfigure}[t]{0.27\textwidth}
        \includegraphics[width=\linewidth]{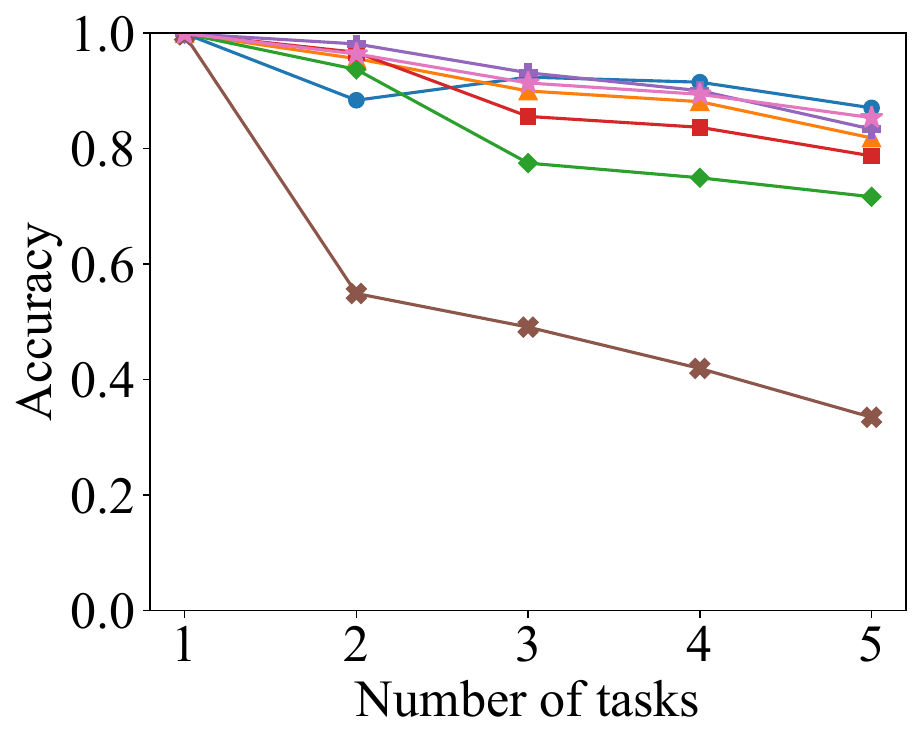}
        \caption{Accuracy.}
    \end{subfigure}
    \hspace{1cm}
    \begin{subfigure}[t]{0.27\textwidth}
        \includegraphics[width=\linewidth]{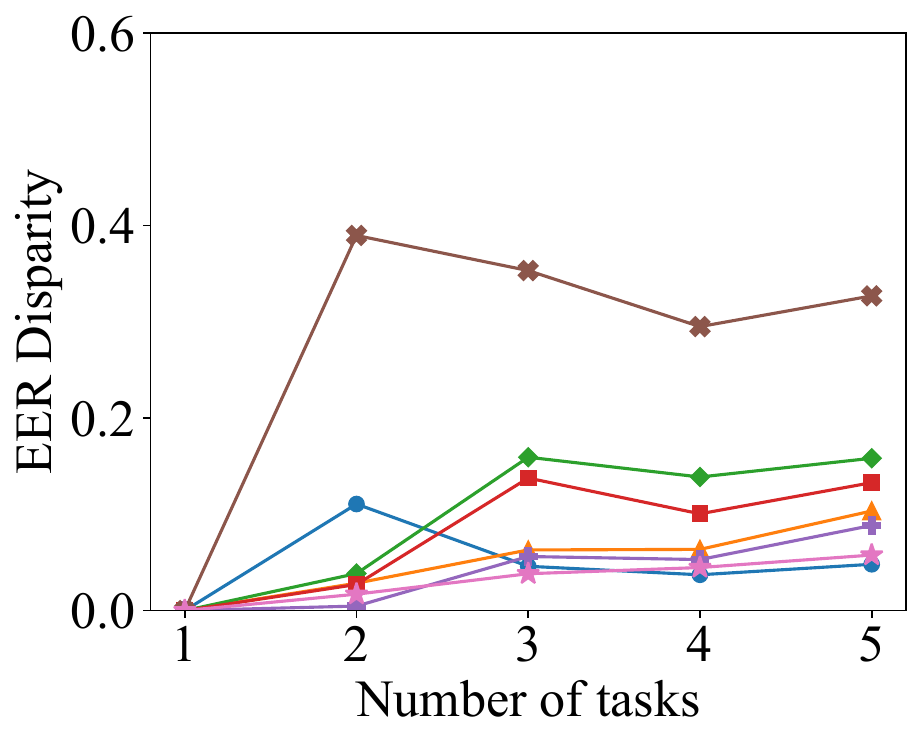}
        \caption{EER Disparity.}
    \end{subfigure}
    \vspace{-0.2cm}
    \caption{Sequential accuracy and fairness (EER) results on the MNIST dataset.}\label{fig:mnist_seq_acc_eer}
\end{figure*}

\begin{figure*}[h!]
    \centering
    \begin{subfigure}[t]{0.27\textwidth}
        \includegraphics[width=\linewidth]{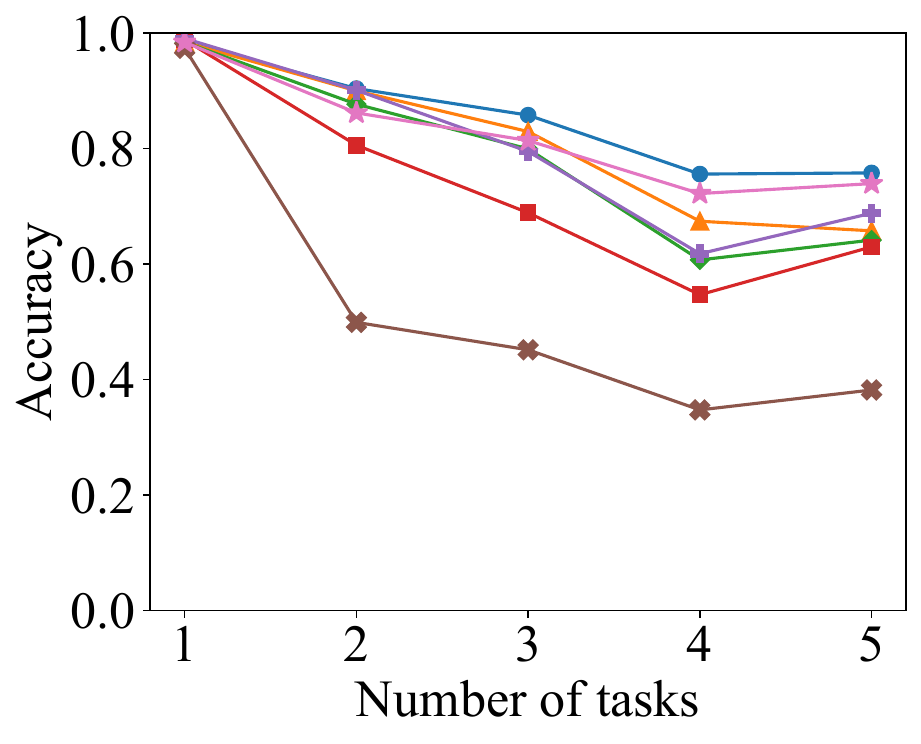}
        \caption{Accuracy.}
    \end{subfigure}
    \hspace{1cm}
    \begin{subfigure}[t]{0.27\textwidth}
        \includegraphics[width=\linewidth]{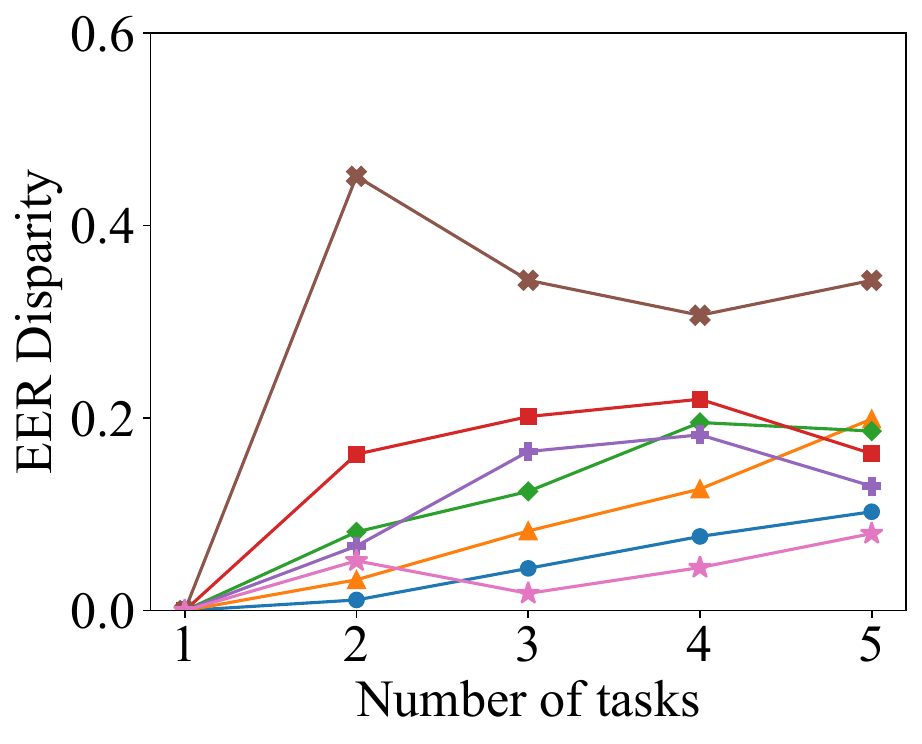}
        \caption{EER Disparity.}
    \end{subfigure}
    \vspace{-0.2cm}
    \caption{Sequential accuracy and fairness (EER) results on the FMNIST dataset.}\label{fig:fmnist_seq_acc_eer}
\end{figure*}

\begin{figure*}[h!]
    \centering
    \begin{subfigure}[t]{0.27\textwidth}
        \includegraphics[width=\linewidth]{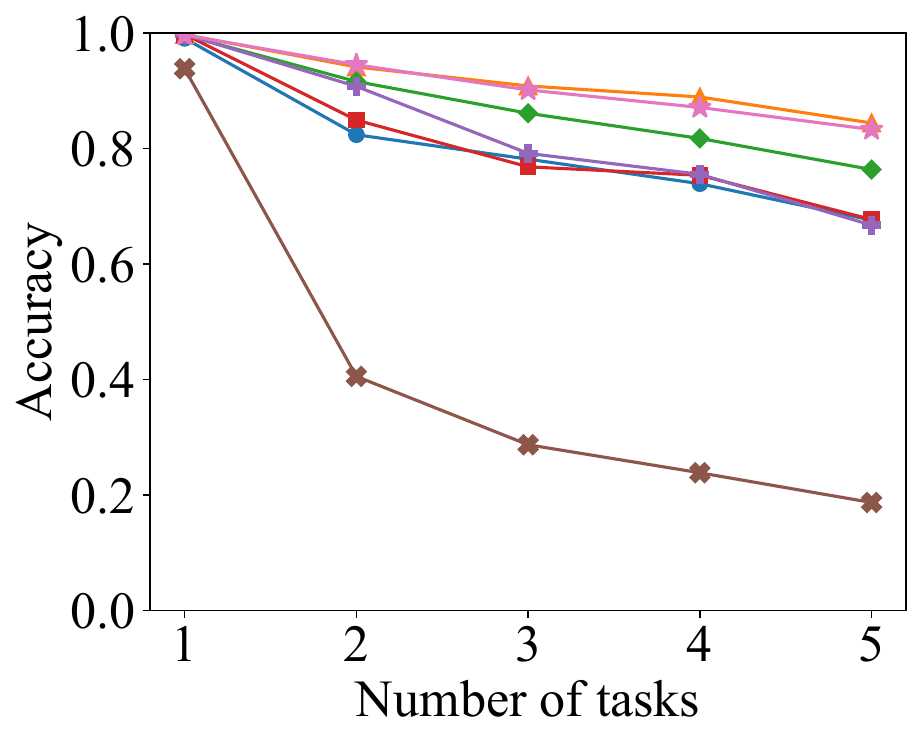}
        \caption{Accuracy.}
    \end{subfigure}
    \hspace{1cm}
    \begin{subfigure}[t]{0.27\textwidth}
        \includegraphics[width=\linewidth]{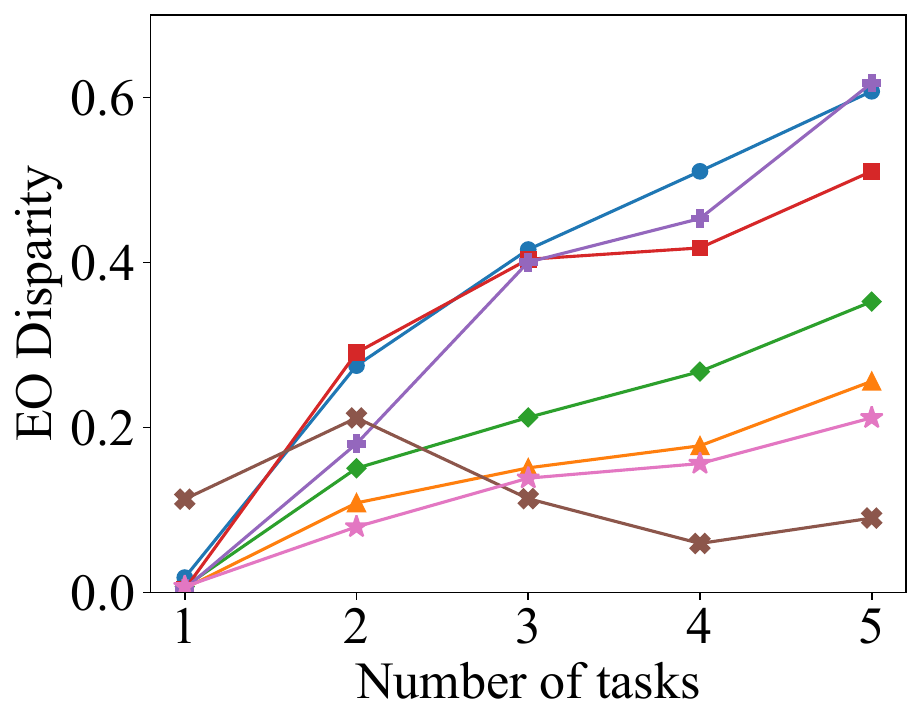}
        \caption{EO Disparity.}
    \end{subfigure}
    \vspace{-0.2cm}
    \caption{Sequential accuracy and fairness (EO) results on the Biased MNIST dataset.}\label{fig:biased_mnist_seq_acc_eo}
\end{figure*}

\begin{figure*}[h!]
    \centering
    \begin{subfigure}[t]{0.2625\textwidth}
        \includegraphics[width=\linewidth]{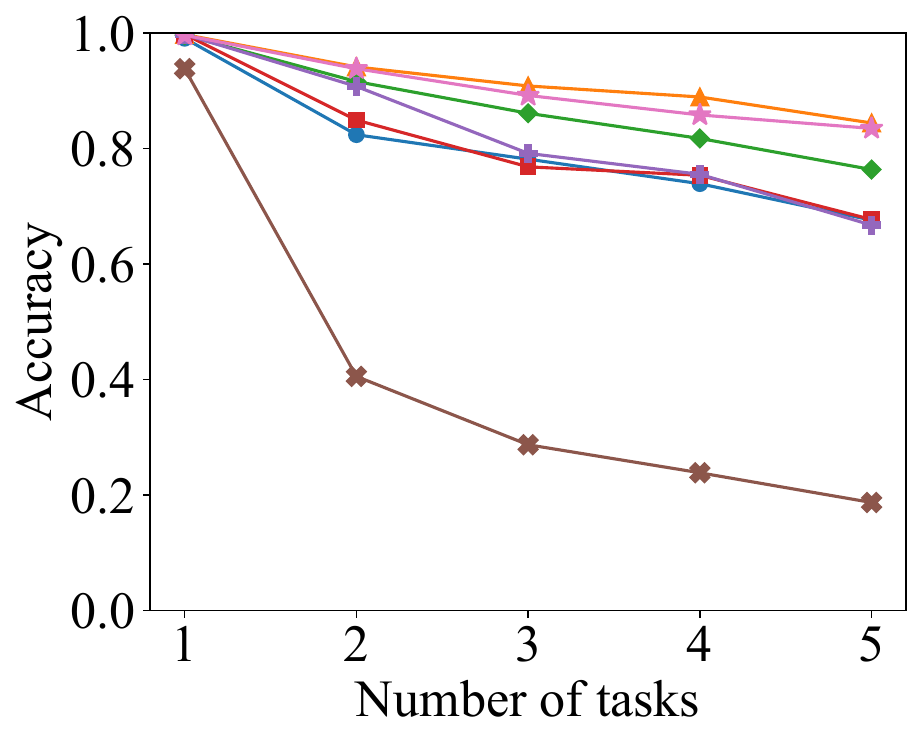}
        \caption{Accuracy.}
    \end{subfigure}
    \hspace{1cm}
    \begin{subfigure}[t]{0.27\textwidth}
        \includegraphics[width=\linewidth]{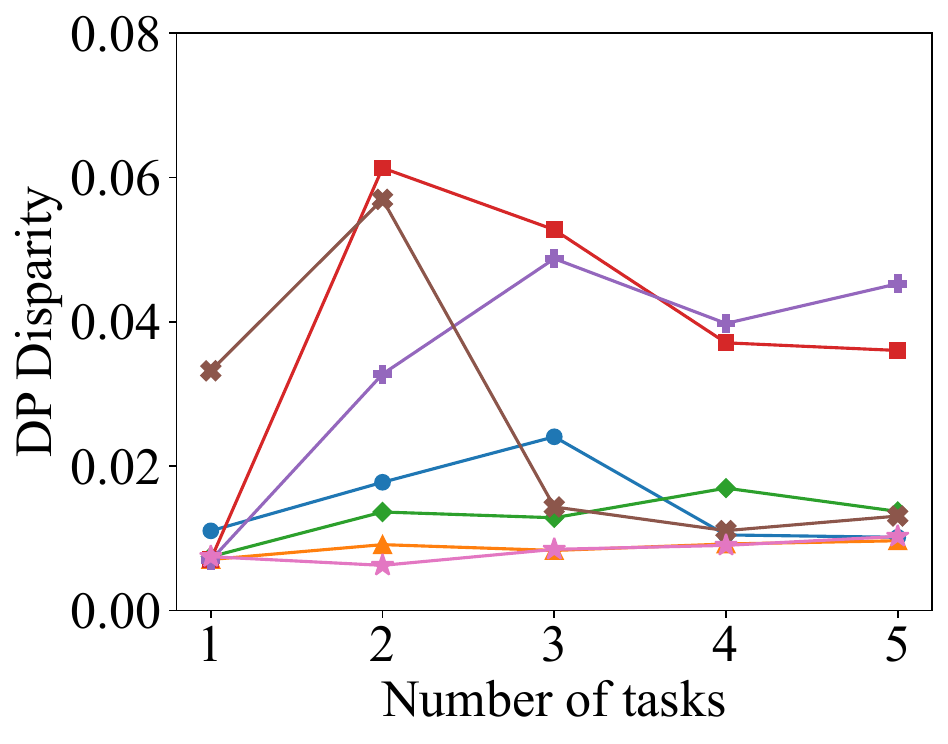}
        \caption{DP Disparity.}
    \end{subfigure}
    \vspace{-0.2cm}
    \caption{Sequential accuracy and fairness (DP) results on the Biased MNIST dataset.}\label{fig:biased_mnist_seq_acc_dp}
\end{figure*}

\begin{figure*}[h!]
    \centering
    \begin{subfigure}[t]{0.8\textwidth}
    \centering
    \includegraphics[width=\linewidth]{figures/legend.pdf}
    \end{subfigure}
    \begin{subfigure}[t]{0.27\textwidth}
        \includegraphics[width=\linewidth]{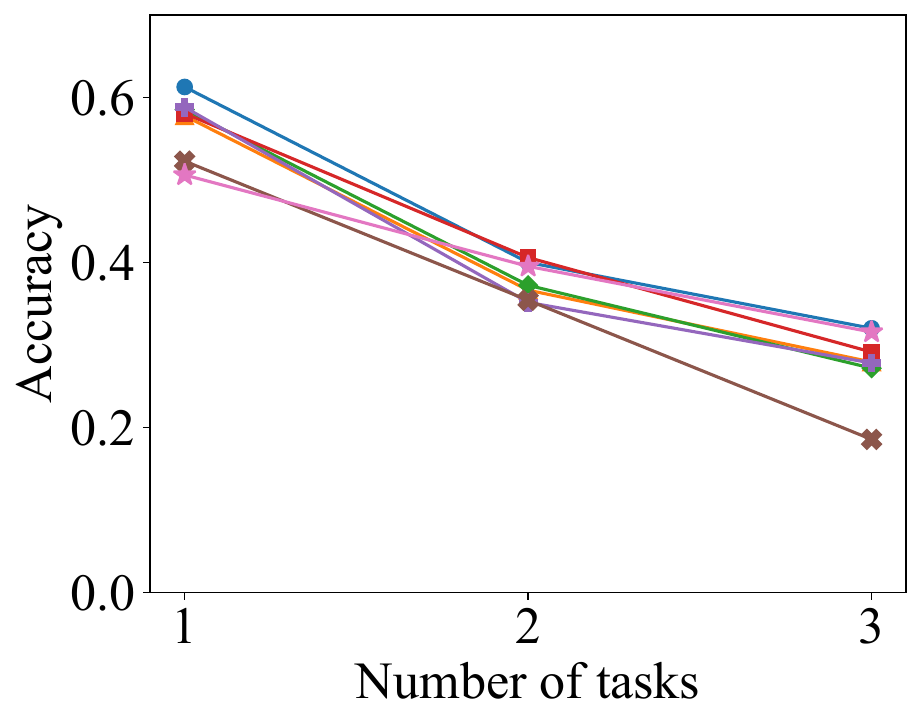}
        \caption{Accuracy.}
    \end{subfigure}
    \hspace{1cm}
    \begin{subfigure}[t]{0.27\textwidth}
        \includegraphics[width=\linewidth]{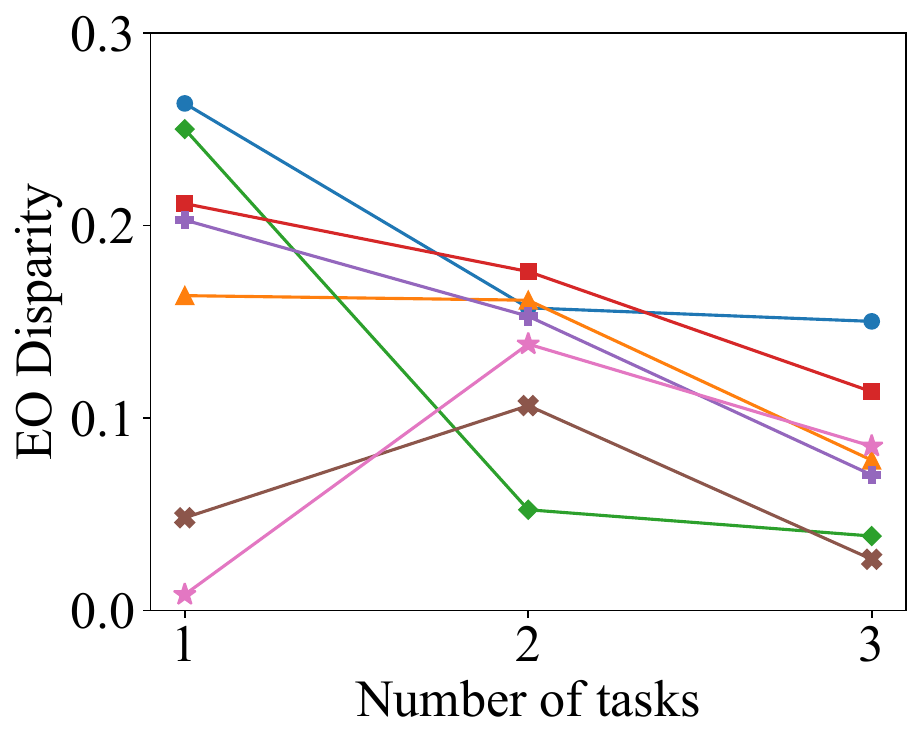}
        \caption{EO Disparity.}
    \end{subfigure}
    \vspace{-0.2cm}
    \caption{Sequential accuracy and fairness (EO) results on the DRUG dataset.}\label{fig:drug_seq_acc_eo}
\end{figure*}

\begin{figure*}[h!]
    \centering
    \begin{subfigure}[t]{0.27\textwidth}
        \includegraphics[width=\linewidth]{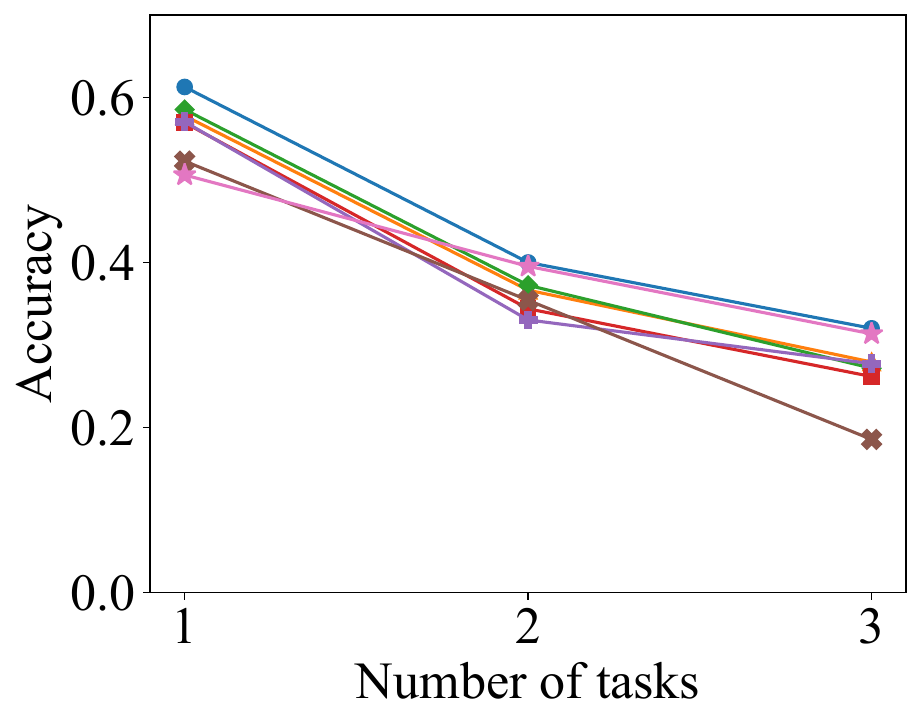}
        \caption{Accuracy.}
    \end{subfigure}
    \hspace{1cm}
    \begin{subfigure}[t]{0.278\textwidth}
        \includegraphics[width=\linewidth]{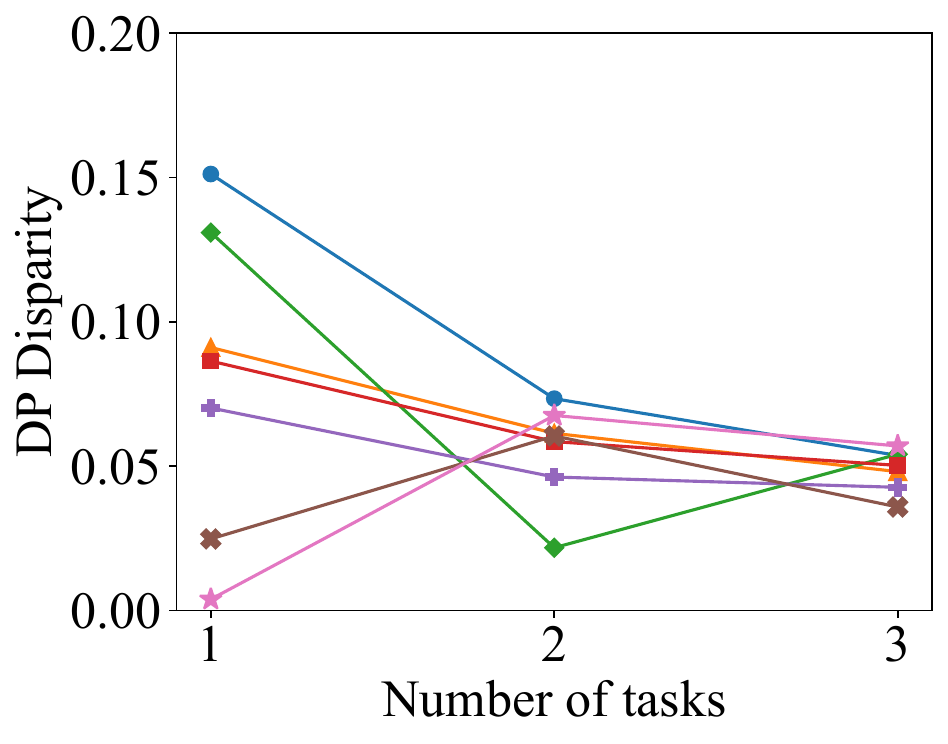}
        \caption{DP Disparity.}
    \end{subfigure}
    \vspace{-0.2cm}
    \caption{Sequential accuracy and fairness (DP) results on the DRUG dataset.}\label{fig:drug_seq_acc_dp}
\end{figure*}

\begin{figure*}[h!]
    \centering
    \begin{subfigure}[t]{0.27\textwidth}
        \includegraphics[width=\linewidth]{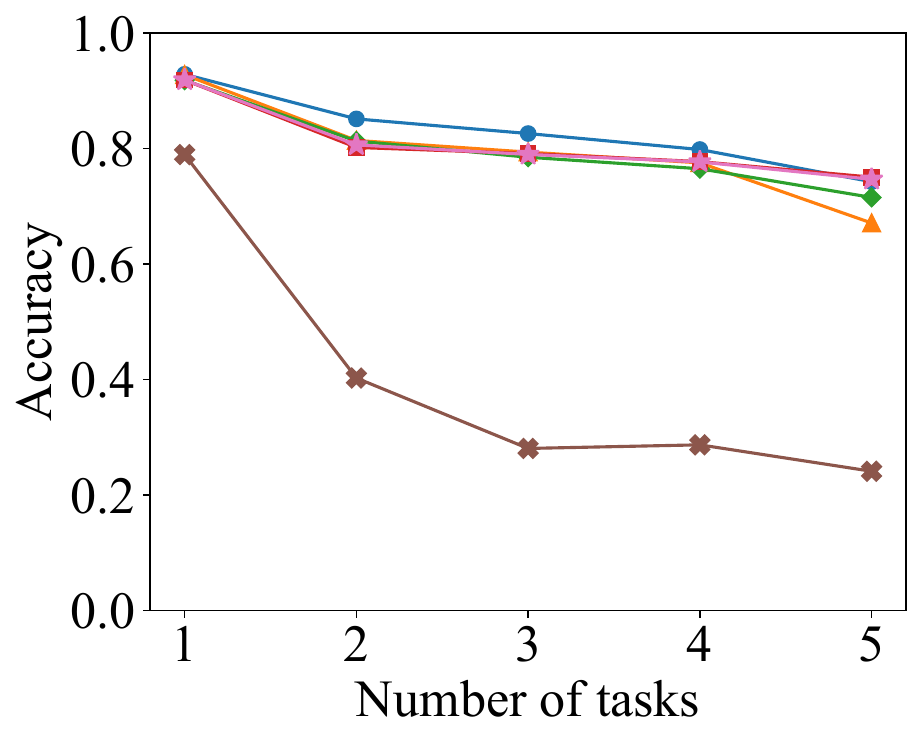}
        \caption{Accuracy.}
    \end{subfigure}
    \hspace{1cm}
    \begin{subfigure}[t]{0.278\textwidth}
        \includegraphics[width=\linewidth]{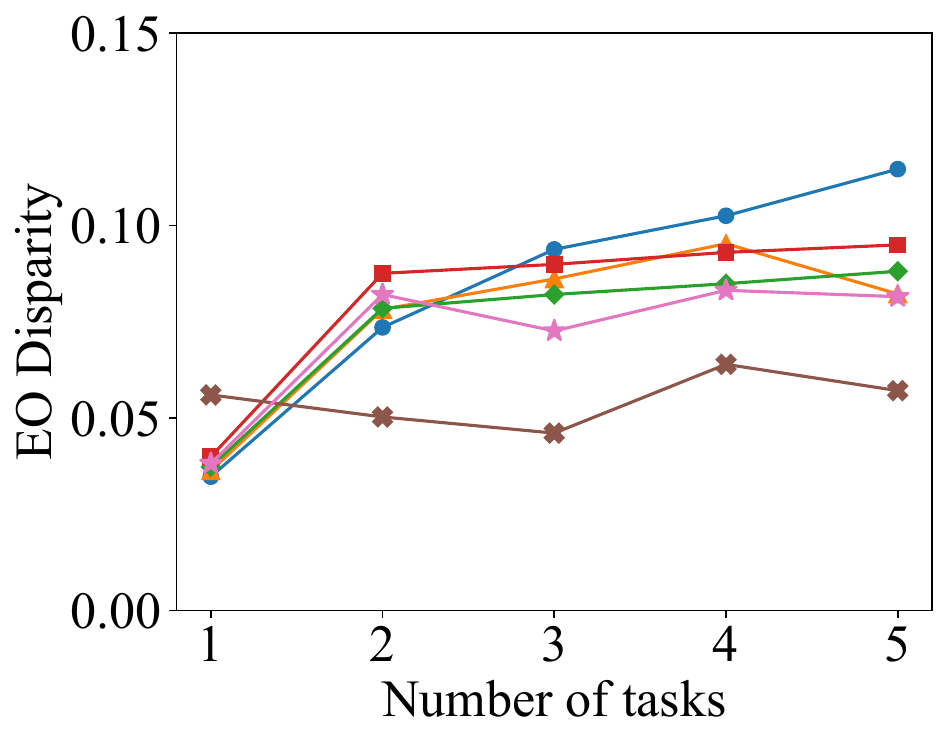}
        \caption{EO Disparity.}
    \end{subfigure}
    \vspace{-0.2cm}
    \caption{Sequential accuracy and fairness (EO) results on the BiasBios dataset.}\label{fig:biasbios_seq_acc_eo}
\end{figure*}

\begin{figure*}[h!]
    \centering
    \begin{subfigure}[t]{0.27\textwidth}
        \includegraphics[width=\linewidth]{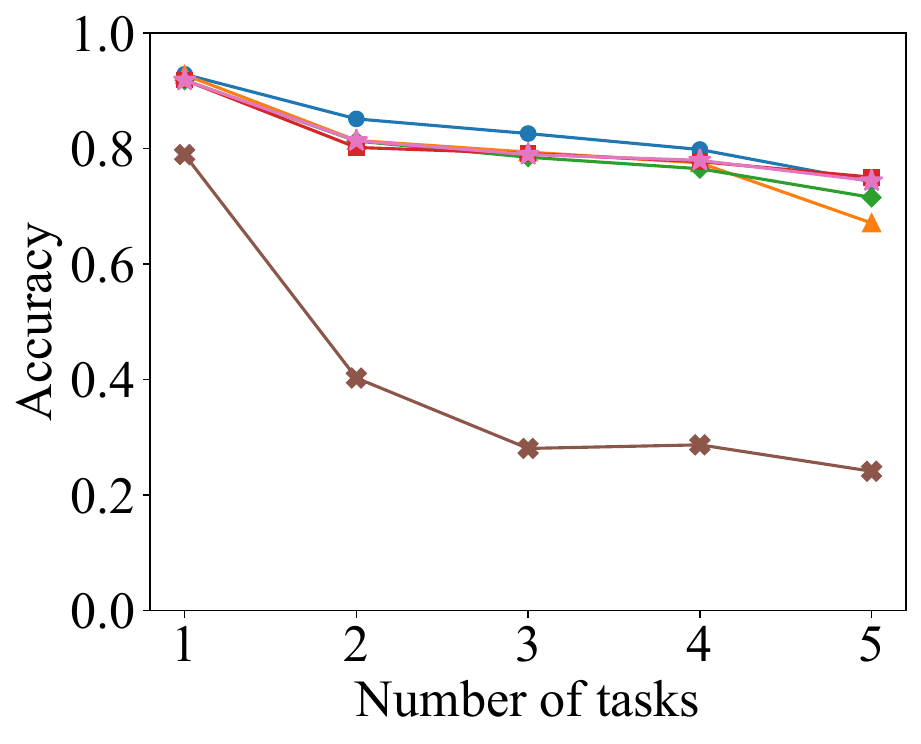}
        \caption{Accuracy.}
    \end{subfigure}
    \hspace{1cm}
    \begin{subfigure}[t]{0.278\textwidth}
        \includegraphics[width=\linewidth]{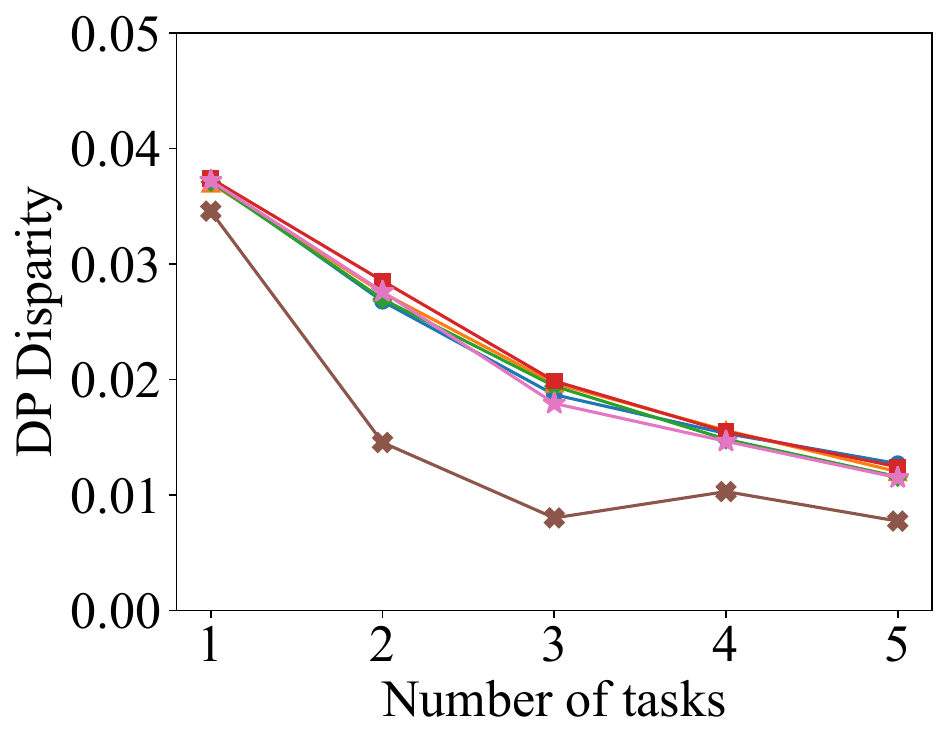}
        \caption{DP Disparity.}
    \end{subfigure}
    \vspace{-0.2cm}
    \caption{Sequential accuracy and fairness (DP) results on the BiasBios dataset.}\label{fig:biasbios_seq_acc_dp}
\end{figure*}

\clearpage

\subsection{More Results of \method{} when Varying the Buffer Size}
\label{appendix:buffer_size}

Continuing from Sec.~\ref{subsec:expresults}, we have additional experimental results of \method{} on the MNIST and Biased MNIST datasets when varying the buffer size to 16, 32, 64, and 128 per sensitive group as shown in Fig.~\ref{fig:exp_buffer_size}. As the buffer size increases, both accuracy and fairness performances improve. In addition, we compute the number of current task data assigned with non-zero weights as shown in Fig.~\ref{fig:exp_non_zero}, and there is no clear relationship between the buffer size and weights.

\begin{figure}[h!]
    \centering
    \begin{subfigure}[t]{0.492\textwidth}
        \includegraphics[width=\linewidth]{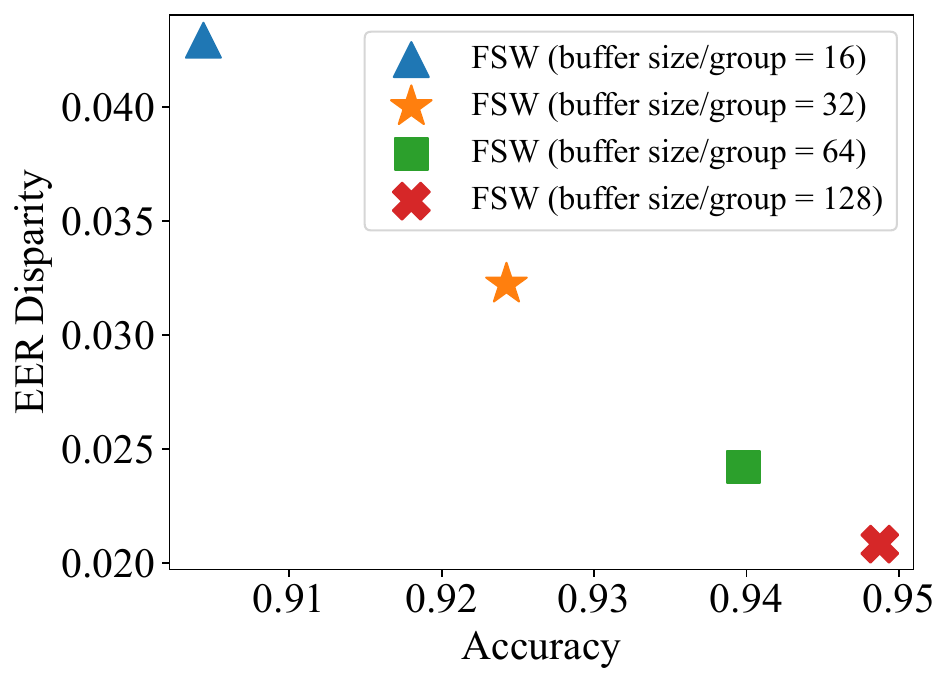}
        \caption{MNIST.}
    \end{subfigure}
    \begin{subfigure}[t]{0.48\textwidth}
        \includegraphics[width=\linewidth]{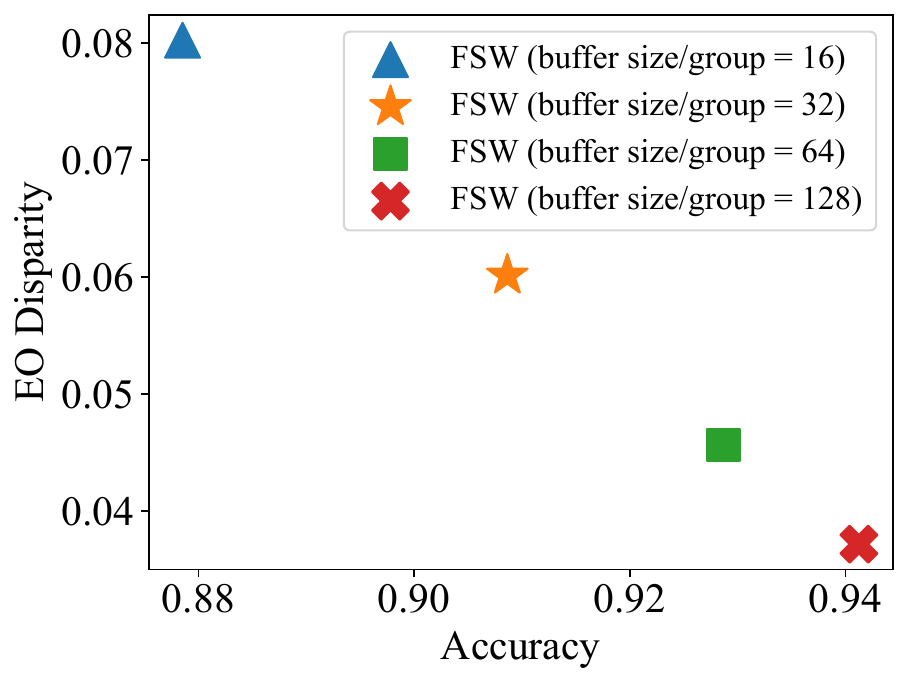}
        \caption{Biased MNIST.}
    \end{subfigure}
    \caption{Accuracy and fairness results of \method{} when varying the buffer size on the MNIST and Biased MNIST datasets.}\label{fig:exp_buffer_size}
\end{figure}

\begin{figure}[h!]
    \centering
    \begin{subfigure}[t]{0.485\textwidth}
        \includegraphics[width=\linewidth]{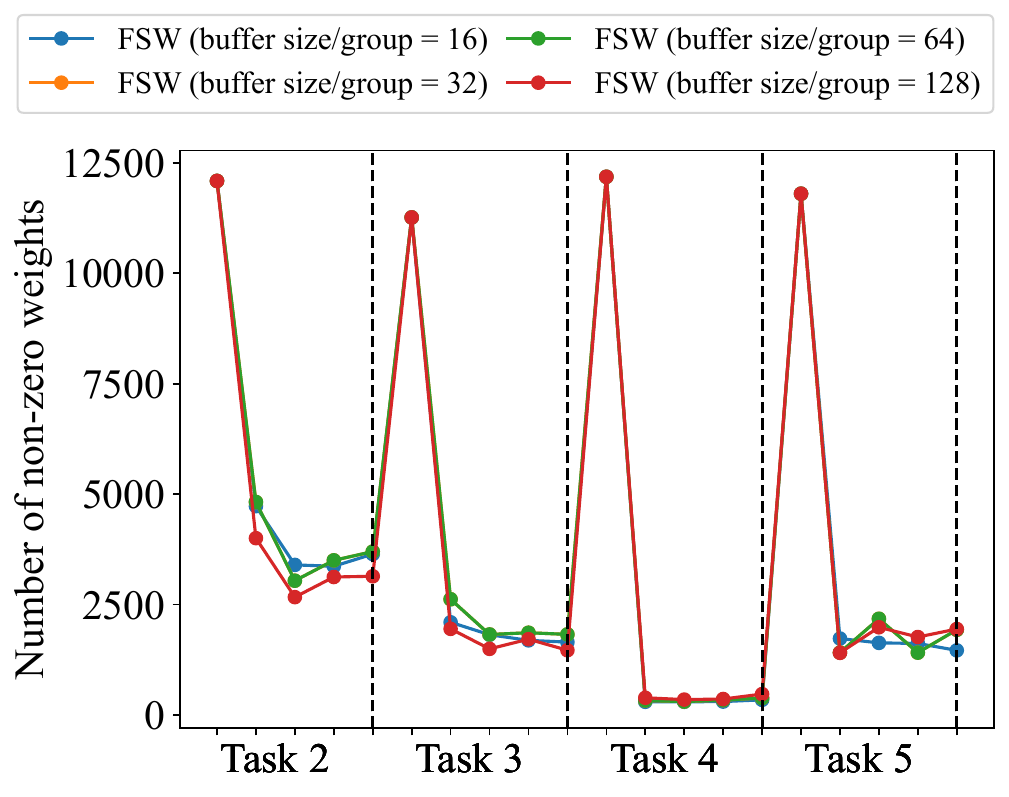}
        \caption{MNIST.}
    \end{subfigure}
    \begin{subfigure}[t]{0.485\textwidth}
        \includegraphics[width=\linewidth]{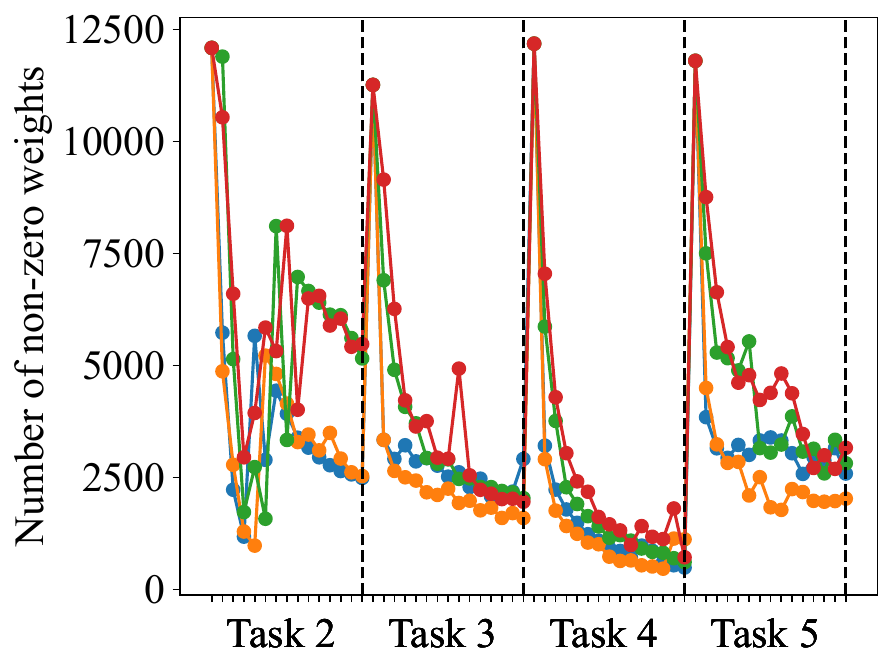}
        \caption{Biased MNIST.}
    \end{subfigure}
    \caption{Number of current task data assigned with non-zero weights when varying the buffer size on the MNIST and Biased MNIST datasets.}\label{fig:exp_non_zero}
\end{figure}






\newpage
\subsection{More Results on Sample Weighting Analysis}
\label{subsec:sampleweighting}

Continuing from Sec.~\ref{subsec:analysis}, we show more results from the sample weighting analysis for all sequential tasks of each dataset, as shown in the figures below (Fig.~\ref{fig:weights_mnist_eer}--Fig.~\ref{fig:weights_biasbios_dp}). 
\clearpage
\newpage

\begin{figure*}[h]
    \centering
    \begin{subfigure}[t]{0.248\textwidth}
        \includegraphics[width=\linewidth]{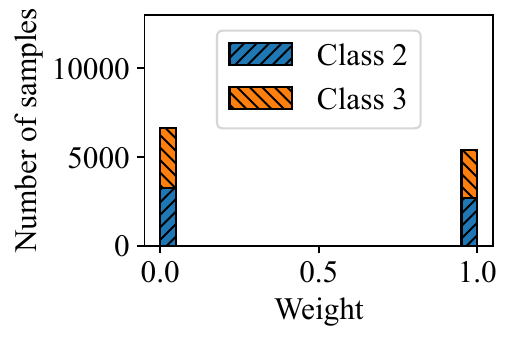}
        \caption{Task 2.}
    \end{subfigure}
    \begin{subfigure}[t]{0.2315\textwidth}
        \includegraphics[width=\linewidth]{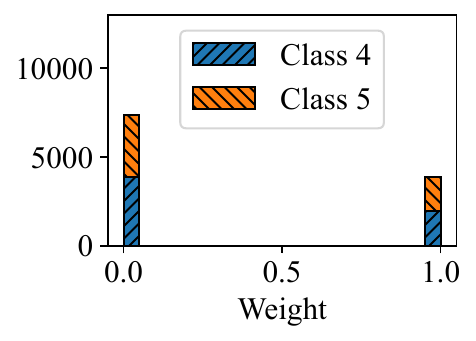}
        \caption{Task 3.}
    \end{subfigure}
    \begin{subfigure}[t]{0.2315\textwidth}
        \includegraphics[width=\linewidth]{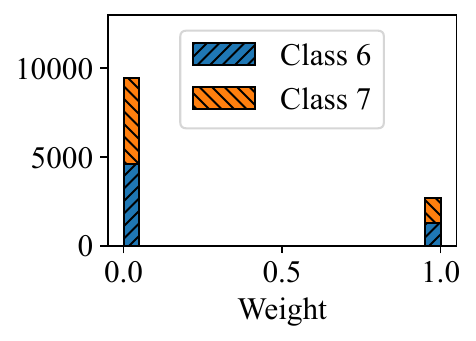}
        \caption{Task 4.}
    \end{subfigure}
    \begin{subfigure}[t]{0.2315\textwidth}
        \includegraphics[width=\linewidth]{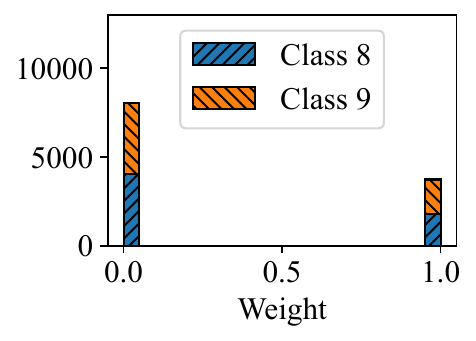}
        \caption{Task 5.}
    \end{subfigure}
    \caption{Distribution of sample weights for EER in sequential tasks of the MNIST dataset.}\label{fig:weights_mnist_eer}
\end{figure*}

\begin{figure*}[h]
    \centering
    \begin{subfigure}[t]{0.248\textwidth}
        \includegraphics[width=\linewidth]{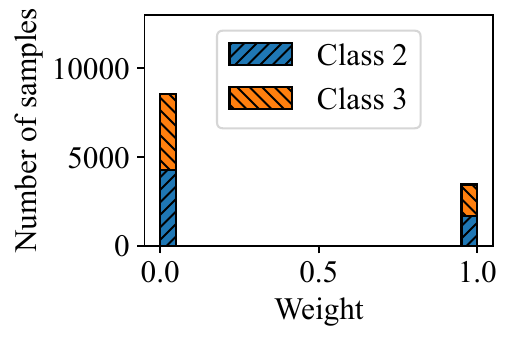}
        \caption{Task 2.}
    \end{subfigure}
    \begin{subfigure}[t]{0.2315\textwidth}
        \includegraphics[width=\linewidth]{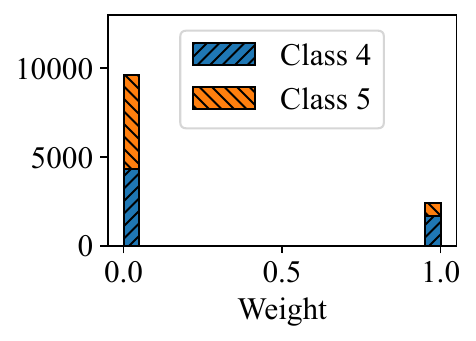}
        \caption{Task 3.}
    \end{subfigure}
    \begin{subfigure}[t]{0.2315\textwidth}
        \includegraphics[width=\linewidth]{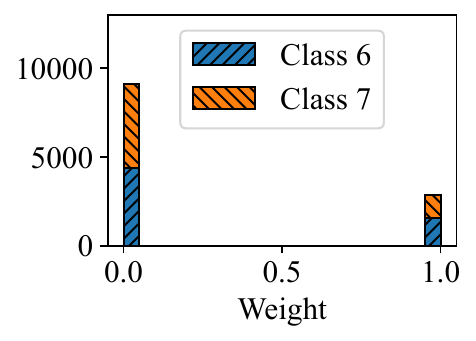}
        \caption{Task 4.}
    \end{subfigure}
    \begin{subfigure}[t]{0.2315\textwidth}
        \includegraphics[width=\linewidth]{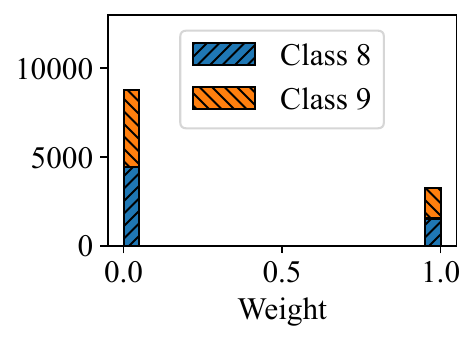}
        \caption{Task 5.}
    \end{subfigure}
    \caption{Distribution of sample weights for EER in sequential tasks of the FMNIST dataset.}\label{fig:weights_fmnist_eer}
\end{figure*}

\begin{figure*}[h]
    \centering
    \begin{subfigure}[t]{0.248\textwidth}
        \includegraphics[width=\linewidth]{figures/weight/BMNIST_EO_32_task_2.pdf}
        \caption{Task 2.}
    \end{subfigure}
    \begin{subfigure}[t]{0.2315\textwidth}
        \includegraphics[width=\linewidth]{figures/weight/BMNIST_EO_32_task_3.pdf}
        \caption{Task 3.}
    \end{subfigure}
    \begin{subfigure}[t]{0.2315\textwidth}
        \includegraphics[width=\linewidth]{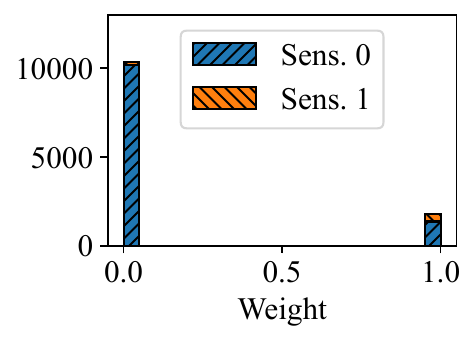}
        \caption{Task 4.}
    \end{subfigure}
    \begin{subfigure}[t]{0.2315\textwidth}
        \includegraphics[width=\linewidth]{figures/weight/BMNIST_EO_32_task_5.pdf}
        \caption{Task 5.}
    \end{subfigure}
    \caption{Distribution of sample weights for EO in sequential tasks of the Biased MNIST dataset.}\label{fig:weights_biased_mnist_eo}
\end{figure*}

\begin{figure*}[h]
    \centering
    \begin{subfigure}[t]{0.248\textwidth}
        \includegraphics[width=\linewidth]{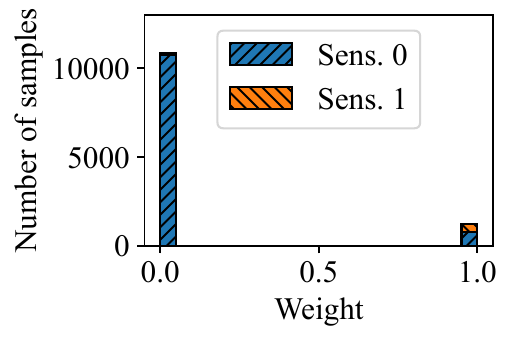}
        \caption{Task 2.}
    \end{subfigure}
    \begin{subfigure}[t]{0.2315\textwidth}
        \includegraphics[width=\linewidth]{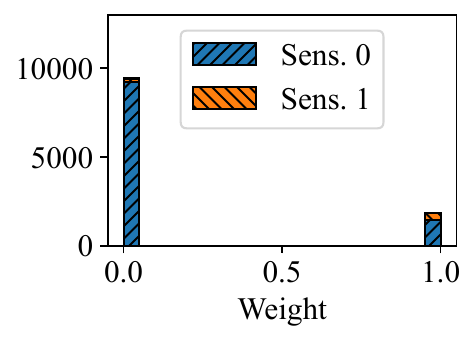}
        \caption{Task 3.}
    \end{subfigure}
    \begin{subfigure}[t]{0.2315\textwidth}
        \includegraphics[width=\linewidth]{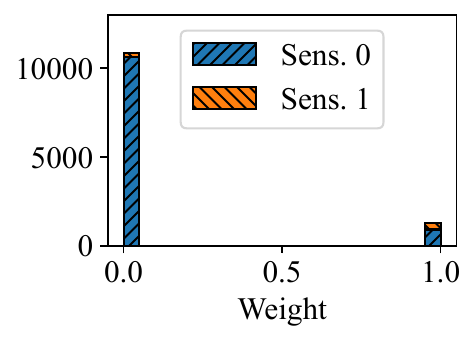}
        \caption{Task 4.}
    \end{subfigure}
    \begin{subfigure}[t]{0.2315\textwidth}
        \includegraphics[width=\linewidth]{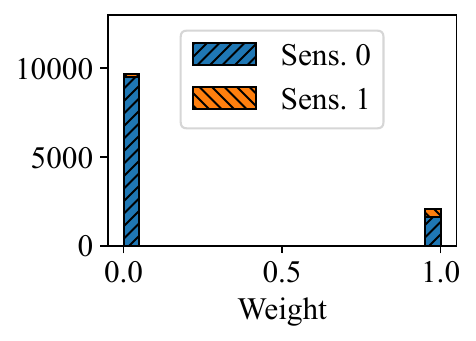}
        \caption{Task 5.}
    \end{subfigure}
    \caption{Distribution of sample weights for DP in sequential tasks of the Biased MNIST dataset.}\label{fig:weights_biased_mnist_dp}
\end{figure*}

\begin{figure*}[h]
    \centering
    \begin{subfigure}[t]{0.251\textwidth}
        \includegraphics[width=\linewidth]{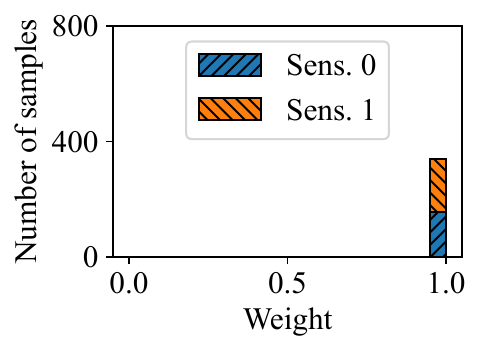}
        \caption{Task 2.}
    \end{subfigure}
    \begin{subfigure}[t]{0.234\textwidth}
        \includegraphics[width=\linewidth]{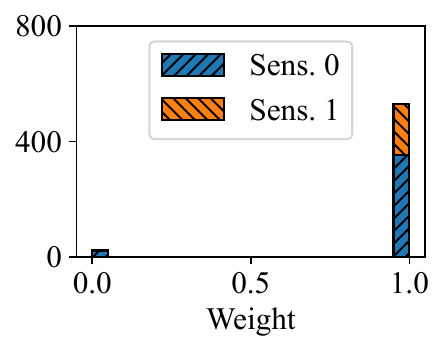}
        \caption{Task 3.}
    \end{subfigure}
    \caption{Distribution of sample weights for EO in sequential tasks of the DRUG dataset.}\label{fig:weights_drug_eo}
\end{figure*}

\begin{figure*}[h]
    \centering
    \begin{subfigure}[t]{0.251\textwidth}
        \includegraphics[width=\linewidth]{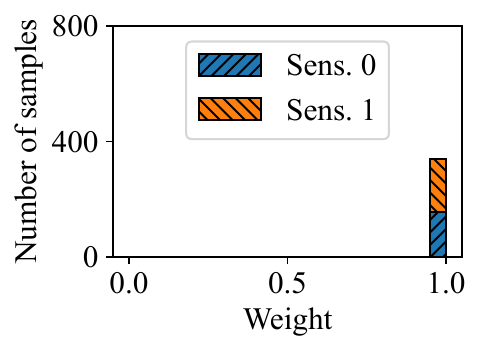}
        \caption{Task 2.}
    \end{subfigure}
    \begin{subfigure}[t]{0.234\textwidth}
        \includegraphics[width=\linewidth]{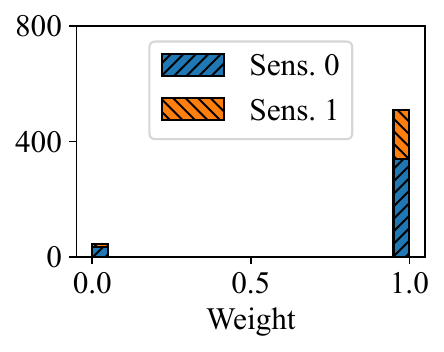}
        \caption{Task 3.}
    \end{subfigure}
    \caption{Distribution of sample weights for DP in sequential tasks of the DRUG dataset.}\label{fig:weights_drug_dp}
\end{figure*}

\begin{figure*}[h]
    \centering
    \begin{subfigure}[t]{0.248\textwidth}
        \includegraphics[width=\linewidth]{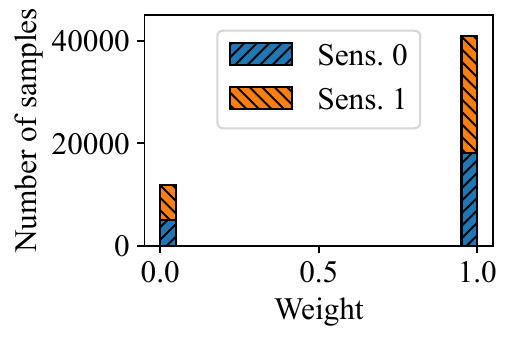}
        \caption{Task 2.}
    \end{subfigure}
    \begin{subfigure}[t]{0.2315\textwidth}
        \includegraphics[width=\linewidth]{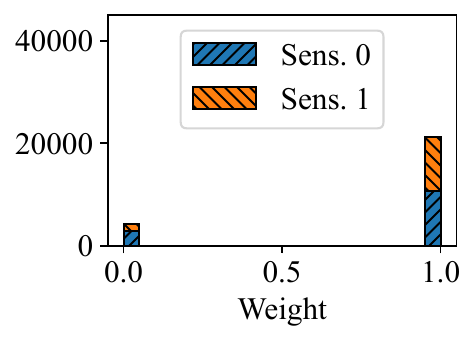}
        \caption{Task 3.}
    \end{subfigure}
    \begin{subfigure}[t]{0.2315\textwidth}
        \includegraphics[width=\linewidth]{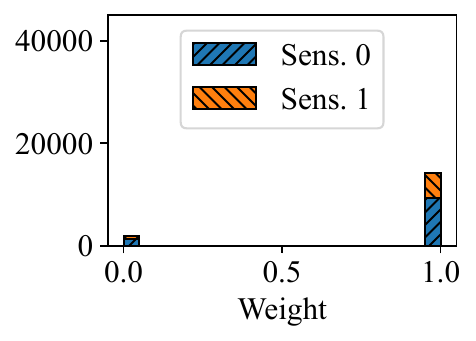}
        \caption{Task 4.}
    \end{subfigure}
    \begin{subfigure}[t]{0.2315\textwidth}
        \includegraphics[width=\linewidth]{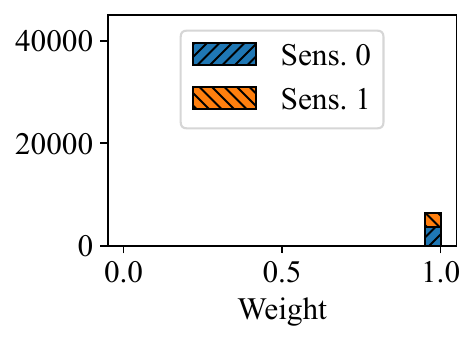}
        \caption{Task 5.}
    \end{subfigure}
    \caption{Distribution of sample weights for EO in sequential tasks of the BiasBios dataset.}\label{fig:weights_biasbios_eo}
\end{figure*}

\begin{figure*}[h]
    \centering
    \begin{subfigure}[t]{0.248\textwidth}
        \includegraphics[width=\linewidth]{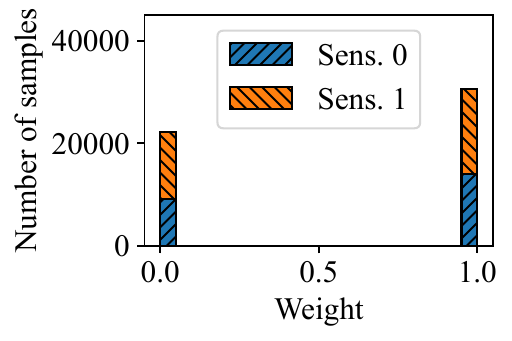}
        \caption{Task 2.}
    \end{subfigure}
    \begin{subfigure}[t]{0.2315\textwidth}
        \includegraphics[width=\linewidth]{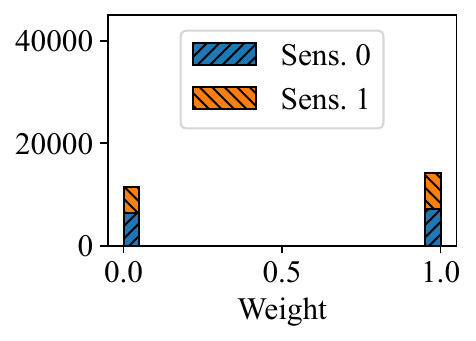}
        \caption{Task 3.}
    \end{subfigure}
    \begin{subfigure}[t]{0.2315\textwidth}
        \includegraphics[width=\linewidth]{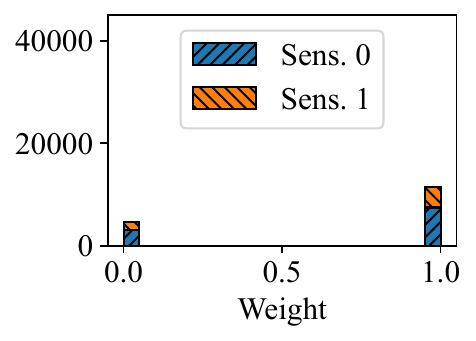}
        \caption{Task 4.}
    \end{subfigure}
    \begin{subfigure}[t]{0.2315\textwidth}
        \includegraphics[width=\linewidth]{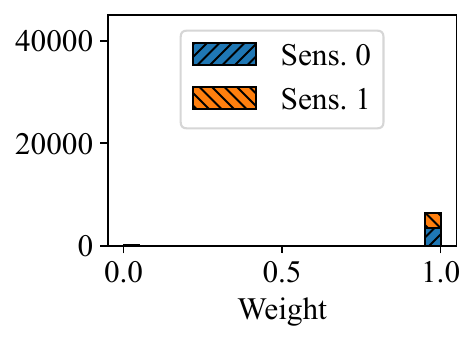}
        \caption{Task 5.}
    \end{subfigure}
    \caption{Distribution of sample weights for DP in sequential tasks of the BiasBios dataset.}\label{fig:weights_biasbios_dp}
\end{figure*}

\clearpage
\subsection{More Results on Ablation Study}
\label{appendix:ablation}

Continuing from Sec.~\ref{subsec:ablationstudy}, we present additional results of the ablation study to demonstrate the contribution of \method{} to the overall accuracy and fairness performance. The results are shown in Tables~\ref{tbl:performance_ablation_eer},~\ref{tbl:performance_ablation_eo}, and~\ref{tbl:performance_ablation_dp}.

\begin{table}[h]
  \setlength{\tabcolsep}{7.5pt}
  \caption{Accuracy and fairness results on the MNIST and FMNIST datasets, with respect to EER disparity with or without \method{}.} 
  \centering
  \begin{tabular}{l|cccc}
  \toprule
    {Methods} & \multicolumn{2}{c}{\sf  MNIST} & \multicolumn{2}{c}{\sf FMNIST} \\
    \cmidrule{1-5}
    {} & {Acc.} & {EER Disp.} & {Acc.} & {EER Disp.} \\
    \midrule
    {W/o \method{}} & .912\tiny{$\pm$.004} & .051\tiny{$\pm$.005} & .810\tiny{$\pm$.004} & .092\tiny{$\pm$.003} \\
    {\bf \method{}} & \textbf{.925\tiny{$\pm$.004}} & \textbf{.032\tiny{$\pm$.005}} & \textbf{.824\tiny{$\pm$.006}} & \textbf{.039\tiny{$\pm$.006}} \\
    \bottomrule
  \end{tabular}
  \label{tbl:performance_ablation_eer}
\end{table}

\begin{table}[h]
  \setlength{\tabcolsep}{1.6pt}
  \caption{Accuracy and fairness results on the Biased MNIST, DRUG, and BiasBios datasets with respect to EO disparity with or without \method{}.} 
  \centering
  \begin{tabular}{l|cccccc}
  \toprule
    {Methods} & \multicolumn{2}{c}{\sf Biased MNIST} & \multicolumn{2}{c}{\sf DRUG} & \multicolumn{2}{c}{\sf BiasBios} \\
    \cmidrule{1-7}
    {} & {Acc.} & {EO Disp.} & {Acc.} & {EO Disp.} & {Acc.} & {EO Disp.} \\
    \midrule
    {W/o \method{}} & \textbf{.910\tiny{$\pm$.003}} & .126\tiny{$\pm$.005} & .402\tiny{$\pm$.010} & .080\tiny{$\pm$.005} & .806\tiny{$\pm$.003} & .073\tiny{$\pm$.002} \\
    {\bf \method{}} & .909\tiny{$\pm$.004} & \textbf{.119\tiny{$\pm$.007}} & \textbf{.406\tiny{$\pm$.014}} & \textbf{.077\tiny{$\pm$.010}} & \textbf{.808\tiny{$\pm$.002}} & \textbf{.072\tiny{$\pm$.001}} \\
    \bottomrule
  \end{tabular}
  \label{tbl:performance_ablation_eo}
\end{table}

\begin{table}[h]
  \setlength{\tabcolsep}{1.6pt}
  \caption{Accuracy and fairness results on the Biased MNIST, DRUG, and BiasBios datasets with respect to DP disparity with or without \method{}.} 
  \centering
  \begin{tabular}{l|cccccc}
  \toprule
    {Methods} & \multicolumn{2}{c}{\sf Biased MNIST} & \multicolumn{2}{c}{\sf DRUG} & \multicolumn{2}{c}{\sf BiasBios} \\
    \cmidrule{1-7}
    {} & {Acc.} & {DP Disp.} & {Acc.} & {DP Disp.} & {Acc.} & {DP Disp.} \\
    \midrule
    {W/o \method{}} & \textbf{.910\tiny{$\pm$.003}} & .009\tiny{$\pm$.001} & .402\tiny{$\pm$.010} & .044\tiny{$\pm$.004} & .806\tiny{$\pm$.003} & \textbf{.022\tiny{$\pm$.000}} \\
    {\bf \method{}} & .904\tiny{$\pm$.004} & \textbf{.008\tiny{$\pm$.001}} & \textbf{.405\tiny{$\pm$.013}} & \textbf{.043\tiny{$\pm$.004}} & \textbf{.809\tiny{$\pm$.003}} & \textbf{.022\tiny{$\pm$.000}} \\
    \bottomrule
  \end{tabular}
  \label{tbl:performance_ablation_dp}
\end{table}

\subsection{Impact on Non-targeted Fairness Metrics}
\label{appendix:ablation_extension}

Continuing from Sec.~\ref{subsec:ablationstudy}, we further investigate the effect of optimizing \method{} for a specific fairness metric on other non-targeted fairness metrics. The results are shown in Tables~\ref{tbl:performance_ablation_extension_BMNIST}, ~\ref{tbl:performance_ablation_extension_DRUG}, and~\ref{tbl:performance_ablation_extension_BIOS}. 

\begin{table}[h]
  \setlength{\tabcolsep}{3.4pt}
  \caption{Accuracy and more fairness metrics (including EO, DP, and EER Disparity) results on the Biased MNIST dataset without \method{}, \method{} with respect to EO and DP disparity.} 
  \centering
  \begin{tabular}{l|cccc}
  \toprule
    {Methods} & {Acc.} & {EO Disp.} & {DP Disp.} & {EER Disp.} \\
    \midrule
    {W/o \method{}} & \textbf{.910\tiny{$\pm$.003}} & .126\tiny{$\pm$.005}& .009\tiny{$\pm$.001}& .032\tiny{$\pm$.002} \\
    {\method{} (w.r.t. EO Disp.)} & .909\tiny{$\pm$.004} & \textbf{.119\tiny{$\pm$.007}}& \textbf{.008\tiny{$\pm$.001}}& .032\tiny{$\pm$.003}\\
    {\method{} (w.r.t. DP Disp.)} & .904\tiny{$\pm$.004} & .124\tiny{$\pm$.008}& \textbf{.008\tiny{$\pm$.001}}& \textbf{.029\tiny{$\pm$.004}}\\
    \bottomrule
  \end{tabular}
  \label{tbl:performance_ablation_extension_BMNIST}
\end{table}

\begin{table}[h]
  \setlength{\tabcolsep}{3.4pt}
  \caption{Accuracy and more fairness metrics (including EO, DP, and EER Disparity) results on the DRUG dataset without \method{}, \method{} with respect to EO and DP disparity.} 
  \centering
  \begin{tabular}{l|cccc}
  \toprule
    {Methods} & {Acc.} & {EO Disp.} & {DP Disp.} & {EER Disp.} \\
    \midrule
    {W/o \method{}} & .402\tiny{$\pm$.010} & .080\tiny{$\pm$.005}& .044\tiny{$\pm$.004}& .078\tiny{$\pm$.008}\\
    {\method{} (w.r.t. EO Disp.)} & \textbf{.406\tiny{$\pm$.014}} & \textbf{.077\tiny{$\pm$.010}} & \textbf{.043\tiny{$\pm$.004}} & \textbf{.077\tiny{$\pm$.007}} \\
    {\method{} (w.r.t. DP Disp.)} & .405\tiny{$\pm$.013} & .079\tiny{$\pm$.006}& \textbf{.043\tiny{$\pm$.004}} & .078\tiny{$\pm$.006} \\
    \bottomrule
  \end{tabular}
  \label{tbl:performance_ablation_extension_DRUG}
\end{table}

\begin{table}[h]
  \setlength{\tabcolsep}{3.4pt}
  \caption{Accuracy and more fairness metrics (including EO, DP, and EER Disparity) results on the BiasBios dataset without \method{}, \method{} with respect to EO and DP disparity.} 
  \centering
  \begin{tabular}{l|cccc}
  \toprule
    {Methods} & {Acc.} & {EO Disp.} & {DP Disp.} & {EER Disp.} \\
    \midrule
    {W/o \method{}}  & .806\tiny{$\pm$.003} & .073\tiny{$\pm$.002} & \textbf{.022\tiny{$\pm$.000}} & .056\tiny{$\pm$.002}\\
    {\method{} (w.r.t. EO Disp.)} & .808\tiny{$\pm$.002} & .072\tiny{$\pm$.001} & \textbf{.022\tiny{$\pm$.000}} & .049\tiny{$\pm$.005}\\
    {\method{} (w.r.t. DP Disp.)} & \textbf{.809\tiny{$\pm$.003}} & \textbf{.069\tiny{$\pm$.002}} & \textbf{.022\tiny{$\pm$.000}}& \textbf{.036\tiny{$\pm$.009}}\\
    \bottomrule
  \end{tabular}
  \label{tbl:performance_ablation_extension_BIOS}
\end{table}
\begin{table*}[t]
  \setlength{\tabcolsep}{14.9pt}
  \caption{Accuracy and fairness (DP disparity) results when combining fair post-processing techniques ($\epsilon$-fair) with continual learning methods ({\it iCaRL}, {\it WA}, {\it CLAD}, {\it GSS}, {\it OCS}, and \method{}). Due to the excessive time ($>$5 days) required to run {\it OCS} on BiasBios, we are not able to measure the results and mark them as `--'.}
  \centering
  \begin{tabular}{l|cccccc}
  \toprule
    {Methods} & \multicolumn{2}{c}{\sf Biased MNIST} & \multicolumn{2}{c}{\sf DRUG} & \multicolumn{2}{c}{\sf BiasBios} \\
    \cmidrule{1-7}
    {} & {Acc.} & {DP Disp.} & {Acc.} & {DP Disp.} & {Acc.} & {DP Disp.} \\
    \midrule
    {iCaRL} & .802\tiny{$\pm$.008} & .015\tiny{$\pm$.001} & \textbf{.444\tiny{$\pm$.025}} & .093\tiny{$\pm$.009} & \textbf{.829\tiny{$\pm$.002}} & .022\tiny{$\pm$.000} \\
    {WA} & .916\tiny{$\pm$.002} & .009\tiny{$\pm$.001} & .408\tiny{$\pm$.022} & .067\tiny{$\pm$.013} & .796\tiny{$\pm$.003} & .022\tiny{$\pm$.000} \\
    {CLAD} & .871\tiny{$\pm$.012} & .013\tiny{$\pm$.001} & .410\tiny{$\pm$.026} & .069\tiny{$\pm$.019} & .799\tiny{$\pm$.003} & .022\tiny{$\pm$.000} \\
    {GSS}& .809\tiny{$\pm$.005} & .039\tiny{$\pm$.003} & .392\tiny{$\pm$.022} & .065\tiny{$\pm$.015} & .808\tiny{$\pm$.003} & .023\tiny{$\pm$.000} \\
    {OCS}& .824\tiny{$\pm$.007} & .035\tiny{$\pm$.003} & .393\tiny{$\pm$.017} & .053\tiny{$\pm$.012} & -- & -- \\
    {\bf \method{}}& .904\tiny{$\pm$.004} & .008\tiny{$\pm$.001} & .405\tiny{$\pm$.013} & .043\tiny{$\pm$.004} & \underline{.809\tiny{$\pm$.003}} & .022\tiny{$\pm$.000} \\
    \cmidrule{1-7}
    {iCaRL -- $\epsilon$-fair} & .944\tiny{$\pm$.008} & \underline{.006\tiny{$\pm$.002}} & \underline{.427\tiny{$\pm$.018}} & \underline{.026\tiny{$\pm$.004}} & .753\tiny{$\pm$.002} & \underline{.017\tiny{$\pm$.000}} \\
    {WA -- $\epsilon$-fair} & \textbf{.953\tiny{$\pm$.003}} & \underline{.006\tiny{$\pm$.002}} & .404\tiny{$\pm$.021} & .044\tiny{$\pm$.020} & .708\tiny{$\pm$.003} & \textbf{.016\tiny{$\pm$.000}} \\
    {CLAD -- $\epsilon$-fair} & .924\tiny{$\pm$.012} & \underline{.006\tiny{$\pm$.002}} & .406\tiny{$\pm$.027} & .030\tiny{$\pm$.010} & .716\tiny{$\pm$.004} & \textbf{.016\tiny{$\pm$.001}} \\
    {GSS -- $\epsilon$-fair} & .938\tiny{$\pm$.006} & \underline{.006\tiny{$\pm$.002}} & .382\tiny{$\pm$.014} & .035\tiny{$\pm$.017} & .717\tiny{$\pm$.005} & \textbf{.016\tiny{$\pm$.000}} \\
    {OCS -- $\epsilon$-fair} & \underline{.952\tiny{$\pm$.003}} & .032\tiny{$\pm$.004} & .384\tiny{$\pm$.009} & .051\tiny{$\pm$.002} & -- & -- \\
    {\bf \method{} -- $\epsilon$-fair} & .906\tiny{$\pm$.006} & \textbf{.005\tiny{$\pm$.001}} & .405\tiny{$\pm$.013} & \textbf{.021\tiny{$\pm$.004}} & .723\tiny{$\pm$.004} & \textbf{.016\tiny{$\pm$}.000} \\
    \bottomrule
  \end{tabular}
  \label{tbl:performance_appendix_post}
\end{table*}

\subsection{More Results on Integrating \method{} with a Fair Post-processing Method}
\label{appendix:postprocessing}

Continuing from Sec.~\ref{subsec:postprocessing}, we provide additional results on integrating with a fair post-processing method ($\epsilon$-fair) as shown in Table~\ref{tbl:performance_appendix_post}. 



\subsection{Alternative Loss Function for Group Fairness Metrics}
\label{appendix:alternative_loss_design}

Continuing from Sec.~\ref{sec:unfairforgetting}, we use cross-entropy loss disparity to approximate group fairness metrics such as EER, EO, and DP disparity. Both theoretical and empirical results show that the cross-entropy loss disparity can effectively approximate these group fairness metrics, as discussed in Sec.~\ref{appendix:CE-as-approximator}. However, the cross-entropy loss disparity is not the only possible type of loss for approximating the group fairness metrics; the disparity of other loss functions may yield better performance.
Our method can be applied regardless of the loss definition if (1) the loss update process can be linearly approximated (as in Sec.~\ref{appendix:linear-approximation}) and (2) the loss disparity promotes fairness (as in Sec.~\ref{appendix:CE-as-approximator}).

To verify if \method{} can also be effective with different loss function designs, we conduct simple experiments using hinge loss (i.e., $\sum_{j\neq y_i}\max(0, s_j-s_{y_i} + 1$) where $y_i$ is the true label, and $s_j$ is the softmax output for label $j$) to approximate group fairness metrics in \method{}. The results are shown in Tables~\ref{tbl:hingeloss_eer},~\ref{tbl:hingeloss_eo}, and~\ref{tbl:hingeloss_dp}. Overall, both methods show comparable accuracy-fairness results, suggesting that \method{} performs well regardless of the type of loss function used to approximate group fairness metrics. 
Here, we would like to note that the cross-entropy loss disparity is widely used and empirically verified as a reasonable proxy for capturing group fairness metrics~\citep{DBLP:conf/naacl/ShenHCBF22, DBLP:conf/iclr/Roh0WS21, roh2023drfairness, gupta2024fairly}, which is why we use it, although we could also use other losses. 

\begin{table}[H]
  \setlength{\tabcolsep}{6.6pt}
  \caption{Accuracy and fairness results on the MNIST and FMNIST datasets with respect to EER disparity. ``FSW (hinge)'' uses hinge loss, while ``FSW'' uses cross-entropy loss to approximate the group fairness metric.} 
  \centering
  \begin{tabular}{l|cccc}
  \toprule
    {Methods} & \multicolumn{2}{c}{\sf MNIST} & \multicolumn{2}{c}{\sf FMNIST} \\
    \cmidrule{1-5}
    {} & {Acc.} & {EER Disp.} & {Acc.} & {EER Disp.} \\
    \midrule
    {\method{}} & \textbf{.925\tiny{$\pm$.004}} & .032\tiny{$\pm$.005} & .824\tiny{$\pm$.006} & \textbf{.039\tiny{$\pm$.006}} \\
    {\method{} (hinge)} & \textbf{.925\tiny{$\pm$.003}} & \textbf{.030\tiny{$\pm$.006}} & \textbf{.825\tiny{$\pm$.006}} & \textbf{.039\tiny{$\pm$.005}} \\
    \bottomrule
  \end{tabular}
  \label{tbl:hingeloss_eer}
\end{table}

\begin{table}[H]
  \setlength{\tabcolsep}{0.95pt}
  \caption{Accuracy and fairness results on the Biased MNIST, DRUG, and BiasBios datasets with respect to EO disparity. The other settings are the same as in Table~\ref{tbl:hingeloss_eer}.} 
  \centering
  \begin{tabular}{l|cccccc}
  \toprule
    {Methods} & \multicolumn{2}{c}{\sf Biased MNIST} & \multicolumn{2}{c}{\sf DRUG} & \multicolumn{2}{c}{\sf BiasBios} \\
    \cmidrule{1-7}
    {} & {Acc.} & {EO Disp.} & {Acc.} & {EO Disp.} & {Acc.} & {EO Disp.} \\
    \midrule
    {\method{}} & \textbf{.909\tiny{$\pm$.004}} & \textbf{.119\tiny{$\pm$.007}} & \textbf{.406\tiny{$\pm$.014}} & \textbf{.077\tiny{$\pm$.010}} & \textbf{.808\tiny{$\pm$.002}} & .072\tiny{$\pm$.001} \\
    {\method{} (hinge)} & \textbf{.909\tiny{$\pm$.004}} & \textbf{.119\tiny{$\pm$.006}} & \textbf{.406\tiny{$\pm$.014}} & \textbf{.077\tiny{$\pm$.010}} & .807\tiny{$\pm$.002} & \textbf{.071\tiny{$\pm$.002}} \\
    \bottomrule
  \end{tabular}
  \label{tbl:hingeloss_eo}
\end{table}

\begin{table}[H]
  \setlength{\tabcolsep}{1pt}
  \caption{Accuracy and fairness results on the Biased MNIST, DRUG, and BiasBios datasets with respect to DP disparity. The other settings are the same as in Table~\ref{tbl:hingeloss_eer}.} 
  \centering
  \begin{tabular}{l|cccccc}
  \toprule
    {Methods} & \multicolumn{2}{c}{\sf Biased MNIST} & \multicolumn{2}{c}{\sf DRUG} & \multicolumn{2}{c}{\sf BiasBios} \\
    \cmidrule{1-7}
    {} & {Acc.} & {DP Disp.} & {Acc.} & {DP Disp.} & {Acc.} & {DP Disp.} \\
    \midrule
    {\method{}} & \textbf{.904\tiny{$\pm$.004}} & \textbf{.008\tiny{$\pm$.001}} & \textbf{.405\tiny{$\pm$.013}} & \textbf{.043\tiny{$\pm$.004}} & \textbf{.809\tiny{$\pm$.003}} & \textbf{.022\tiny{$\pm$.000}} \\
    {\method{} (hinge)} & \textbf{.904\tiny{$\pm$.004}} & \textbf{.008\tiny{$\pm$.001}} & \textbf{.405\tiny{$\pm$.013}} & \textbf{.043\tiny{$\pm$.004}} & .807\tiny{$\pm$.006} & \textbf{.022\tiny{$\pm$.000}} \\
    \bottomrule
  \end{tabular}
  \label{tbl:hingeloss_dp}
\end{table}

\subsection{Fairness Considerations in Buffering Strategy}
\label{appendix:buffer_strategy}

Continuing from Sec.~\ref{subsec:alg}, we further elaborate on the distinct roles of buffering and sample weighting in our framework. Buffering primarily aims to preserve representative and diverse samples from previously learned groups, whereas sample weighting explicitly addresses the mitigation of unfair catastrophic forgetting during model training.
Selecting buffer samples based on high fairness weights—those considered important for current-task fairness—might initially appear beneficial; however, this approach could inadvertently distort the original distribution of groups within the buffer. Modified distributions might negatively impact classification performance and complicate the accurate determination of fairness weights in future tasks. This complication arises because the buffer acts as a reference point for evaluating both fairness and accuracy metrics. Consequently, to maintain representative distributions and ensure reliable fairness measurement across tasks, we utilize a simple random sampling technique for buffer sample selection.
For a fair comparison, we apply the same buffer usage across other baselines, without altering each method’s original buffer selection rule (e.g., herding or gradient-based selection).

To empirically validate this reasoning, we compare several buffering strategies, including random sampling (the original \method{}) and sampling with a large-weight-first strategy. The results are shown in Tables~\ref{tbl:buffer_strategy_eer},~\ref{tbl:buffer_strategy_eo}, and~\ref{tbl:buffer_strategy_dp}. Although the large-weight-first sampling strategy achieves comparable fairness results, the original random sampling method generally achieves superior accuracy, aligning with our theoretical expectations, while providing comparable or improved fairness performance.

\begin{table}[H]
  \setlength{\tabcolsep}{6.0pt}
  \small
  \caption{Accuracy and fairness results on the MNIST and FMNIST datasets with respect to EER disparity. ``FSW (large-weight-first)'' denotes a buffering strategy prioritizing samples with large weights, while ``FSW'' uses simple random sampling.} 
  \centering
  \begin{tabular}{L{2.2cm}|cccc}
  \toprule
    {Methods} & \multicolumn{2}{c}{\sf MNIST} & \multicolumn{2}{c}{\sf FMNIST} \\
    \cmidrule{1-5}
    {} & {Acc.} & {EER Disp.} & {Acc.} & {EER Disp.} \\
    \midrule
    {\method{}} & .925\tiny{$\pm$.004} & .032\tiny{$\pm$.005} & .824\tiny{$\pm$.006} & \textbf{.039\tiny{$\pm$.006}} \\
    {\method{} (large-weight-first)} & \raisebox{-1.5ex}{\textbf{.927\tiny{$\pm$.005}}} & \raisebox{-1.5ex}{\textbf{.030\tiny{$\pm$.003}}} & \raisebox{-1.5ex}{\textbf{.827\tiny{$\pm$.005}}} & \raisebox{-1.5ex}{.041\tiny{$\pm$.008}} \\
    \bottomrule
  \end{tabular}
  \label{tbl:buffer_strategy_eer}
\end{table}

\begin{table}[H]
  \setlength{\tabcolsep}{0.8pt}
  \small
  \caption{Accuracy and fairness results on the Biased MNIST, DRUG, and BiasBios datasets with respect to EO disparity. The other settings are the same as in Table~\ref{tbl:buffer_strategy_eer}.} 
  \centering
  \begin{tabular}{L{2.25cm}|cccccc}
  \toprule
    {\;\;Methods} & \multicolumn{2}{c}{\sf Biased MNIST} & \multicolumn{2}{c}{\sf DRUG} & \multicolumn{2}{c}{\sf BiasBios} \\
    \cmidrule{1-7}
    {} & {Acc.} & {EO Disp.} & {Acc.} & {EO Disp.} & {Acc.} & {EO Disp.} \\
    \midrule
    {\;\;\method{}} & \textbf{.909\tiny{$\pm$.004}} & \textbf{.119\tiny{$\pm$.007}} & \textbf{.406\tiny{$\pm$.014}} & \textbf{.077\tiny{$\pm$.010}} & \textbf{.808\tiny{$\pm$.002}} & .072\tiny{$\pm$.001} \\
    {\;\;\method{} \,(large-weight-first)} & \raisebox{-1.5ex}{.893\tiny{$\pm$.006}} & \raisebox{-1.5ex}{.124\tiny{$\pm$.005}} & \raisebox{-1.5ex}{.405\tiny{$\pm$.012}} & \raisebox{-1.5ex}{\textbf{.077\tiny{$\pm$.007}}} & \raisebox{-1.5ex}{.800\tiny{$\pm$.005}} & \raisebox{-1.5ex}{\textbf{.071\tiny{$\pm$.001}}} \\
    \bottomrule
  \end{tabular}
  \label{tbl:buffer_strategy_eo}
\end{table}

\begin{table}[H]
  \setlength{\tabcolsep}{.8pt}
  \small
  \caption{Accuracy and fairness results on the Biased MNIST, DRUG, and BiasBios datasets with respect to DP disparity. The other settings are the same as in Table~\ref{tbl:buffer_strategy_eer}.} 
  \centering
  \begin{tabular}{L{2.25cm}|cccccc}
  \toprule
    {\;\;Methods} & \multicolumn{2}{c}{\sf Biased MNIST} & \multicolumn{2}{c}{\sf DRUG} & \multicolumn{2}{c}{\sf BiasBios} \\
    \cmidrule{1-7}
    {} & {Acc.} & {DP Disp.} & {Acc.} & {DP Disp.} & {Acc.} & {DP Disp.} \\
    \midrule
    {\;\;\method{}} & \textbf{.904\tiny{$\pm$.004}} & \textbf{.008\tiny{$\pm$.001}} & \textbf{.405\tiny{$\pm$.013}} & .043\tiny{$\pm$.004} & \textbf{.809\tiny{$\pm$.003}} & \textbf{.022\tiny{$\pm$.000}} \\
    {\;\;\method{} \,(large-weight-first)} & \raisebox{-1.5ex}{.876\tiny{$\pm$.009}} & \raisebox{-1.5ex}{\textbf{.008\tiny{$\pm$.001}}} & \raisebox{-1.5ex}{.404\tiny{$\pm$.013}} & \raisebox{-1.5ex}{\textbf{.041\tiny{$\pm$.005}}} & \raisebox{-1.5ex}{.787\tiny{$\pm$.014}} & \raisebox{-1.5ex}{\textbf{.022\tiny{$\pm$.001}}} \\
    \bottomrule
  \end{tabular}
  \label{tbl:buffer_strategy_dp}
\end{table}

\subsection{Utilizing the Gradients of the Model's Last Layer}
\label{appendix:gradient_last_layer}

Continuing from Sec.~\ref{subsec:alg}, we utilize the gradients of the model’s last layer during computation of the gradient vectors. Although using only the gradients from the model's last layer may result in the loss of information from earlier layers, it significantly improves computational efficiency by reducing the gradient calculation time. This approach is theoretically supported by several studies\,\citep{DBLP:conf/icml/KatharopoulosF18, DBLP:conf/iclr/AshZK0A20, DBLP:conf/icml/MirzasoleimanBL20}.
Neglecting earlier layers is equivalent to substituting the gradient of model parameters in $\nabla_{\theta} \ell(f_{\theta}^{l-1}, d_i)$ of Eq.~\ref{eq:loss_update} to zero when the model parameter $\theta$ does not belong to the last layer. This simplification aligns with the statements from the references, ``the variation of the gradient norm is mostly captured by the gradient of the loss function with respect to the pre-activation outputs of the last layer,'' which implies that the last layer parameters dominate gradient contributions, making the parameters from earlier layers ignorable.
Several studies\,\citep{DBLP:conf/aaai/KillamsettySRI21, DBLP:conf/icml/KillamsettySRDI21, DBLP:conf/aaai/KimHW24} also only utilize the last layer's gradients for efficiency.  

To observe the efficiency of utilizing the last layer's gradient, we conduct experiments comparing the performance and runtime between utilizing the full gradient and the last layer's gradient, as shown in Table~\ref{tbl:gradient_last_layer}. Using only the last layer's gradients provides comparable accuracy and fairness to using gradients from all layers, while significantly reducing the gradient computation time. The CPLEX computation time is comparable since the number of variables and constraints remains unchanged. In addition, the model training time increases when using gradients from all layers, primarily due to an increase in the average number of non-zero sample weights (from 2863.4 to 4280.5), resulting in a higher number of batches during training.

\begin{table}[H]
  \setlength{\tabcolsep}{1.1pt}
  \caption{Accuracy, fairness, and runtime results on the Biased MNIST datasets with respect to EO disparity.  ``FSW'' utilizes gradients of the model’s last layer during computation of the gradient vectors, while ``FSW (all layers)'' utilizes gradients of all model parameters during computation.} 
  \vspace{-0.cm}

  \centering
  \begin{tabular}{l|C{0.9cm}C{0.9cm}C{1.56cm}C{1.45cm}C{1.16cm}}
  \toprule
    {Methods} & {Acc.} & {EO Disp.} & {Gradient Computation} & {CPLEX Computation} & {Model Training} \\
    \midrule
    {\method{}} & .909\tiny{$\pm$.004} & .119\tiny{$\pm$.007} & \textbf{233\tiny{$\pm$017}} (s) & \textbf{507\tiny{$\pm$004}} (s) & \textbf{94\tiny{$\pm$003}} (s) \\
    {\method{} (all layers)} & \textbf{.910\tiny{$\pm$.005}} & \textbf{.117\tiny{$\pm$.009}} & 23,843\tiny{$\pm$882} (s) & 578\tiny{$\pm$003} (s) & 184\tiny{$\pm$010} (s)\\
    \bottomrule
  \end{tabular}
  \label{tbl:gradient_last_layer}
  \vspace{-0.05cm}
\end{table}

\subsection{Alternative Optimization Solver for Linear Programming}
\label{appendix:optimization_solver}

Continuing from Sec.~\ref{subsec:expsettings}, we conduct experiments comparing CPLEX with another optimization solver, HiGHS\,\citep{huangfu2018parallelizing}. Tables~\ref{tbl:optimization_solver_eer},~\ref{tbl:optimization_solver_eo}, and~\ref{tbl:optimization_solver_dp} show accuracy and fairness on \method{} with CPLEX and HiGHS solver with respect to EER, EO, and DP disparity.
We observed that both solvers show comparable accuracy-fairness results, suggesting that FSW is robust to the choice of LP solver in practice. 
While other solvers can be used, we use CPLEX in our paper due to its strong solution stability and high speed in solving large-scale problems with numerous variables and constraints. CPLEX also provides comprehensive tools for in-depth solution analysis, and is thus widely used in the literature \,\citep{sridhar2013approximate, gearhart2013comparison, zhang2023iflipper}.

\begin{table}[H]
  \setlength{\tabcolsep}{5.7pt}
  \small
  \caption{Accuracy and fairness results on the MNIST and FMNIST datasets with respect to EER disparity. ``FSW (using CPLEX)'' uses CPLEX as an optimization solver, while ``FSW (using HiGHS)'' uses HiGHS as an optimization solver.} 
  \vspace{-0.cm}
  \centering
  \begin{tabular}{l|cccc}
  \toprule
    {Methods} & \multicolumn{2}{c}{\sf MNIST} & \multicolumn{2}{c}{\sf FMNIST} \\
    \cmidrule{1-5}
    {} & {Acc.} & {EER Disp.} & {Acc.} & {EER Disp.} \\
    \midrule
    {\method{} (using CPLEX)} & \textbf{.925\tiny{$\pm$.004}} & .032\tiny{$\pm$.005} & .824\tiny{$\pm$.006} & \textbf{.039\tiny{$\pm$.006}} \\
    {\method{} (using HiGHS)} & \textbf{.925\tiny{$\pm$.004}} & \textbf{.031\tiny{$\pm$.006}} & \textbf{.825\tiny{$\pm$.006}} & \textbf{.039\tiny{$\pm$.007}} \\
    \bottomrule
  \end{tabular}
  \label{tbl:optimization_solver_eer}
  \vspace{-0.05cm}
\end{table}

\begin{table}[H]
  \setlength{\tabcolsep}{0.7pt}
  \small
  \caption{Accuracy and fairness results on the Biased MNIST, DRUG, and BiasBios datasets with respect to EO disparity. The other settings are the same as in Table~\ref{tbl:optimization_solver_eer}.} 
  \vspace{-0.cm}
  \centering
  \begin{tabular}{l|cccccc}
  \toprule
    {Methods} & \multicolumn{2}{c}{\sf Biased MNIST} & \multicolumn{2}{c}{\sf DRUG} & \multicolumn{2}{c}{\sf BiasBios} \\
    \cmidrule{1-7}
    {} & {Acc.} & {EO Disp.} & {Acc.} & {EO Disp.} & {Acc.} & {EO Disp.} \\
    \midrule
    {\method{} (using CPLEX)} & \textbf{.909\tiny{$\pm$.004}} & .119\tiny{$\pm$.007} & \textbf{.406\tiny{$\pm$.014}} & \textbf{.077\tiny{$\pm$.010}} & \textbf{.808\tiny{$\pm$.002}} & .072\tiny{$\pm$.001} \\
    {\method{} (using HiGHS)} & .907\tiny{$\pm$.005} & \textbf{.117\tiny{$\pm$.006}} & \textbf{.406\tiny{$\pm$.014}} & \textbf{.077\tiny{$\pm$.010}} & .805\tiny{$\pm$.004} & \textbf{.070\tiny{$\pm$.002}} \\
    \bottomrule
  \end{tabular}
  \label{tbl:optimization_solver_eo}
\vspace{-0.05cm}
\end{table}

\begin{table}[H]
  \setlength{\tabcolsep}{0.7pt}
  \small
  \caption{Accuracy and fairness results on the Biased MNIST, DRUG, and BiasBios datasets with respect to DP disparity. The other settings are the same as in Table~\ref{tbl:optimization_solver_eer}.} 
  \vspace{-0.cm}
  \centering
  \begin{tabular}{l|cccccc}
  \toprule
    {Methods} & \multicolumn{2}{c}{\sf Biased MNIST} & \multicolumn{2}{c}{\sf DRUG} & \multicolumn{2}{c}{\sf BiasBios} \\
    \cmidrule{1-7}
    {} & {Acc.} & {DP Disp.} & {Acc.} & {DP Disp.} & {Acc.} & {DP Disp.} \\
    \midrule
    {\method{} (using CPLEX)} & \textbf{.904\tiny{$\pm$.004}} & \textbf{.008\tiny{$\pm$.001}} & \textbf{.405\tiny{$\pm$.013}} & \textbf{.043\tiny{$\pm$.004}} & \textbf{.809\tiny{$\pm$.003}} & \textbf{.022\tiny{$\pm$.000}} \\
    {\method{} (using HiGHS)} & .902\tiny{$\pm$.006} & .009\tiny{$\pm$.002} & \textbf{.405\tiny{$\pm$.013}} & \textbf{.043\tiny{$\pm$.004}} & .807\tiny{$\pm$.004} & \textbf{.022\tiny{$\pm$.000}}\\
    \bottomrule
  \end{tabular}
  \label{tbl:optimization_solver_dp}
  \vspace{-0.1cm}
\end{table}

\section{More Related Work}
\label{appendix:related_work}

Continuing from Sec.~\ref{sec:relatedwork}, we discuss more related work.

Class-incremental learning is a challenging type of continual learning where a model continuously learns new tasks, each composed of new disjoint classes, and the goal is to minimize catastrophic forgetting\,\citep{DBLP:journals/ijon/MaiLJQKS22, DBLP:journals/pami/MasanaLTMBW23}. Data replay techniques\,\citep{DBLP:conf/nips/Lopez-PazR17, DBLP:conf/cvpr/RebuffiKSL17, DBLP:journals/corr/abs-1902-10486} store a small portion of previous data in a buffer to utilize for training and are widely used with other techniques\,\citep{DBLP:journals/corr/abs-2302-03648} including knowledge distillation\,\citep{DBLP:conf/cvpr/RebuffiKSL17, DBLP:conf/nips/BuzzegaBPAC20}, model rectification\,\citep{DBLP:conf/cvpr/WuCWYLGF19, DBLP:conf/cvpr/ZhaoXGZX20}, and dynamic networks\,\citep{DBLP:conf/cvpr/YanX021, DBLP:conf/eccv/WangZYZ22, DBLP:conf/iclr/0001WYZ23}. Simple buffer sample selection methods such as random or herding-based approaches\,\citep{DBLP:conf/cvpr/RebuffiKSL17} are also commonly used as well. There are also gradient-based sample selection techniques like GSS\,\citep{DBLP:conf/nips/AljundiLGB19} and OCS\,\citep{DBLP:conf/iclr/YoonMYH22} that manage buffer data to have samples with diverse and representative gradient vectors. The core contribution of \method{} is its capability to separately address different groups through sample weighting, and all baseline works do not consider fairness and simply assume that the entire incoming data is used for model training, which may result in unfair forgetting, as we show in our experiments.

Model fairness research mitigates bias by ensuring that a model's performance is equitable across different sensitive groups, thereby preventing discrimination based on race, gender, age, or other sensitive attributes\,\citep{DBLP:journals/csur/MehrabiMSLG21}. Existing model fairness techniques can be categorized as pre-processing\,\citep{DBLP:journals/kais/KamiranC11, DBLP:conf/kdd/FeldmanFMSV15, DBLP:conf/nips/CalmonWVRV17, DBLP:conf/aistats/JiangN20}, in-processing\,\citep{DBLP:conf/icml/AgarwalBD0W18, DBLP:conf/aies/ZhangLM18, DBLP:conf/alt/CotterJS19, DBLP:conf/icml/RohLWS20}, and post-processing\,\citep{DBLP:conf/nips/HardtPNS16, DBLP:conf/nips/PleissRWKW17, DBLP:conf/nips/ChzhenDHOP19}. In addition, there are other techniques that assign adaptive weights for samples to improve fairness\,\citep{chai2022fairness, jung2023re}. However, most of these techniques assume that the training data is given all at once, which may not be realistic. There are techniques for fairness-aware active learning\,\citep{anahideh2022fair, pang2024fairness, tae2024falcon}, in which the training data evolves with the acquisition of samples. However, these techniques store all labeled data and use them for training, which is impractical in continual learning settings.

A recent study addresses model fairness in class-incremental learning, where models may suffer from imbalanced forgetting of previously learned sensitive groups including classes, leading to unfairness across different groups\,\citep{he2024gradient, xu2024defying, DBLP:conf/aaai/ChowdhuryC23, DBLP:conf/nips/TruongNRL23} (see more details in Sec.~\ref{appendix:related_work}). 
A recent study\,\citep{he2024gradient} addresses the dual imbalance problem involving both inter-task and intra-task imbalance by reweighting gradients. However, the bias arises not only from data imbalance, but also from inherent or acquired characteristics of data itself\,\citep{DBLP:journals/csur/MehrabiMSLG21, angwin2022machine}. 
CLAD\,\citep{xu2024defying} first discovers imbalanced forgetting between classes caused by conflicts in representation and proposes a class-aware disentanglement technique to improve accuracy. Among the fairness-aware techniques, 
FaIRL\,\citep{DBLP:conf/aaai/ChowdhuryC23} supports group fairness metrics like demographic parity for continual learning, but proposes a representation learning method that does not directly optimize the given fairness measure and thus has limitations in improving fairness as we show in experiments. 
FairCL\,\citep{DBLP:conf/nips/TruongNRL23} also addresses fairness in a continual learning setup, but only focuses on resolving the imbalanced class distribution based on the number of pixels of each class in an image for semantic segmentation tasks. 
In comparison, we can mitigate bias from both data imbalance and intrinsic or acquired characteristics within the data with satisfying multiple notions of group fairness for sensitive groups including classes.

\section{Future Work}
\label{appendix:future_work}
Continuing from Sec.~\ref{sec:conclusion}, we discuss the future work in more detail.

\subsection{Generalization to Multiple sensitive attributes}
\label{appendix:multiple_sen}
\method{} can be extended to tasks involving multiple sensitive attributes by defining a sensitive group as a combination of sensitive attributes. For instance, recall the loss for EO in a single sensitive attribute is $\frac{1}{|\mathbb{Y}| |\mathbb{Z}|} \sum_{y \in \mathbb{Y}, z \in \mathbb{Z}} |\tilde{\ell}(f_{\theta}, G_{y, z}) - \tilde{\ell}(f_{\theta}, G_{y})|$. This definition can be extended to the case of multiple sensitive attributes as $\frac{1}{|\mathbb{Y}| |\mathbb{Z}_1||\mathbb{Z}_2|} \sum_{y \in \mathbb{Y}, z_1 \in \mathbb{Z}_1, z_2 \in \mathbb{Z}_2} |\tilde{\ell}(f_{\theta}, G_{y, z_1, z_2}) - \tilde{\ell}(f_{\theta}, G_{y})|$. The new definition for multiple sensitive attributes allows the overall optimization problem to optimize both sensitive attributes simultaneously.
The design above can also help prevent `fairness gerrymandering'~\citep{kearns2018preventing}, a situation where fairness is superficially achieved across multiple groups, but specific individuals or subgroups within those groups are systematically disadvantaged. This is achieved by minimizing all combinations of subgroups, thereby disrupting the potential for unfair prediction based on certain attribute combinations. However, having multiple loss functions may increase the complexity of optimization, and a more advanced loss function may need to be designed for multiple sensitive attributes.

\subsection{Further Reduction of Computational Overhead}
\label{appendix:reduce_overhead}
The scalability of \method{} can be further improved for large datasets, particularly when the quadratic computational complexity becomes a bottleneck. Continuing from Sec.~\ref{subsec:runtime}, clustering current data and assigning cluster-wise weight reduce LP variables, lowering computational complexity. Mini-batch K-means~\citep{sculley2010web} is efficient for extremely large datasets where the complexity is O(BKDI) (B: batch size, K: number of clusters, D: data dimension, I: number of iterations).

Beyond clustering, another approach is reducing the complexity of LP using a first-order method, which results in a complexity of O(ND) (N: number of current task samples, D: data dimension). CPLEX solves LP based on the simplex method~\citep{DBLP:journals/ior/Bixby02}, which is accurate, but computationally intensive due to Hessian calculations. First-order methods, by contrast, rely only on gradient information and thus offer better scalability for large datasets~\citep{beck2017first}. Moreover, these two approaches---clustering and first-order optimization---are orthogonal and can be combined to further improve \method{}'s scalability on extremely large datasets.

\subsection{Possible Strategies Under Data Drift}
\label{appendix:data_drift}
Data drift can be categorized into two types: virtual drift and concept drift~\citep{gama2014survey}.
Virtual drift refers to changes in the data distribution that do not alter the decision boundary. In such cases, model performance may slightly degrade, but \method{} can eventually adapt to the new distribution while preserving previously learned knowledge. This adaptation is possible because \method{} adjusts weights based on gradients derived from both the data and the model, allowing it to maintain flexibility and resilience against mild distributional changes.
On the other hand, concept drift occurs when the decision boundary shifts. In class-incremental learning, such shifts can be interpreted as the noise introduced into the true label ($y$) prior to the concept drift, thereby complicating the fairness maintenance. To address this, combining \method{} with robust training methods that are effective under noisy label conditions could be a potential direction. This integration would enhance both fairness and robustness, ensuring the model's performance despite shifts in the decision boundary.}{}

\end{document}